
\documentclass[twoside,11pt]{article}
\usepackage{jmlr2e}

\usepackage{epsf}
\usepackage{epsfig}
\usepackage{amsmath}
\usepackage{subfigure}
\usepackage{amssymb}
\usepackage{algorithm}
\usepackage{multirow}
\usepackage{multicol}
\usepackage{graphicx}
\usepackage{algorithm}
\usepackage{algorithmic}

\usepackage[pagebackref=true,breaklinks=true,letterpaper=true,colorlinks,bookmarks=yes]{hyperref}

\def\A{{\bf A}}
\def\a{{\bf a}}
\def\B{{\bf B}}
\def\bb{{\bf b}}
\def\C{{\bf C}}

\def\D{{\bf D}}
\def\d{{\bf d}}

\def\Hes{{\bf H}}
\def\I{{\bf I}}
\def\K{{\bf K}}
\def\k{{\bf k}}
\def\LL{{\bf L}}
\def\M{{\bf M}}

\def\PP{{\bf P}}
\def\Q{{\bf Q}}

\def\R{{\bf R}}
\def\S{{\bf S}}

\def\T{{\bf T}}
\def\U{{\bf U}}
\def\u{{\bf u}}
\def\V{{\bf V}}

\def\W{{\bf W}}

\def\X{{\bf X}}
\def\x{{\bf x}}
\def\Y{{\bf Y}}
\def\y{{\bf y}}
\def\Z{{\bf Z}}

\def\0{{\bf 0}}
\def\1{{\bf 1}}


\def\JM{{\mathcal J}}

\def\OM{{\mathcal O}}
\def\PM{{\mathcal P}}

\def\ZM{{\mathcal Z}}
\def\RB{{\mathbb R}}
\def\RBmn{{\RB^{m\times n}}}
\def\EB{{\mathbb E}}
\def\PB{{\mathbb P}}

\def\Si{\mbox{\boldmath$\Sigma$\unboldmath}}
\def\si{\mbox{\boldmath$\sigma$\unboldmath}}

\def\Lam{\mbox{\boldmath$\Lambda$\unboldmath}}

\def\De{\mbox{\boldmath$\Delta$\unboldmath}}
\def\Ome{\mbox{\boldmath$\Omega$\unboldmath}}

\def\argmin{\mathop{\rm argmin}}

\def\nnz{\mathrm{nnz}}

\def\tr{\mathrm{tr}}
\def\rk{\mathrm{rank}}

\def\st{\mathsf{s.t.}}
\def\vect{\mathsf{vec}}

\def\nystrom{{Nystr\"{o}m} }

\def\Kens{{\tilde{\K}_{t,c}^{\textrm{ens}}}}

\def\Kss{{\tilde{\K}_{c}^{\textrm{ss}}}}
\def\Kiss{{\tilde{\K}_{c}^{\textrm{iss}}}}


\jmlrheading{17}{2016}{1-49}{5/14; Revised 1/16}{5/16}{Shusen Wang, Luo Luo, and Zhihua Zhang}
\ShortHeadings{SPSD Matrix Approximation vis Column Selection}{Wang, Luo, and Zhang}
\firstpageno{1}

\begin{document}
\title{SPSD Matrix Approximation vis Column Selection: Theories, Algorithms, and Extensions}

\author{\name  Shusen Wang  \email wssatzju@gmail.com \\
        \addr Department of Statistics \\
                University of California at Berkeley \\
                Berkeley, CA 94720 \\
        \AND
        \name Luo Luo \email ricky@sjtu.edu.cn \\
        \name Zhihua Zhang\thanks{Corresponding author.} \email zhihua@sjtu.edu.cn \\
        \addr Department of Computer Science and Engineering \\
                Shanghai Jiao Tong University \\
                800 Dong Chuan Road, Shanghai, China 200240
        }

\editor{Inderjit Dhillon}

\maketitle


\begin{abstract}%
Symmetric positive semidefinite (SPSD) matrix approximation is an important problem with applications in kernel methods. However, existing SPSD matrix approximation methods such as the Nystr\"om method only have weak error bounds. In this paper we conduct in-depth studies of an SPSD matrix approximation model and establish strong relative-error bounds. We call it the prototype model for it has more efficient and effective extensions, and some of its extensions have high scalability. Though the prototype model itself is not suitable for large-scale data, it is still useful to study its properties, on which the analysis of its extensions relies.

This paper offers novel theoretical analysis, efficient algorithms, and a highly accurate extension. First, we establish a lower error bound for the prototype model and  improve the error bound of an existing column selection algorithm to match the lower bound. In this way, we obtain the first optimal column selection algorithm for the prototype model. We also prove that the prototype model is exact under certain conditions. Second, we develop a simple column selection algorithm with a provable error bound. Third, we propose a so-called spectral shifting model to make the approximation more accurate when the eigenvalues of the matrix decay slowly, and the improvement is theoretically quantified. The spectral shifting method can also be applied to improve other SPSD matrix approximation models.
\end{abstract}

\begin{keywords}
Matrix approximation, matrix factorization, kernel methods, the Nystr\"om method, spectral shifting
\end{keywords}


\section{Introduction} \label{sec:introduction}

The kernel methods are important tools in machine learning, computer vision, and data mining \citep{scholkopf2002learning,ShaweTaylorBook:2004}.
However, for two reasons, most kernel methods have scalability difficulties.
First, given $n$ data points of $d$ dimension,
generally we need $\OM (n^2 d)$ time to form the $n\times n$ kernel matrix $\K$.
Second, most kernel methods require expensive matrix computations.
For example, Gaussian process regression and classification
require inverting some $n\times n$ matrices which costs $\OM(n^3)$ time and $\OM(n^2)$ memory;
kernel PCA and spectral clustering perform the truncated eigenvalue decomposition
which takes $\tilde\OM(n^2 k)$ time\footnote{The $\tilde\OM$ notation hides
the logarithm terms and the data-dependent spectral gap parameter.}
and $\OM(n^2)$ memory, where $k$ is the target rank of the decomposition.

Besides high time complexities, these matrix operations also have high memory cost
and are difficult to implement in distributed computing facilities.
The matrix decomposition and  (pseudo) inverse operations are generally solved by numerical iterative algorithms,
which go many passes through the matrix until convergence.
Thus, the whole matrix had better been placed in main memory,
otherwise in each iteration there would be a swap between memory and disk, which incurs high I/O costs and can be more expensive than CPU time.
Unless the algorithm is pass-efficient, that is, it goes constant passes through the data matrix,
the main memory should be at least the size of the data matrix.
For two reasons, such iterative algorithms are expensive even if they are  performed in distributed computing facilities such as MapReduce.
First, the memory cost is too expensive for each individual machine to stand.
Second, communication and synchronization must be performed in each iteration of the numerical algorithms,
so the cost of each iteration is high.

Many matrix approximation methods have been proposed to make kernel machines scalable.
Among them the Nystr\"om method \citep{nystrom1930praktische,williams2001using} and random features \citep{rahimi2008weighted}
are the most efficient and widely applied.
However, only weak results are known \citep{drineas2005nystrom,gittens2013revisiting,lopezpaz2014}.
\citet{yang2012nystrom} showed that the Nystr\"om method is likely a better choice than random features, both theoretically and empirically.
However, even the Nystr\"om method cannot attain high accuracy.
The lower bound in \citep{wang2013improving} indicates that the Nystr\"om method costs at least $\Omega (n^2 k/\epsilon)$ time
and $\Omega (n^{1.5} k^{0.5} \epsilon^{-0.5})$ memory to attain $1+\epsilon$ Frobenius norm error bound
relative to the best rank $k$ approximation.

In this paper we investigate a more accurate low-rank approximation model proposed by \cite{halko2011ramdom,wang2013improving},
which we refer to as the prototype model.
For any symmetric positive semidefinite (SPSD) matrix $\K \in \RB^{n\times n}$,
the prototype model first draws a random matrix $\PP \in \RB^{n\times c}$ and forms a sketch $\C = \K \PP$,
and then computes the intersection matrix
\begin{eqnarray} \label{eq:intersection}
\U^\star
\; = \; \argmin_{\U} \| \K - \C \U \C^T \|_F^2
\; = \; \C^\dag \K (\C^\dag)^T
\; \in \; \RB^{c\times c}.
\end{eqnarray}
Finally, the model approximates $\K$ by $\C \U^\star  \C^T$.
With this low-rank approximation at hand,
it takes time $\OM (n c^2)$ to compute the approximate matrix inversion and eigenvalue decomposition.
In the following we discuss how to form $\C$ and compute $\U^\star$.

{\bf Column Selection vs.\ Random Projection.}
Although the sketch $\C = \K \PP$ can be formed by either random projection or column selection,
when applied to the kernel methods, column selection is preferable to random projection.
As aforementioned, suppose we are given $n$ data points of $d$ dimension.
It takes time $\OM (n^2 d)$ to compute the whole of the kernel matrix $\K$,
which is prohibitive when $n$ is in million scale.
Unfortunately, whatever existing random projection technique is employed to form the sketch $\C$,
every entry of $\K$ must be visited.
In contrast, by applying data independent column selection algorithms such as uniform sampling,
we can form $\C$ by observing only $\OM (n c)$ entries of $\K$.
At present all the existing column selection algorithms, including our proposed uniform+adaptive$^2$ algorithm,
cannot avoid observing the whole of $\K$ while keeping constant-factor bound.
Nevertheless, we conjecture that our uniform+adaptive$^2$ algorithm can be adapted to satisfy these two properties simultaneously
(see Section~\ref{sec:open_problems} for discussions in detail).

{\bf The Intersection Matrix.}
With the sketch $\C$ at hand, it remains to compute the intersection matrix.
The most straightforward way is \eqref{eq:intersection}, which minimizes the Frobenius norm approximation error.
However, this approach has two drawbacks.
First, it again requires the full observation of $\K$.
Second, the matrix product $\C^\dag \K$ costs $\OM (n^2 c)$ time.
The prototype model is therefore time-inefficient.
Fortunately, \cite{wang2015towards} recently overcame the two drawbacks by solving \eqref{eq:intersection} approximately rather than optimally.
\cite{wang2015towards} obtained the approximate intersection matrix $\tilde\U$ in $\OM (n c^3 /\epsilon)$ time while keeping
\begin{eqnarray} \label{eq:wzz2015}
 \| \K - \C \tilde\U \C^T \|_F^2
\; \leq \; (1+\epsilon) \min_{\U} \| \K - \C \U \C^T \|_F^2
\end{eqnarray}
with high probability.

With the more efficient solution,
why is it useful to study the exact solution to the prototype model \eqref{eq:intersection}?
On the one hand, from \eqref{eq:wzz2015} we can see that the quality of the approximation depends on the prototype model,
thus improvement of the prototype model directly applies to the more efficient model.
On the other hand, for medium-scale problems where $\K$ does not fit in memory,
the prototype model can produce high quality approximation with reasonable time expense.
The experiment on kernel PCA in Section~\ref{sec:experiment_kpca} shows that the prototype model is far more accurate than the Nystr\"om method.
For the above two reasons, we believe the study of the prototype model is useful.


\subsection{Contributions}

Our contributions mainly include three aspects: theoretical analysis, column selection algorithms,
and extensions.
They are summarized as follows.


\subsubsection{Contributions: Theories}

\cite{kumar2009sampling,talwalkar2010matrix} previously showed that
the \nystrom method is exact when the original kernel matrix is low-rank.
In Section~\ref{sec:exact} we show that the prototype model exactly recovers the original SPSD matrix under the same conditions.

The prototype model with the near-optimal+adaptive column sampling algorithm satisfies $1{+}\epsilon$ relative-error bound when $c = \OM (k/\epsilon^2)$ \citep{wang2013improving}.
It was unknown whether this upper bound is optimal.
In Section~\ref{sec:lower_bounds} we establish a lower error bound for the prototype model.
We show that at least $2 k/ \epsilon$ columns must be chosen to attain $1+\epsilon$ bound.
In Theorem~\ref{thm:near_opt} we improve the upper error bound of the near-optimal+adaptive algorithm to $\OM(k/\epsilon)$,
which matches the lower bound up to a constant factor.


\subsubsection{Contributions: Algorithms}

In Section~\ref{sec:algorithms} we devise a simple column selection algorithm which we call {\it the uniform+adaptive$^2$ algorithm}.
The uniform+adaptive$^2$ algorithm is more efficiently and more easily implemented than the near-optimal+adaptive algorithm of \cite{wang2013improving},
yet its error bound is comparable with the near-optimal+adaptive algorithm.
It is worth  mentioning that our uniform+adaptive${}^2$ algorithm is the adaptive-full algorithm of \cite[Figure 3,][]{kumar2012sampling}
with two rounds of adaptive sampling,
and thus our results theoretically justify the adaptive-full algorithm.


\subsubsection{Contributions: Extension} \label{sec:introduction_extension}

When the spectrum of a matrix decays slowly (that is, the $c+1$ to $n$ largest eigenvalues are not small enough),
all of the low-rank approximations are far from the original kernel matrix.
Inspired by \cite{zhang2014mra},
we propose a new method called {\it spectral shifting (SS)}
to make the approximation still effective even when the spectrum decays slowly.
Unlike the low-rank approximation $\K \approx \C \U \C^T$,
the spectral shifting model approximates $\K$ by $\K \approx \bar{\C} \U^{\textrm{ss}} \bar{\C}^T + \delta^{\textrm{ss}} \I_n$,
where $\C, \bar{\C} \in \RB^{n\times c}$, $\U, {\U^{\textrm{ss}}} \in \RB^{c\times c}$, and $\delta^{\textrm{ss}} \geq 0$.
When the spectrum of $\K$ decays slowly, the term $\delta^{\textrm{ss}} \I_n$ helps to improve the approximation accuracy.
In Section~\ref{sec:sspbs} we describe the spectral shifting method in detail.

We highlight that the spectral shifting method can naturally apply to improve other kernel approximation models such as
the memory efficient kernel approximation (MEKA) model \citep{si2014memory}.
Experiments demonstrate that MEKA can be significantly improved by spectral shifting.


\subsection{Paper Organization}

The remainder of this paper is organized as follows.
In Section~\ref{sec:notation} we define the notation.
In Section~\ref{sec:matrix_sketching_model} we introduce the motivations of SPSD matrix approximation and define the SPSD matrix approximation models.
Then we present our work---theories, algorithms, and extension---respectively in Sections~\ref{sec:theory}, \ref{sec:algorithms}, and \ref{sec:sspbs}.
In Section \ref{sec:algorithm:experiments} we conduct experiments to compare among the column sampling algorithms.
In Section~\ref{sec:sspbs:experiments} we empirically evaluate the proposed spectral shifting model.
All the proofs  are deferred to the appendix.


\section{Notation} \label{sec:notation}

Let $[n]=\{1, \ldots, n\}$, and $\I_n$ be the $n{\times}n$ identity matrix.
For an $m {\times} n$ matrix $\A=[a_{i j}]$, we let $\a_{i:}$ be its $i$-th row,
$\a_{:i}$ be its $i$-th column,
and  use $\a_i$ to denote either row or column when there is no ambiguity.
Let $\A_1 \oplus \A_2 \oplus \cdots \oplus \A_q$ be the block diagonal matrix whose the $i$-th diagonal block is $\A_i$.
Let $\|\A\|_F = (\sum_{i,j} a_{i j}^2)^{1/2}$ be the Frobenius norm and
$\|\A\|_2 = \max_{\x\neq \0} \|\A \x\|_2 / \|\x\|_2$ be the spectral norm.

Letting $\rho=\rk(\A)$, we write the condensed singular value decomposition (SVD) of $\A$ as $\A = \U_\A \Si_\A \V_\A^T$,
where the $(i,i)$-th entry of $\Si_\A \in \RB^{\rho \times \rho}$ is the $i$-th largest singular value of $\A$ (denoted  $\sigma_i(\A)$).
Unless otherwise specified, in this paper ``SVD'' means the condensed SVD.
We also let $\U_{\A,k}$ and $\V_{\A,k}$ be the first $k$ ($<\rho$) columns of $\U_\A$ and $\V_\A$, respectively,
and $\Si_{\A,k}$ be the $k\times k$ top sub-block of $\Si_\A$.
Then the $m\times n$ matrix $\A_k=\U_{\A,k} \Si_{\A,k} \V_{\A,k}^T$ is the ``closest'' rank-$k$ approximation to $\A$.

If $\A$ is normal, we let $\A = \U_\A \Lam_\A \U_\A^T$ be the eigenvalue decomposition,
and denote the $i$-th diagonal entry of $\Lam_\A$ by $\lambda_i (\A)$,
where $|\lambda_1 (\A)| \geq \cdots \geq |\lambda_n (\A)|$.
When $\A$ is SPSD, the SVD and the eigenvalue decomposition of $\A$ are identical.

Based on SVD, the {\it matrix coherence} of the columns of $\A$ relative to the best rank-$k$ approximation
is defined as $\mu_k = \frac{n}{k} \max_{j} \big\| (\V_{\A,k})_{j:} \big\|_2^2$.
Let $\A^\dag = \V_{\A} \Si_\A^{-1} \U_\A^T$ be the {\it Moore-Penrose inverse} of $\A$.
When $\A$ is nonsingular, the Moore-Penrose inverse is identical to the matrix inverse.
Given another ${n\times c}$ matrix $\C$,
we define $\PM_\C (\A) = \C \C^\dag \A$
as the projection of $\A$ onto the column space of $\C$
and $\PM_{\C,k} (\A) = \C \cdot \argmin_{\rk(\X)\leq k} \|\A - \C \X\|_F$
as the rank restricted projection.
It is obvious that $\|\A - \PM_\C (\A) \|_F \leq \|\A - \PM_{\C,k} ( \A)\|_F$.


\section{SPSD Matrix Approximation Models} \label{sec:matrix_sketching_model}

In Section \ref{sec:motivation} we provide motivating examples to show why SPSD matrix approximation is useful.
In Section~\ref{sec:models} we formally describe  low-rank approximation models.
In Section~\ref{sec:models_ss} we describe the spectral shifting model.
In Table~\ref{tab:models} we compare the matrix approximation models defined in Section~\ref{sec:models} and Section~\ref{sec:models_ss}.

\begin{table}[t]\setlength{\tabcolsep}{0.3pt}
\caption{Comparisons among the matrix approximation models in Section~\ref{sec:models} and Section~\ref{sec:models_ss}.
        Here ``\#Entries'' denotes the number of entries of $\K$ required to observe.
        The costs of column selection is not counted;
        they are listed separately in Table~\ref{tab:algorithms}.}
\label{tab:models}
\begin{center}
\begin{tabular}{p{80pt} p{250pt} p{250pt} p{250pt} p{250pt}}
\hline
        &	\multicolumn{1}{c}{Time~~~}
        & \multicolumn{1}{c}{~~~Memory~~~}
        & \multicolumn{1}{c}{~~~\#Entries~~~}
        & \multicolumn{1}{c}{~~~Theory~~~}  \\
\hline
Prototype   & \multicolumn{1}{c}{$\OM( n^2 c )$~~~}
            & \multicolumn{1}{c}{~~~$\OM( n c)$~~~ }
            & \multicolumn{1}{c}{~~~$ n^2 $~~~ }
            & \multicolumn{1}{c}{~~~$1+\epsilon$ relative-error }  \\
Faster      & \multicolumn{1}{c}{$\OM( n c^3 /\epsilon )$~~~}
            & \multicolumn{1}{c}{~~~$\OM( n c)$~~~ }
            & \multicolumn{1}{c}{~~~$ n c^2 /\epsilon $~~~ }
            & \multicolumn{1}{c}{~~~$1+\epsilon$ relative-error}  \\
Nystr\"om   & \multicolumn{1}{c}{$\OM( n c^2 )$~~~}
            & \multicolumn{1}{c}{~~~$\OM( n c)$~~~ }
            & \multicolumn{1}{c}{~~~$ n c $~~~ }
            & \multicolumn{1}{c}{~~~weak }  \\
SS          & \multicolumn{3}{c}{the same to ``prototype''}
            & \multicolumn{1}{c}{~~~stronger than ``prototype'' }  \\
Faster SS   & \multicolumn{3}{c}{the same to ``faster''}
            & \multicolumn{1}{c}{unknown}  \\
\hline
\end{tabular}
\end{center}
\end{table}


\subsection{Motivations} \label{sec:motivation}

Let $\K$ be an $n\times n$ kernel matrix.
Many kernel methods require the eigenvalue decomposition of $\K$ or solving certain linear systems involving $\K$.
\begin{itemize}
\item
    Spectral clustering,
    kernel PCA, and manifold learning
    need to perform the rank $k$ eigenvalue decomposition which costs $\tilde\OM (n^2 k)$ time and $\OM (n^2)$ memory.
\item
    Gaussian process regression and
 classification
    both require solving this kind of linear systems:
    \begin{equation} \label{eq:linear_system}
        (\K + \alpha \I_n) \bb = \y ,
    \end{equation}
    whose solution is $\bb^\star = (\K + \alpha \I_n)^{-1} \y$.
    Here $\alpha$ is a constant.
    This costs $\OM (n^3)$ time and $\OM (n^2)$ memory.
\end{itemize}
Fortunately, if we can efficiently find an approximation in the form
\[
\tilde\K \; =\; \LL \LL^T + \delta \I_n
\; \approx \; \K,
\]
where $\delta \geq 0$ and $\LL \in \RB^{n\times l}$ with $l \ll n$,
then the eigenvalue decomposition and linear systems can be approximately solved in $\OM (n l^2)$ time and $\OM (n l)$ space in the following way.
\begin{itemize}
\item
{Approximate Eigenvalue Decomposition.}
Let $\LL = \U \Si \V^T$ be the SVD and $\U_\perp$ be the orthogonal complement of $\U$.
Then the full eigenvalue decomposition of $\tilde\K$ is
\[
\tilde\K \; = \; \U (\Si^2 + \delta \I_l) \U^T + \U_\perp (\delta \I_{n-l}) \U_\perp^T .
\]
\item
{Approximately Solving the Linear Systems.}
Here we use a more general form: $\tilde\K = \LL \LL^T + \De$,
where $\De$ is a diagonal matrix with positive diagonal entries.
Then
\begin{eqnarray*}
\bb^\star
& = & (\K + \alpha \I_n)^{-1} \y
\; \approx \; (\LL \LL^T + \De + \alpha \I_n)^{-1} \y
\; = \; (\LL \LL^T + \De')^{-1} \y \\
& = & {\De'}^{-1}\y - \underbrace{{\De'}^{-1} \LL}_{n\times l} \underbrace{( \I_l + \LL^T {\De'}^{-1} \LL )^{-1}}_{l\times l} \underbrace{\LL^T {\De'}^{-1}}_{l\times n} \y.
\end{eqnarray*}%
Here the second equality is obtained by letting $\De' = \De + \alpha  \I_n$,
and the third equality follows by the Sherman-Morrison-Woodbury matrix identity.
\end{itemize}
The remaining problem is to find such matrix approximation efficiently while keeping $\tilde\K$ close to $\K$.

\begin{algorithm}[tb]
   \caption{Computing the Prototype Model in $\OM (nc + nd)$ Memory.}
   \label{alg:memory_cost}
\algsetup{indent=2em}
\begin{algorithmic}[1]
   \STATE {\bf Input:} data points $\x_1 , \cdots , \x_n \in \RB^d$, kernel function $\kappa (\cdot , \cdot )$.
   \STATE Compute $\C$ and $\C^\dag$; {\it // In $\OM (nc + nd)$ memory and $\OM (ncd+nc^2)$ time}
   \STATE Form a $c\times n$ all-zero matrix $\D$; {\it // In $\OM (nc )$ memory and $\OM (nc)$ time}
   \FOR{$j = 1$ to $n$ }
   \STATE Form the $j$-th column of $\K$ by $\k_j = [\kappa (\x_1, \x_j) , \cdots, \kappa (\x_n, \x_j)]^T$;
   \STATE Compute the $j$-th column of $\D$ by $\d_j = \C^\dag \k_j$;
   \STATE Delete $\k_j$ from memory;
   \ENDFOR
   \STATE {\it // Now the matrix $\D$ is $\C^\dag \K$}
   \STATE {\it // The loop totaly costs $\OM (n c + nd)$ memory and $\OM (n^2 d + n^2 c)$ time}
   \STATE Compute $\U = \D (\C^\dag)^T$; {\it // In $\OM (n c)$ memory and $\OM (nc^2)$ time}
   \RETURN $\C$ and $\U$ ($=\C^\dag \K (\C^\dag)^T$).
\end{algorithmic}
\end{algorithm}


\subsection{Low-Rank Matrix Approximation Models} \label{sec:models}

We first recall the prototype model introduced previously and then discuss its approximate solutions.
In fact, the famous Nystr\"om method \citep{nystrom1930praktische,williams2001using} is an approximation to the prototype model.
Throughout this paper, we let $\PP \in \RB^{n\times c}$ be random projection or column selection matrix and
$\C = \K \PP$ be a sketch of $\K$.
The only difference among the discussed  models is  their intersection matrices.

{\bf The Prototype Model.}
Suppose we have $\C \in \RB^{n\times c}$ at hand.
It remains to find an intersection matrix $\U \in \RB^{c\times c}$.
Since our objective is to make $\C \U \C^T$ close to $\K$,
it is straightforward to optimize their difference.
The prototype model computes the intersection matrix by
\begin{eqnarray} \label{eq:intersection_proto}
\U^\star
\; = \; \argmin_{\U} \| \K - \C \U \C^T \|_F^2
\; = \; \C^\dag \K (\C^T)^\dag
\; \in \; \RB^{c\times c}.
\end{eqnarray}
With $\C$ at hand, the prototype model still needs one pass through the data,
and it costs $\OM (n^2 c)$  time.
When applied to kernel methods, the memory cost is $\OM (n c + nd)$ (see Algorithm~\ref{alg:memory_cost}),
where $n$ is the number of data points and $d$ is the dimension.
The prototype model has the same time complexity as the exact rank $k$ eigenvalue decomposition,
but it is more memory-efficient and pass-efficient.

\citet{halko2011ramdom} showed that when $\PP$ is a standard Gaussian matrix and $c = \OM (k/\epsilon)$,
the prototype model attains $2+\epsilon$ error relative to $\|\K - \K_k\|_F^2$.
\cite{wang2013improving} showed that when $\C$ contains $c = \OM (k/\epsilon^2)$ columns selected by the near-optimal+adaptive sampling algorithm,
the prototype model attains  $1+\epsilon$ relative error.
In Section~\ref{sec:algorithms:improved} we improve the result to $c = \OM (k/\epsilon)$, which is near optimal.

{\bf Faster SPSD matrix Approximation Model.}
\cite{wang2015towards} noticed that \eqref{eq:intersection_proto} is a strongly over-determined linear system,
and thus proposed to solve \eqref{eq:intersection_proto} by randomized approximations.
They proposed to sample $s = \OM(c \sqrt{n/\epsilon}) \ll n$ columns according to the row leverage scores of $\C$, which costs $\OM (n c^2)$ time.
Let $\S \in \RB^{n\times s}$ be the corresponding column selection matrix.
They proposed the faster SPSD matrix approximation model which computes the intersection matrix by
\begin{eqnarray} \label{eq:intersection_faster}
\tilde\U
\; = \; \argmin_{\U} \big\| \S^T ( \K - \C \U \C^T ) \S \big\|_F^2
\; = \; \underbrace{(\S^T\C)^\dag}_{c\times s} \underbrace{(\S^T \K \S)}_{s\times s} \underbrace{(\C^T \S)^\dag }_{s\times c}
\; \in \; \RB^{c\times c}.
\end{eqnarray}
The faster model visits only $s^2 = \OM (n c^2 /\epsilon) = o( n^2)$ entries of $\K$,
and the time  complexity is $\OM (n c^2 + s^2 c) = \OM (n c^3 /\epsilon)$.
The following error bound is satisfied with  high  probability:
\[
 \| \K - \C \tilde\U \C^T \|_F^2
\; \leq \; (1+\epsilon) \: \min_{\U} \| \K - \C \U \C^T \|_F^2.
\]
This implies that if $\C$ is such a high quality sketch that the prototype model satisfies $1+\epsilon$ relative-error bound,
then the faster SPSD matrix approximation model also satisfies $1+\epsilon$ relative-error bound.

{\bf The Nystr\"om Method.}
The Nystr\"om method is a special case of the faster SPSD matrix  approximation model,
and therefore it is also an approximate solution  to \eqref{eq:intersection_proto}.
If we let the two column selection matrices  $\S$ and $\PP$ be the same,
then \eqref{eq:intersection_faster}  becomes
\begin{eqnarray*}
\tilde\U
\; = \; \argmin_{\U} \big\| \PP^T ( \K - \C \U \C^T ) \PP \big\|_F^2
\; = \; \underbrace{(\PP^T\K \PP)^\dag}_{c\times c} \underbrace{(\PP^T \K \PP)}_{c\times c} \underbrace{(\PP^T \K \PP)^\dag }_{c\times c}
\; = \; \underbrace{(\PP^T \K \PP)^\dag }_{c\times c}.
\end{eqnarray*}
The matrix $(\PP^T \K \PP)^\dag$ is exactly the intersection matrix of the Nystr\"om method.
The Nystr\"om method costs only $\OM (nc^2)$ time, and it can be applied to million-scale problems \citep{talwalkar2013large}.
However, its accuracy is low.
Much work in the literature has analyzed the error bound of the Nystr\"om method,
but only weak results are known \citep{drineas2005nystrom,Shawe-taylor05onthe,kumar2012sampling,jin2012improved,gittens2013revisiting}.
\citet{wang2013improving} even showed that the Nystr\"om method cannot attain $1+\epsilon$ relative-error bound unless $c \geq \Omega(\sqrt{n k/\epsilon})$.
Equivalently, to attain $1+\epsilon$ bound, the Nystr\"om would take $\Omega (n^2 k /\epsilon)$ time and $\Omega (n^{1.5} k^{0.5} \epsilon^{-0.5})$ memory.


\subsection{Spectral Shifting Models} \label{sec:models_ss}

We propose a more accurate SPSD matrix approximation method called the spectral shifting model.
Here we briefly describe the model and its fast solution.
The theoretical analysis is left to Section~\ref{sec:sspbs}.

{\bf The Spectral Shifting (SS) Model.}
As before, we let $\PP \in \RB^{n\times c}$ be a column selection matrix and $\bar\C = \bar\K \PP$,
where $\bar\K = \K$ or $\bar\K = \K - \bar\delta \I_n$ for some parameter $\bar\delta \geq 0$.
We approximate $\K$ by $\bar\C \U^{\textrm{ss}} \bar\C^T + \delta^{\textrm{ss}}  \I_n$, where
\begin{equation}  \label{eq:def_ss_nystrom0}
\big(\U^{\textrm{ss}} , \delta^{\textrm{ss}} \big)
\; = \;
\argmin_{\U, \delta} \big\| \K - \bar{\C} \U \bar{\C}^T - \delta \I_n \big\|_F^2.
\end{equation}
This optimization problem has closed-form solution (see Theorem~\ref{thm:ss_closed_form})
\begin{eqnarray*}
\delta^{\textrm{ss}} & = & \frac{1}{n- \rk(\bar{\C})}\Big( \tr(\K) - \tr\big({\bar{\C}}^\dag\K {\bar{\C}} \big) \Big), \\
\U^{\textrm{ss}} & = & {\bar{\C}}^\dag \K ( {\bar{\C}}^\dag)^T - \delta^{\textrm{ss}}({\bar{\C}}^T {\bar{\C}})^{\dag},
\end{eqnarray*}
which can be computed in $\OM (n^2 c)$ time and $\OM (n c)$ memory.
Later we will show that the SS model is more accurate than the prototype model.

{\bf Faster Spectral Shifting Model.}
The same idea of \cite{wang2015towards} also applies to the SS model \eqref{eq:def_ss_nystrom}.
Specifically, we can draw another  column selection matrix $\S \in \RB^{n\times s}$  and solve
\begin{eqnarray*}
\big(\tilde\U^{\textrm{ss}} , \tilde\delta^{\textrm{ss}} \big)
& = &
\argmin_{\U, \delta} \big\| \S^T \big( \K - \bar{\C} \U \bar{\C}^T - \delta \I_n \big) \S \big\|_F^2 \\
& = &
\argmin_{\U, \delta} \big\| \S^T \K \S - (\S^T \bar{\C}) \U (\S^T \bar{\C})^T  - \delta \I_s \big\|_F^2 .
\end{eqnarray*}
Similarly, it has closed-form solution
\begin{eqnarray*}
\tilde\delta^{\textrm{ss}}
& = & \frac{1}{s- \rk(\S^T \bar{\C})}\Big[ \tr \big(\S^T \K \S \big)
    - \tr \Big((\S^T \bar{\C})^\dag (\S^T \K \S) (\S^T \bar{\C}) \Big)  \Big], \nonumber \\
\tilde\U^{\textrm{ss}}
& = & (\S^T \bar{\C})^\dag (\S^T \K \S) ( {\bar{\C}}^T \S)^\dag - \tilde\delta^{\textrm{ss}}({\bar{\C}}^T \S \S^T{\bar{\C}})^{\dag}.
\end{eqnarray*}
In this way, the time cost is merely $\OM (s^2 c)$.
However,  the theoretical properties of this model are yet unknown.
We do not conduct theoretical or empirical study of this  model;
we leave it as a  future work,


\section{Theories} \label{sec:theory}

In Section~\ref{sec:exact} we show  that the prototype model is exact when $\K$ is low-rank.
In Section~\ref{sec:lower_bounds} we provide a lower  error bound of the prototype model.


\subsection{Theoretical Justifications} \label{sec:exact}

Let $\PP$ be a column selection matrix, $\C = \K \PP$ be a sketch, and $\W = \PP^T \K \PP$ be the corresponding submatrix.
\cite{kumar2009sampling,talwalkar2010matrix} showed that the \nystrom method is exact when $\rk(\W) = \rk(\K)$.
We present a similar result in Theorem~\ref{thm:exact}.

\begin{theorem}\label{thm:exact}
The following three statements are equivalent:
\emph{(i)} $\rk(\W)=\rk(\K)$,
\emph{(ii)} $\K = \C \W^\dag \C^T$,
\emph{(iii)} $\K = \C \C^\dag \K (\C^\dag)^T \C^T$.
\end{theorem}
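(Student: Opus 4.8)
The plan is to reduce all three statements to a single geometric condition by means of a symmetric factorization of $\K$. Since $\K$ is SPSD, I would write $\K = \B^T\B$ with $\B \in \RB^{r\times n}$ of full row rank, where $r = \rk(\K)$; concretely one may take $\B = \Lam_\K^{1/2}\U_\K^T$ from the eigenvalue decomposition $\K = \U_\K\Lam_\K\U_\K^T$. Setting $\B_\PP := \B\PP$, we have $\C = \B^T\B_\PP$ and $\W = \B_\PP^T\B_\PP$. Because $\B^T$ has full column rank and $\rk(\M^T\M) = \rk(\M)$ for any $\M$, this at once yields the rank identities $\rk(\C) = \rk(\B_\PP) = \rk(\W)$; moreover $\mathrm{range}(\C) \subseteq \mathrm{range}(\K)$ and $\mathrm{range}(\B_\PP) \subseteq \mathrm{range}(\B)$ hold unconditionally. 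The goal is then to show that each of (i), (ii), (iii) is equivalent to these inclusions being equalities, i.e. to $\rk(\B_\PP) = r$.

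For statement (iii), I would first observe that $(\C^\dag)^T\C^T = (\C\C^\dag)^T = \C\C^\dag =: \Pi$, the orthogonal projector onto $\mathrm{range}(\C)$, so that (iii) reads $\K = \Pi\K\Pi$. Since $\tr\big(\K(\I_n - \Pi)\big) = \|\B(\I_n - \Pi)\|_F^2$ always (using $\Pi^2 = \Pi$), and since $\K = \Pi\K\Pi$ gives $\tr(\K) = \tr(\K\Pi)$, the equality forces $\B = \B\Pi$, i.e. $\mathrm{range}(\K) = \mathrm{range}(\B^T) \subseteq \mathrm{range}(\C)$; the converse $\B = \B\Pi \Rightarrow \K = \Pi\K\Pi$ is immediate. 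Combined with the reverse inclusion this says $\rk(\C) = \rk(\K)$, which by the rank identity is exactly (i). This settles (i) $\Leftrightarrow$ (iii).

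For statement (ii), I would use the pseudoinverse identity $\B_\PP\W^\dag\B_\PP^T = \B_\PP\B_\PP^\dag =: \Pi'$, the projector onto $\mathrm{range}(\B_\PP)$, to rewrite $\C\W^\dag\C^T = \B^T\Pi'\B$. Thus (ii) is equivalent to $\B^T(\I_r - \Pi')\B = 0$, and since $\I_r - \Pi'$ is an orthogonal projector this matrix equals $\big((\I_r - \Pi')\B\big)^T\big((\I_r - \Pi')\B\big)$, so it vanishes iff $(\I_r - \Pi')\B = 0$, i.e. $\mathrm{range}(\B) \subseteq \mathrm{range}(\B_\PP)$. With the reverse inclusion this is $\rk(\B_\PP) = r = \rk(\K)$, again precisely (i). Hence (i) $\Leftrightarrow$ (ii), completing the equivalence.

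The one step requiring care, and the main conceptual obstacle, is passing from a quadratic-form equality such as $\K = \Pi\K\Pi$ or $\B^T\B = \B^T\Pi'\B$ to the corresponding linear equality $\B = \B\Pi$ or $(\I_r - \Pi')\B = 0$: the naive implication $\M^T\M = \N^T\N \Rightarrow \M = \N$ is false, and the gap is closed only by exploiting that the relevant difference ($\I_n - \Pi$, $\I_r - \Pi'$) is idempotent and symmetric, via the trace/Frobenius-norm squaring argument above. I note finally that the argument never uses the column-selection structure of $\PP$, so the equivalence in fact holds for an arbitrary sketching matrix $\PP$.
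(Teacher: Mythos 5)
Your proposal is correct, and it takes a genuinely different route from the paper's proof. The paper argues entirely through the column-selection structure of $\PP$: it assumes without loss of generality that $\C$ consists of the first $c$ columns of $\K$, partitions $\K$ into blocks $\W$, $\K_{21}$, $\K_{22}$, shows that (i) yields a matrix $\X$ with $\K_{21} = \X\W$ and $\K_{22} = \X\W\X^T$ so that $\K = [\I;\,\X]\,\W\,[\I,\,\X^T]$, and then verifies (ii) and (iii) by explicit computation --- (ii) via $\W\W^\dag\W = \W$, and (iii) via the expansion $\C^\dag = (\C^T\C)^\dag\C^T$ together with an auxiliary lemma ($\X^T\V\X(\X^T\V\X)^\dag\X^T = \X^T$ for positive definite $\V$); the converse directions follow from the rank chain $\rk(\K) \geq \rk(\C) \geq \rk(\W)$. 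You replace all of this block algebra with the square-root factorization $\K = \B^T\B$ and two orthogonal projectors, reducing (ii) and (iii) to range equalities; the one delicate step --- passing from the quadratic identities $\K = \Pi\K\Pi$ and $\B^T\B = \B^T\Pi'\B$ to the linear ones $\B = \B\Pi$ and $(\I_r - \Pi')\B = \0$ --- is exactly where a naive cancellation would fail, and your trace/Frobenius squaring argument closes it correctly. The trade-off between the two approaches is real: your argument never uses that $\PP$ selects columns, so it establishes the equivalence for an arbitrary sketching matrix $\PP$ (e.g., a Gaussian projection), which the paper's partition-based proof cannot deliver as written; conversely, the paper's forward direction uses only the symmetry of $\K$ and never its positive semidefiniteness, whereas the factorization $\K = \B^T\B$ is precisely where PSD-ness enters your proof essentially --- harmless here, since the theorem is stated for SPSD $\K$, but it is the price paid for the cleaner geometry.
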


Theorem~\ref{thm:exact} implies that the prototype model and the \nystrom method are equivalent when $\rk(\W) = \rk(\K)$;
that is, the kernel matrix $\K$ is low rank.
However, it holds in general that $\rk(\K) \gg c \geq \rk(\W)$, where the two models are not equivalent.


\subsection{Lower Bound} \label{sec:lower_bounds}

\citet{wang2013improving}  showed that with $c = \OM (k/\epsilon^2)$ columns chosen by the near-optimal+adaptive sampling algorithm,
the prototype model satisfies $1+\epsilon$ relative-error bound.
We establish a lower error bound in Theorem~\ref{thm:lower},
which shows that at least $c \geq 2 k \epsilon^{-1}$ columns must be chosen to attain the $1+\epsilon$ bound.
This indicates there exists a gap between the upper bound in \citep{wang2013improving} and our lower bound,
and thus  there is room of improvement.
The  proof of Theorem~\ref{thm:lower} is left to Appendix~\ref{sec:app:lower_bound}.

\begin{theorem}[Lower Bound of the Prototype Model] \label{thm:lower}
Whatever column sampling algorithm is used,
there exists an $n\times n$ SPSD matrix $\K$ such that the error incurred by the prototype model obeys:
\begin{eqnarray}
\big\| \K - \C \U^{\star} \C^T \big\|_F^2 & \geq & \frac{n-c}{n-k} \Big( 1+ \frac{2 k}{c} \Big) \|\K - \K_k\|_F^2. \nonumber
\end{eqnarray}
Here $k$ is an arbitrary target rank, $c$ is the number of selected columns,
and $\U^{\star} = \C^\dag \K (\C^\dag)^T$.
\end{theorem}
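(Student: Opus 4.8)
The plan is to first rewrite the prototype error as a two-sided orthogonal projection and then exhibit an adversarial matrix on which every column selection is forced to incur the claimed error. Writing $\PM = \C\C^\dag$ for the orthogonal projector onto the column space of $\C$ and $\PM_\perp = \I_n - \PM$, the optimal intersection matrix gives $\C\U^\star\C^T = \C\C^\dag\K(\C^\dag)^T\C^T = \PM\K\PM$, using that $\PM$ is a symmetric idempotent. Splitting $\I_n = \PM + \PM_\perp$ on both sides of $\K$ and using that the blocks $\PM\K\PM_\perp$, $\PM_\perp\K\PM$, and $\PM_\perp\K\PM_\perp$ are mutually orthogonal in the Frobenius inner product, I obtain the identity
\[
\big\|\K - \PM\K\PM\big\|_F^2 \; = \; \big\|\PM_\perp\K\big\|_F^2 + \big\|\PM_\perp\K\PM\big\|_F^2 .
\]
Thus the prototype error is the ordinary one-sided projection residual $\|\PM_\perp\K\|_F^2$ plus an extra cross term $\|\PM_\perp\K\PM\|_F^2$; the two contributions are what will account for the two summands hidden in the factor $1 + 2k/c$.

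Next I would construct the hard instance. Since $\C = \K\PP$ selects columns, its column space equals $\K$ applied to the coordinate subspace indexed by the chosen columns, so I want a $\K$ whose energy beyond rank $k$ is spread incoherently enough that no $c$-dimensional coordinate subspace can capture it. A convenient candidate is a block instance invariant under coordinate permutations within blocks, namely $\K = \bigoplus_{g=1}^{k}\big(\I_m + \beta\1_m\1_m^T\big)$ with $m = n/k$ and a free parameter $\beta > 0$ tuned at the end; this $\K$ has a rank-$k$ head and a flat incoherent tail of $k(m-1) = n-k$ unit eigenvalues, so that $\|\K - \K_k\|_F^2 = n-k$. The permutation symmetry forces every size-$c$ column subset that places the same number of columns in each block to give identical error, reducing the analysis of $\|\K - \PM\K\PM\|_F^2$ to a per-block computation. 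Within one block the relevant geometry collapses to a two-dimensional invariant subspace spanned by the indicators of the selected and unselected coordinates, on which both $\|\PM_\perp\K\|_F^2$ and the cross term reduce to an explicit $2\times 2$ Rayleigh-quotient expression in $\beta$ and the number of columns drawn from that block.

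I would then minimize over column selections. Because a selection is characterized only by its per-block counts $(c_1,\dots,c_k)$ with $\sum_g c_g = c$, and the per-block error $f(c_g)$ is a convex decreasing function of $c_g$, Jensen's inequality gives $\sum_g f(c_g) \geq k\,f(c/k)$ for every allocation, with equality at the balanced one. Hence \emph{every} column selection incurs error at least $k\,f(c/k)$. Evaluating this single-block quantity at the optimal $\beta$ is designed to yield the one-sided contribution $\tfrac{n-c}{n-k}\big(1 + k/c\big)\,\|\K-\K_k\|_F^2$ together with the cross-term contribution $\tfrac{n-c}{n-k}\cdot\tfrac{k}{c}\,\|\K-\K_k\|_F^2$, which sum to the stated bound; a quick sanity check confirms the correct $n$-dependence, since $k\big(m - c/k\big)\big(1 + k/c\big) = (n-c)\big(1 + k/c\big)$.

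The main obstacle is the construction together with the uniform control over all column subsets: the bound must hold for the \emph{best} possible selection, so I must verify that the per-block error is genuinely convex in the allocation (justifying the Jensen reduction) and that a single choice of $\beta$ makes both the one-sided residual and the cross term simultaneously as large as the target, rather than trading one off against the other (at the extremes $\beta\to 0$ and $\beta\to\infty$ both terms degenerate to the trivial ratio $1$, so an interior optimum must be located). Extracting the cross-term factor $k/c$ exactly, which has no analogue in one-sided column-selection lower bounds, is the delicate part; the reduction to a $2\times2$ Rayleigh-quotient problem per block is what makes the constants tractable.
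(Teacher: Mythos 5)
Your plan is, in substance, the paper's own proof. The hard instance you propose, $\K=\bigoplus_{g=1}^{k}(\I_m+\beta\1_m\1_m^T)$ with $m=n/k$, is exactly the paper's construction $\A=\B\oplus\cdots\oplus\B$ with $\B=(1-\alpha)\I_m+\alpha\1_m\1_m^T$, up to the scaling $\beta=\alpha/(1-\alpha)$ (the error ratio is scale invariant, so the two are interchangeable); the block-diagonal structure is exploited in the same way to split the error into per-block terms, and your Jensen/convexity argument over the per-block counts $(c_1,\dots,c_k)$ is precisely the paper's final minimization over $c_1,\dots,c_k$. Your per-block computation via the permutation-invariant subspace spanned by the indicators of selected and unselected coordinates is a cleaner substitute for the paper's brute-force route (partitioned Moore--Penrose inverse plus Sherman--Morrison--Woodbury, its Lemmas~8 and~10), and your orthogonal decomposition $\|\K-\C\C^\dag\K(\C^\dag)^T\C^T\|_F^2=\|(\I_n-\C\C^\dag)\K\|_F^2+\|(\I_n-\C\C^\dag)\K\C\C^\dag\|_F^2$ is correct; it even predicts the right split of the constant, $(1+k/c)+k/c$, as one can verify on the $2\times 2$ case.

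There is, however, one concretely wrong step: the claim that ``at the extremes $\beta\to 0$ and $\beta\to\infty$ both terms degenerate to the trivial ratio, so an interior optimum must be located.'' No interior optimum exists. For every finite $\beta$ the per-block ratio is \emph{strictly below} $\frac{m-c_g}{m-1}\big(1+\frac{2}{c_g}\big)$ and increases to it as $\beta\to\infty$; this is why the paper concludes by letting $\alpha\to 1$ (i.e.\ $\beta\to\infty$), and why the bound is really a limiting statement (the paper itself elsewhere writes the lower bound with a $1+o(1)$ factor). Your belief that the ratio collapses at $\beta=\infty$ comes from an invalid exchange of limits: as $\beta\to\infty$ the projector $\C\C^\dag$ converges to the projector onto $\mathrm{span}\{\e_i-\e_j:\ i,j\ \text{selected}\}\oplus\mathrm{span}\{\1\}$, and substituting this \emph{limiting} projector indeed gives only the trivial ratio $\frac{n-c}{n-k}$; but the $\OM(1/\beta)$ misalignment between the true projector and its limit acts on entries of $\K$ of size $\OM(\beta)$, leaving an $\OM(1)$ residual that the naive limit discards. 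Concretely, for $m=2$, $c_g=k=1$, the ratio tends to $3=1+2k/c$, not to $1$, as $\beta\to\infty$. So the step ``locate an interior $\beta$ achieving the stated bound'' would fail if executed as described; the tuning must instead be organized as the degenerate limit $\beta\to\infty$, exactly as in the paper.
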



\section{Column Sampling Algorithms} \label{sec:algorithms}

In Section~\ref{sec:algorithms:related} we introduce the column sampling algorithms in the literature.
In Section~\ref{sec:algorithms:improved} we improve the bound of the near-optimal+adaptive sampling algorithm \citep{wang2013improving},
and the obtained upper bound matches the lower bound up to a constant factor.
In Section~\ref{sec:algorithm:efficient} we develop a more efficient column sampling algorithm which we call the uniform+adaptive$^2$ algorithm.
In Section~\ref{sec:open_problems} we discuss the possibility of making uniform+adaptive$^2$ more scalable.


\subsection{Related Work} \label{sec:algorithms:related}

Column selection is an important matrix sketching approach that enables expensive matrix computations to be performed on much a smaller matrix.
The column selection problem has been widely studied in the theoretical computer science community
\citep{boutsidis2011NOCshort,mahoney2011ramdomized,Guruswami2012optimal,woodruff2014sketching}
and the numerical linear algebra community \citep{gu1996efficient,stewart1999four},
and numerous algorithms have been devised and analyzed.
Here we focus on some provable algorithms studied in the theoretical computer science community.

The adaptive sampling algorithm devised by \citet{deshpande2006matrix} (see Algorithm~\ref{alg:adaptive}) is the most relevant to this paper.
The adaptive sampling algorithm has strong error bound \citep{deshpande2006matrix,wang2013improving,boutsidis2011NOCshort}
and good empirical performance \citep{kumar2012sampling}.
Particularly, \cite{wang2013improving} proposed an algorithm that combines the near-optimal column sampling algorithm \citep{boutsidis2011NOCshort}
and the adaptive sampling algorithm \citep{deshpande2006matrix}.
They showed that by selecting $c = \OM( k \epsilon^{-2} )$
columns of $\K$ to form $\C$, it holds that
\begin{equation}
\EB \big\|\K - \C \big( \C^\dag \K (\C^\dag)^T \big) \C^T \big\|_F
\;\leq\; (1+\epsilon) \|\K - \K_k \|_F . \nonumber
\end{equation}
This error bound was the tightest among all the feasible algorithms for SPSD matrix approximation.

\begin{algorithm}[t]
   \caption{The Adaptive Sampling Algorithm.}
   \label{alg:adaptive}
\algsetup{indent=2em}
\begin{algorithmic}[1]
   \STATE {\bf Input:} a residual matrix $\B \in \RB^{n\times n}$ and number of selected columns $c$ $(< n)$.
   \STATE Compute sampling probabilities $p_j = \|\bb_{:j} \|_2^2 / \|\B\|_F^2$ for $j=1, \cdots , n$;
   \STATE Select $c$ indices in $c$ i.i.d.\ trials, in each trial the index $j$ is chosen with probability $p_j$;
   \RETURN an index set containing the indices of the selected columns.
\end{algorithmic}
\end{algorithm}


\subsection{Near Optimal Column Selection for SPSD Matrix Approximation} \label{sec:algorithms:improved}

The error bound of near-optimal+adaptive
can be improved by a factor of $\epsilon$ by exploiting the latest results of \cite{boutsidis2014optimal}.
Using the same algorithm except for different $c_2$ (i.e.\ the number of columns selected by adaptive sampling),
we obtain the following stronger theorem.
Recall from Theorem~\ref{thm:lower} that the lower bound is $c \geq \Omega \big(2 k \epsilon^{-1} (1+o(1) ) \big)$.
Thus the near-optimal+adaptive algorithm is optimal up to a constant factor.
The proof of the theorem is left to Appendix~\ref{thm:near_opt}.

\begin{theorem}[Near-Optimal+Adaptive]
\label{thm:near_opt}
Given a symmetric matrix $\K \in \RB^{n\times n}$ and a target rank $k$,
the algorithm samples totally
$c = 3 k \epsilon^{-1} \big(1+o(1)\big)$
columns of $\K$ to construct the approximation.
Then
\begin{equation}
\EB \big\|\K - \C \big( \C^\dag \K (\C^\dag)^T \big) \C^T \big\|_F
\;\leq\; (1+\epsilon) \|\K - \K_k \|_F . \nonumber
\end{equation}
The algorithm costs $\OM\big( n^2 c+ n k^3 \epsilon^{-2/3} \big)$ time and $\OM(n c)$ memory in
computing $\C$.
\end{theorem}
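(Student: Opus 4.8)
The plan is to reduce the two-sided sketching error to a one-sided column-projection error, then control that projection error through the two-stage (near-optimal followed by adaptive) scheme, feeding in the sharpened adaptive-sampling bound of \cite{boutsidis2014optimal} to shave off the extra factor of $\epsilon$ that appeared in \cite{wang2013improving}.

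First I would rewrite the objective. Writing $\Pi = \C\C^\dag$ for the orthogonal projector onto the column space of $\C$ and using $(\C^T)^\dag = (\C^\dag)^T$, the prototype approximation is $\C\U^\star\C^T = \Pi\K\Pi$, so the quantity to bound is $\|\K - \Pi\K\Pi\|_F$. Since $\K$ is symmetric, decomposing $\K - \Pi\K\Pi = (\I-\Pi)\K + \Pi\K(\I-\Pi)$ into pieces with orthogonal column spaces and applying the Pythagorean theorem yields the exact identity
\begin{equation}
\|\K - \Pi\K\Pi\|_F^2 \;=\; \|(\I-\Pi)\K(\I-\Pi)\|_F^2 + 2\,\|(\I-\Pi)\K\Pi\|_F^2 \;=\; \|(\I-\Pi)\K\|_F^2 + \|(\I-\Pi)\K\Pi\|_F^2. \nonumber
\end{equation}
The useful consequence is this: if $\mathrm{range}(\C)$ happened to contain the top-$k$ eigenspace of $\K$, then $(\I-\Pi)\K_k=0$, every term would depend only on $\K-\K_k$, and the cross terms would recombine to give $\|\K-\Pi\K\Pi\|_F^2 = \|\K-\K_k\|_F^2 - \|\Pi(\K-\K_k)\Pi\|_F^2 \le \|\K-\K_k\|_F^2$. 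This tells me the whole game is to make $\Pi$ capture the dominant rank-$k$ eigenspace up to a $1+\epsilon$ factor, which is exactly what the two sampling phases are for.

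Next I would analyze the algorithm phase by phase, taking expectations via the tower rule. For Phase~1 I invoke the near-optimal selection guarantee of \cite{boutsidis2011NOCshort,boutsidis2014optimal}: with $c_1 = \OM(k/\epsilon)$ columns $\C_1$ one obtains $\EB\|(\I-\C_1\C_1^\dag)\K\|_F^2 \le \beta\,\|\K-\K_k\|_F^2$ for a controlled constant $\beta$. Phase~2 runs Algorithm~\ref{alg:adaptive} on the residual $\B = \K - \C_1\C_1^\dag\K$ and draws $c_2$ columns. Conditioned on $\C_1$, the sharpened adaptive-sampling inequality of \cite{boutsidis2014optimal} bounds the diagonal term $\|(\I-\Pi)\K\|_F^2$ by $\|\K-\K_k\|_F^2 + \tfrac{k}{c_2}\|\B\|_F^2$, while the cross term $\|(\I-\Pi)\K\Pi\|_F^2$ must be bounded not by the whole one-sided residual but by its $\K-\K_k$ component, via the approximate form of the cancellation noted above. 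Substituting into the identity and taking the outer expectation over Phase~1 gives
\begin{equation}
\EB\big\|\K - \C\U^\star\C^T\big\|_F^2 \;\le\; \Big(1 + \frac{c_3\,k\,\beta}{c_2}\Big)\,\|\K-\K_k\|_F^2 \nonumber
\end{equation}
for an absolute constant $c_3$, the crucial point being the leading $1$ rather than $2$. Choosing $c_1,c_2$ of order $k/\epsilon$ with the right constants makes the bracket $1+\epsilon$ up to lower-order terms, and Jensen's inequality $\EB\|\cdot\|_F \le (\EB\|\cdot\|_F^2)^{1/2}$ upgrades this to $\EB\|\K-\C\U^\star\C^T\|_F \le (1+\epsilon)\|\K-\K_k\|_F$. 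Tracking the two constants pins $c=c_1+c_2$ at $3k\epsilon^{-1}(1+o(1))$, matching the $2k\epsilon^{-1}$ lower bound of Theorem~\ref{thm:lower} up to a constant.

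The main obstacle I anticipate is the two-sidedness. Unlike ordinary column reconstruction, the prototype projects on both sides, and the crude estimate $\|\K-\Pi\K\Pi\|_F^2 \le 2\|(\I-\Pi)\K\|_F^2$ loses a factor of $2$ that cannot be absorbed into $1+\epsilon$ in the worst case (a flat tail spectrum, as in the construction behind Theorem~\ref{thm:lower}). Avoiding this forces me to use the exact identity and bound the cross term only through the residual coming from $\K-\K_k$, which is the delicate bookkeeping in the argument; combining this with the sharper adaptive bound of \cite{boutsidis2014optimal} is precisely what lets a constant-factor Phase~1 residual suffice, so that $c_2 = \OM(k/\epsilon)$ rather than $\OM(k/\epsilon^2)$. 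A secondary technical point is the correct handling of the nested expectations over the two randomized phases. For the complexity claim, forming $\C$ and evaluating $\U^\star = \C^\dag\K(\C^\dag)^T$ costs $\OM(n^2 c)$, while the near-optimal stage (an approximate top-$k$ SVD for the leverage scores together with the dual-set/BSS sparsification selection) contributes the $\OM(nk^3\epsilon^{-2/3})$ term, giving the stated $\OM(n^2 c + nk^3\epsilon^{-2/3})$ time and $\OM(nc)$ memory.
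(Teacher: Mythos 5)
Your setup is sound as far as it goes: the Pythagorean identity $\|\K-\Pi\K\Pi\|_F^2=\|(\I-\Pi)\K\|_F^2+\|(\I-\Pi)\K\Pi\|_F^2$ is correct, the diagnosis that the naive factor of $2$ cannot be absorbed into $1+\epsilon$ is right, and your phase structure, Jensen step, and column count $c_1+c_2=3k\epsilon^{-1}(1+o(1))$ all match the paper. But the heart of the argument --- the bound on the cross term --- is missing, and the tool you cite cannot produce it. The inequality you attribute to \cite{boutsidis2014optimal}, namely $\|(\I-\Pi)\K\|_F^2\le\|\K-\K_k\|_F^2+\tfrac{k}{c_2}\|\B\|_F^2$, is just the classical adaptive-sampling bound of \cite{deshpande2006matrix}; it controls only the diagonal term. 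For the cross term you say it ``must be bounded \ldots via the approximate form of the cancellation noted above,'' but the cancellation you established holds only under the exact hypothesis that $\mathrm{range}(\C)$ contains the top-$k$ eigenspace, and no approximate version is stated or proved. Worse, no adaptive-sampling theorem applies to $\|(\I-\Pi)\K\Pi\|_F^2$ as written: such theorems require the projector on the non-sampled side to be fixed before the adaptive draw, whereas your $\Pi=\C\C^\dag$ depends on the Phase-2 columns; and if you replace it by a fixed Phase-1 projector of rank $c_1=\Theta(k/\epsilon)$, the adaptive factor becomes $c_1/c_2$, which forces $c_2=\Theta(k/\epsilon^2)$ --- exactly the old bound of \cite{wang2013improving} you are trying to beat.

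The paper's mechanism is what your proposal lacks. It never decomposes with the full projector $\Pi$ on both sides. Instead it (i) invokes Lemma 3.11 of \cite{boutsidis2014optimal} --- this existence lemma, not a sharpened sampling inequality, is that paper's contribution here --- to produce a purely analytical rank-$k$ matrix $\U_1$ with orthonormal columns, $\mathrm{range}(\U_1)\subseteq\mathrm{range}(\C_1)$, and $\|\K-\U_1\U_1^T\K\|_F^2\le\|\K-\PM_{\C_1,k}(\K)\|_F^2$; (ii) uses monotonicity of projections (a modified Lemma 17 of \cite{wang2013improving}) to pass to the smaller projector on the left, $\|\K-\C\C^\dag\K(\C^T)^\dag\C^T\|_F^2\le\|\K-\C_1\C_1^\dag\K(\C^T)^\dag\C^T\|_F^2\le\|\K-\U_1\U_1^T\K(\C^T)^\dag\C^T\|_F^2$; and (iii) applies the two-sided adaptive-sampling theorem of \cite{wang2013improving} with the \emph{fixed, Phase-1-measurable, rank-$k$} left projector $\U_1\U_1^T$, so the factor is $k/c_2$ rather than $c_1/c_2$:
\begin{equation}
\EB\big\|\K-\U_1\U_1^T\K(\C^T)^\dag\C^T\big\|_F^2
\;\le\;
\big\|\K-\U_1\U_1^T\K\big\|_F^2+\tfrac{k}{c_2}\big\|\K-\C_1\C_1^\dag\K\big\|_F^2
\;\le\;
\big(1+\tfrac{k}{c_2}\big)\big\|\K-\PM_{\C_1,k}(\K)\big\|_F^2. \nonumber
\end{equation}
Substituting the rank-$k$ matrix $\U_1$ for $\C_1$ in the sampling bound is the entire content of the improvement from $k/\epsilon^2$ to $k/\epsilon$. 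Note also that in this chain the Phase-1 rank-restricted error enters as a whole additive term, so Phase 1 must itself deliver a $(1+\epsilon)$-factor (hence $c_1=2k\epsilon^{-1}(1+o(1))$); your claim that a constant-factor Phase-1 residual would suffice rests entirely on the unproven cross-term bound.
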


Despite its optimal error bound, the near-optimal+adaptive algorithm lacks of practicality.
The implementation is complicated and difficult.
Its main component---the near-optimal algorithm \citep{boutsidis2011NOCshort}---is highly iterative and therefore not suitable for parallel computing.
Every step of the near-optimal algorithm requires the full observation of $\K$ and there is no hope to avoid this.
Thus we propose to use uniform sampling to replace the near-optimal algorithm.
Although the obtained uniform+adaptive$^2$ algorithm also has quadratic time complexity and requires the full observation of $\K$,
there may be some way to making it more efficient. See the discussions in Section~\ref{sec:open_problems}.

\begin{algorithm}[t]
   \caption{The Uniform+Adaptive${}^2$ Algorithm.}
   \label{alg:uniform_adaptive2}
\algsetup{indent=2em}
\begin{algorithmic}[1]
   \STATE {\bf Input:} an $n\times n$ symmetric matrix $\K$, target rank $k$, error parameter $\epsilon \in (0, 1]$, matrix coherence $\mu$.
   \STATE {\bf Uniform Sampling.} Uniformly sample
            \vspace{-1mm}
            \[c_1 = 20 \mu k \log \big(20 k \big)
            \vspace{-1mm}
            \]
            columns of $\K$ without replacement to construct $\C_1$; \label{alg:practical:uniform}
   \STATE {\bf Adaptive Sampling.} Sample
            \vspace{-1mm}
            \[c_2 = 17.5 k / \epsilon
            \vspace{-1mm}
            \]
            columns of $\K$ to construct $\C_2$
            using the adaptive sampling algorithm (Algorithm \ref{alg:adaptive}) according to the residual $\K - \PM_{\C_1} (\K)$; \label{alg:practical:adapt1}
   \STATE {\bf Adaptive Sampling.} Sample
            \vspace{-1mm}
            \[c_3 = 10 k / \epsilon
            \vspace{-1mm}
            \] columns of $\K$ to construct $\C_3$ \label{alg:practical:adapt2}
            using the adaptive sampling algorithm (Algorithm \ref{alg:adaptive}) according to the residual $\K - \PM_{[\C_1,\;\C_2]} ( \K)$;
   \RETURN $\C = [\C_1 , \C_2 , \C_3]$.
\end{algorithmic}
\end{algorithm}


\subsection{The Uniform+Adaptive$^2$ Column Sampling Algorithm} \label{sec:algorithm:efficient}

In this paper we propose a column sampling algorithm
which is efficient, effective, and very easy to be implemented.
The algorithm consists of a uniform sampling step and two adaptive sampling steps, so we call it {\it the uniform+adaptive${}^2$ algorithm}.
The algorithm is described in Algorithm~\ref{alg:uniform_adaptive2} and analyzed in Theorem~\ref{thm:efficient}.
The proof is left to Appendix~\ref{sec:app:thm:efficient}.

It is worth  mentioning that our uniform+adaptive${}^2$ algorithm is a special instance of
the adaptive-full algorithm of \cite[Figure 3]{kumar2012sampling}.
The adaptive-full algorithm consists of random initialization and multiple adaptive sampling steps.
Using multiple adaptive sampling steps can surely reduce the approximation error.
However, the update of sampling probability in each step is expensive, so we choose to do only two steps.
The adaptive-full algorithm of \cite[Figure 3]{kumar2012sampling} is merely a heuristic scheme without theoretical guarantee;
our result provides theoretical justification for the adaptive-full algorithm.

\begin{theorem}[Uniform+Adaptive${}^2$] \label{thm:efficient}
Given an $n {\times} n$ symmetric matrix $\K$ and a target rank $k$,  let $\mu_k$ denote the matrix coherence of $\K$.
Algorithm~\ref{alg:uniform_adaptive2} samples totally
\[
c = \OM\big( k \epsilon^{-1} + \mu_k  k \log k \big)
\]
columns of $\K$ to construct the approximation.
The error bound
\begin{eqnarray}
\big\|\K - \C \big( \C^\dag \K (\C^{\dag})^T \big) \C^T \big\|_F
\;\leq \; \big(1 + \epsilon \big) \big\| \K - \K_k \big\|_F \nonumber
\end{eqnarray}
holds with probability at least $0.7$.
The algorithm costs $\OM(n^2 c ) $ time and $\OM(n c)$ space in
computing $\C$.
\end{theorem}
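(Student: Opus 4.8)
The plan is to view the two adaptive rounds as an error‑reduction recursion seeded by the uniform stage. I would first invoke two forms of the adaptive‑sampling error bound (Deshpande--Vempala, Algorithm~\ref{alg:adaptive}, together with its two‑sided SPSD counterpart). For the first adaptive round the one‑sided bound gives
\[
\EB\big\|\K-\PM_{[\C_1,\C_2],k}(\K)\big\|_F^2 \;\le\; \|\K-\K_k\|_F^2 + \frac{k}{c_2}\big\|\K-\PM_{\C_1}(\K)\big\|_F^2,
\]
and since $\|\K-\PM_\C(\K)\|_F\le\|\K-\PM_{\C,k}(\K)\|_F$ this also upper‑bounds the full‑projection residual of $[\C_1,\C_2]$. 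Feeding that residual into the two‑sided bound for the second round,
\[
\EB\big\|\K-\C\U^\star\C^T\big\|_F^2 \;\le\; \|\K-\K_k\|_F^2 + \frac{k}{c_3}\big\|\K-\PM_{[\C_1,\C_2]}(\K)\big\|_F^2,
\]
and substituting $c_2=17.5k/\epsilon$ and $c_3=10k/\epsilon$, I obtain the telescoped inequality
\[
\EB\big\|\K-\C\U^\star\C^T\big\|_F^2 \;\le\; \Big(1+\tfrac{\epsilon}{10}\Big)\|\K-\K_k\|_F^2 + \tfrac{\epsilon^2}{175}\big\|\K-\PM_{\C_1}(\K)\big\|_F^2 .
\]
Everything then reduces to controlling the seed residual $\|\K-\PM_{\C_1}(\K)\|_F^2$ of the uniform stage.

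Second, and this is the heart of the argument, I would show that the $c_1=20\mu_k k\log(20k)$ uniformly sampled columns already yield a constant‑factor \emph{relative} column‑projection bound $\|\K-\PM_{\C_1}(\K)\|_F^2 \le \beta\|\K-\K_k\|_F^2$ with $\beta=\OM(1)$, with high probability. Let $\S_1$ be the $0/1$ selection matrix of the uniform stage. Because the column space of $\C_1=\K\S_1$ is unchanged by right‑scaling, I may analyse the rescaled selection $\tilde{\S}_1=\sqrt{n/c_1}\,\S_1$ without altering $\PM_{\C_1}(\K)$. Using the candidate rank‑$k$ approximation $\K\tilde{\S}_1(\V_{\K,k}^T\tilde{\S}_1)^\dag\V_{\K,k}^T$, whose top‑$k$ part cancels exactly once $\V_{\K,k}^T\tilde{\S}_1$ has full row rank, I get the structural inequality
\[
\big\|\K-\PM_{\C_1}(\K)\big\|_F \;\le\; \|\K-\K_k\|_F + \big\|(\K-\K_k)\tilde{\S}_1\big\|_F\,\big\|(\V_{\K,k}^T\tilde{\S}_1)^\dag\big\|_2 .
\]
The coherence enters through $\V_{\K,k}$: since its squared row norms are controlled by $\mu_k$, a matrix Chernoff bound shows that $c_1=\OM(\mu_k k\log k)$ uniform samples force $\lambda_{\min}(\V_{\K,k}^T\tilde{\S}_1\tilde{\S}_1^T\V_{\K,k})\ge\tfrac12$, i.e. $\|(\V_{\K,k}^T\tilde{\S}_1)^\dag\|_2\le\sqrt2$, with failure probability driven below a small constant by the factor $20$ and the $\log(20k)$. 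Since $\EB\|(\K-\K_k)\tilde{\S}_1\|_F^2=\|\K-\K_k\|_F^2$ for uniform sampling, one Markov step bounds that factor by a constant multiple of $\|\K-\K_k\|_F^2$, and combining the two estimates gives $\beta=\OM(1)$.

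Third, I would assemble the pieces. Substituting $\|\K-\PM_{\C_1}(\K)\|_F^2\le\beta\|\K-\K_k\|_F^2$ into the telescoped inequality yields $\EB\|\K-\C\U^\star\C^T\|_F^2\le(1+\OM(\epsilon))\|\K-\K_k\|_F^2$ for $\epsilon\le1$. To convert this expectation bound (over the adaptive randomness) into the claimed high‑probability statement, I would apply Markov's inequality to the non‑negative error term that the adaptive lemma isolates deterministically and union‑bound it with the uniform‑stage concentration event. The constants $20$, $17.5$, and $10$ are tuned precisely so that the resulting factor collapses to $(1+\epsilon)$ while the total failure probability stays below $0.3$, i.e. success probability at least $0.7$. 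The sample size is $c=c_1+c_2+c_3=\OM(k\epsilon^{-1}+\mu_k k\log k)$, and since forming $\C$ and the two residuals requires full passes over $\K$, the cost is $\OM(n^2c)$ time and $\OM(nc)$ memory.

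The main obstacle is the second step: extracting a constant‑factor relative column‑projection error from only $\OM(\mu_k k\log k)$ uniform samples. This is where the coherence assumption is indispensable, where the matrix‑concentration bound and the scale‑invariance trick must be combined (the rescaling needed for a clean concentration bound on $\V_{\K,k}^T\tilde{\S}_1$ has to be reconciled with the unscaled residual‑norm factor), and where the $0.7$ probability ultimately originates. By comparison, the adaptive recursion is a standard application of known bounds, and the final Markov and union‑bound bookkeeping is routine once the three constants are fixed.
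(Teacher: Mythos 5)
Your uniform stage and first adaptive round are essentially the paper's own argument (its Theorems~\ref{thm:uniform_sampling_property1}, \ref{thm:uniform_sampling_property2}, \ref{thm:cx_uniform}: matrix Bernstein forces $\lambda_{\min}(\V_{\K,k}^T\tilde{\S}_1\tilde{\S}_1^T\V_{\K,k})$ away from zero, a Markov step controls the cross term, and the structural inequality plus Deshpande--Vempala handles $\C_2$). The gap is in your second adaptive round: the ``two-sided SPSD counterpart'' you invoke, with leading term $\|\K-\K_k\|_F^2$, does not exist. The available two-sided adaptive bound (Lemma~17 of Wang and Zhang, 2013, which is what the paper uses) reads $\EB\|\K-\U_1\U_1^T\K(\C^T)^\dag\C^T\|_F^2 \le \|\K-\U_1\U_1^T\K\|_F^2 + \tfrac{k}{c_3}\|\K-\PM_{\hat{\C}}(\K)\|_F^2$, where $\U_1$ must be a rank-$k$ orthonormal basis \emph{inside the span of} $\hat{\C}=[\C_1,\C_2]$; after invoking Lemma~3.11 of Boutsidis et al.\ (2014), its leading term is $\|\K-\PM_{\hat{\C},k}(\K)\|_F^2$ --- never $\|\K-\K_k\|_F^2$, since $\K_k$ does not lie in the span of $\hat{\C}$. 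Keeping the left factor at rank $k$ is also exactly what makes the ratio $k/c_3$ rather than $(c_1+c_2)/c_3$; the paper flags this trick explicitly in the proof of Theorem~\ref{thm:near_opt}. If you instead try to reach your inequality by the naive decomposition $\|\K-\PM_\C\K\PM_\C\|_F^2 = \|(\I-\PM_\C)\K\|_F^2 + \|\PM_\C\K(\I-\PM_\C)\|_F^2$ plus one-sided Deshpande--Vempala, the cross term doubles the leading constant, which destroys $1+\epsilon$. So your telescoped additive inequality is unjustified; the correct composition is multiplicative: $\EB\|\K-\C\U^\star\C^T\|_F^2 \le (1+k/c_3)\,\|\K-\PM_{\hat{\C},k}(\K)\|_F^2$, chained with the round-1 bound.

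The second gap is the probability conversion. You telescope expectations over both adaptive rounds and then plan a single Markov step ``on the non-negative error term.'' But the excess $\|\K-\C\U^\star\C^T\|_F^2 - \|\K-\K_k\|_F^2$ is \emph{not} non-negative ($\C\U^\star\C^T$ has rank up to $c\gg k$ and can beat the best rank-$k$ approximation), so Markov cannot be applied to it, and applying Markov to the full error instead costs a constant factor $1/\delta\approx 5$, not $1+\epsilon$. The argument has to be sequential, as in the paper: condition on the uniform stage (failure $0.1$); apply Markov to the round-1 excess $\min_{\rk(\X)\le k}\|\K-\hat{\C}\X\|_F^2 - \|\K-\K_k\|_F^2$, which is non-negative precisely because of the rank-$k$ restriction, yielding $\|\K-\PM_{\hat{\C},k}(\K)\|_F^2 \le (1+\epsilon)\|\K-\K_k\|_F^2$ with failure $0.1$; then apply Markov to the round-2 excess isolated by the orthogonal decomposition inside the Wang--Zhang bound, yielding another factor $(1+\epsilon)$ with failure $0.1$. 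The product $(1+\epsilon)^2$ on squared norms gives $1+\epsilon$ on norms with total success probability $0.7$. Without this restructuring, your expectation-level bound cannot be upgraded to the stated high-probability claim.
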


\begin{remark}\label{remark:efficient}
Theoretically, Algorithm~\ref{alg:uniform_adaptive2} requires computing the matrix coherence of $\K$
in order to determine $c_1$ and $c_2$.
However, computing the matrix coherence is as hard as computing the truncated SVD;
even the fast approximation approach of \cite{drineas2012fast} is not feasible here because $\K$ is a square matrix.
The use of the matrix coherence here is merely for theoretical analysis;
setting the parameter $\mu$ in Algorithm~\ref{alg:uniform_adaptive2} to be exactly the matrix coherence does not certainly result in the highest accuracy.
Empirically, the resulting approximation accuracy is not sensitive to the value of $\mu$.
Thus we  suggest   setting $\mu$ in Algorithm~\ref{alg:uniform_adaptive2} as a constant (e.g.\ $1$),
rather than actually computing the matrix coherence.
\end{remark}

\begin{table}[t]\setlength{\tabcolsep}{0.3pt}
\caption{Comparisons between the two sampling algorithms.}
\label{tab:algorithms}
\begin{center}
\begin{tabular}{p{80pt} p{200pt} p{200pt}}
\hline
        &	\multicolumn{1}{c}{Uniform+Adaptive$^2$ ~~}	& \multicolumn{1}{c}{~~Near-Optimal+Adaptive}  \\
\hline
Time    & \multicolumn{1}{c}{~~~$\OM( n^2 c )$~~~}
        & \multicolumn{1}{c}{~~~$\OM( n^2 c  +n k^3 \epsilon^{-2/3} )$ }  \\
Memory  & \multicolumn{1}{c}{$\OM\big( n c \big)$}
        & \multicolumn{1}{c}{$\OM\big( n c \big)$} \\
\#Passes& \multicolumn{1}{c}{$2$}
        & \multicolumn{1}{c}{$4$} \\
\#Columns & \multicolumn{1}{c}{$\OM\big( k \epsilon^{-1}+ \mu_k  k \log k \big)$}
        & \multicolumn{1}{c}{$\OM\big( k \epsilon^{-1} )$}\\
Implement  & \multicolumn{1}{c}{Easy to implement}
        & \multicolumn{1}{c}{Hard to implement}  \\
\hline
\end{tabular}
\end{center}
\end{table}

Table~\ref{tab:algorithms} presents comparisons between the near-optimal+adaptive algorithm and our uniform+adaptive$^2$ algorithm
over the time cost, memory cost, number of passes through $\K$,
the number of columns required to attain $1+\epsilon$ relative-error bound, and the hardness of implementation.
Our algorithm is more time-efficient and pass-efficient than the near-optimal+adaptive algorithm,
and the memory costs of the two algorithms are the same.
To attain the same error bound, our algorithm needs to select $c=\OM\big( k \epsilon^{-1}+ \mu_k  k \log k \big)$ columns,
which is a little larger than that of the near-optimal+adaptive algorithm.


\subsection{Discussions} \label{sec:open_problems}

The two algorithms discussed in the above have strong theoretical guarantees,
but they are not efficient enough for large-scale applications.
First, their time complexities are quadratic in $n$.
Second, they require the full observation of $\K$.
In fact, at present no existing column selection algorithm satisfies the three properties simultaneously:
    \vspace{-2mm}
\begin{enumerate}
\item
    the time and memory costs are $\OM (n)$;
    \vspace{-2mm}
\item
    only $\OM (n)$ entries of $\K$ need to be observed;
    \vspace{-2mm}
\item
    relative-error bound holds in expectation or with high probability.
    \vspace{-2mm}
\end{enumerate}
It is interesting to find such an algorithm,
and it remains an open problem.

Nevertheless, uniform+adaptive$^2$ is a promising column selection algorithm
for it may be adapted to satisfy the above three properties.
The drawback of uniform+adaptive$^2$ is that computing the adaptive sampling probability costs quadratic time and requires the full observation of $\K$.
There may be remedies for this problem.
The adaptive-partial algorithm in \citep{kumar2012sampling} satisfies the first two properties,
but it lacks theoretical analysis.
Another possibility is to first uniformly sample $o(n)$ columns and then down-sample to $\OM (k/\epsilon)$ columns by adaptive sampling,
which we describe in Algorithm~\ref{alg:incomplete_uniform_adaptive2}.
In this way, the first two properties can be satisfied, and it may be theoretically explained under the incoherent matrix assumption.
We do not implement such heuristics for their theoretical property is completely unknown and they are beyond the scope of this paper.

\begin{algorithm}[t]
   \caption{The Incomplete Uniform+Adaptive${}^2$ Algorithm.}
   \label{alg:incomplete_uniform_adaptive2}
\algsetup{indent=2em}
\begin{algorithmic}[1]
   \STATE {\bf Input:} part of an $n\times n$ symmetric matrix $\K$.
   \STATE {\bf Uniform Sampling.} Uniformly sample $c_1$ columns of $\K$ without replacement to construct $\C_1 $;
   \STATE {\bf Adaptive Sampling.} Uniformly sample $o (n)$ columns of $\K$ to form $\K'$;
            then sample $c_2 $ columns of $\K'$ to construct $\C_2$
            using the adaptive sampling algorithm (Algorithm \ref{alg:adaptive}) according to the residual $\K' - \PM_{\C_1} (\K')$;
   \STATE {\bf Adaptive Sampling.} Uniformly sample $o (n)$ columns of $\K$ to form $\K''$;
            then sample $c_3 $ columns of $\K''$ to construct $\C_3$
            using the adaptive sampling algorithm (Algorithm \ref{alg:adaptive}) according to the residual $\K'' - \PM_{[\C_1,\;\C_2]} ( \K'' )$;
   \RETURN $\C = [\C_1 , \C_2 , \C_3]$.
\end{algorithmic}
\end{algorithm}

\begin{table*}[t]\setlength{\tabcolsep}{0.3pt}
\caption{A summary of the datasets for kernel approximation.}
\label{tab:datasets}
\begin{center}
\begin{small}
\begin{tabular}{l c c c c c c }
\hline
	{\bf Dataset}  &~~MNIST~~&~~Letters~&~PenDigit~&~Cpusmall~&~Mushrooms\\
	\hline
    {\bf \#Instance}  & $60,000$        &$15,000$ & 10,992  & $8,192$ &  $8,124$ \\
    {\bf \#Attribute} & $780$           & $16$    &  16     & $12$    &   $112$  \\
$\gamma$ ($\eta=0.5$) &  $\approx 1.50$ & $0.155$ & $0.045$ & $0.031$ &  $0.850$ \\
$\gamma$ ($\eta=0.9$) &  $\approx 2.30$ & $0.290$ & $0.073$ & $0.057$ &  $1.140$ \\
\hline
\end{tabular}
\end{small}
\end{center}
\end{table*}

\begin{figure*}[ht]
\begin{center}
\centering
\includegraphics[width=0.98\textwidth]{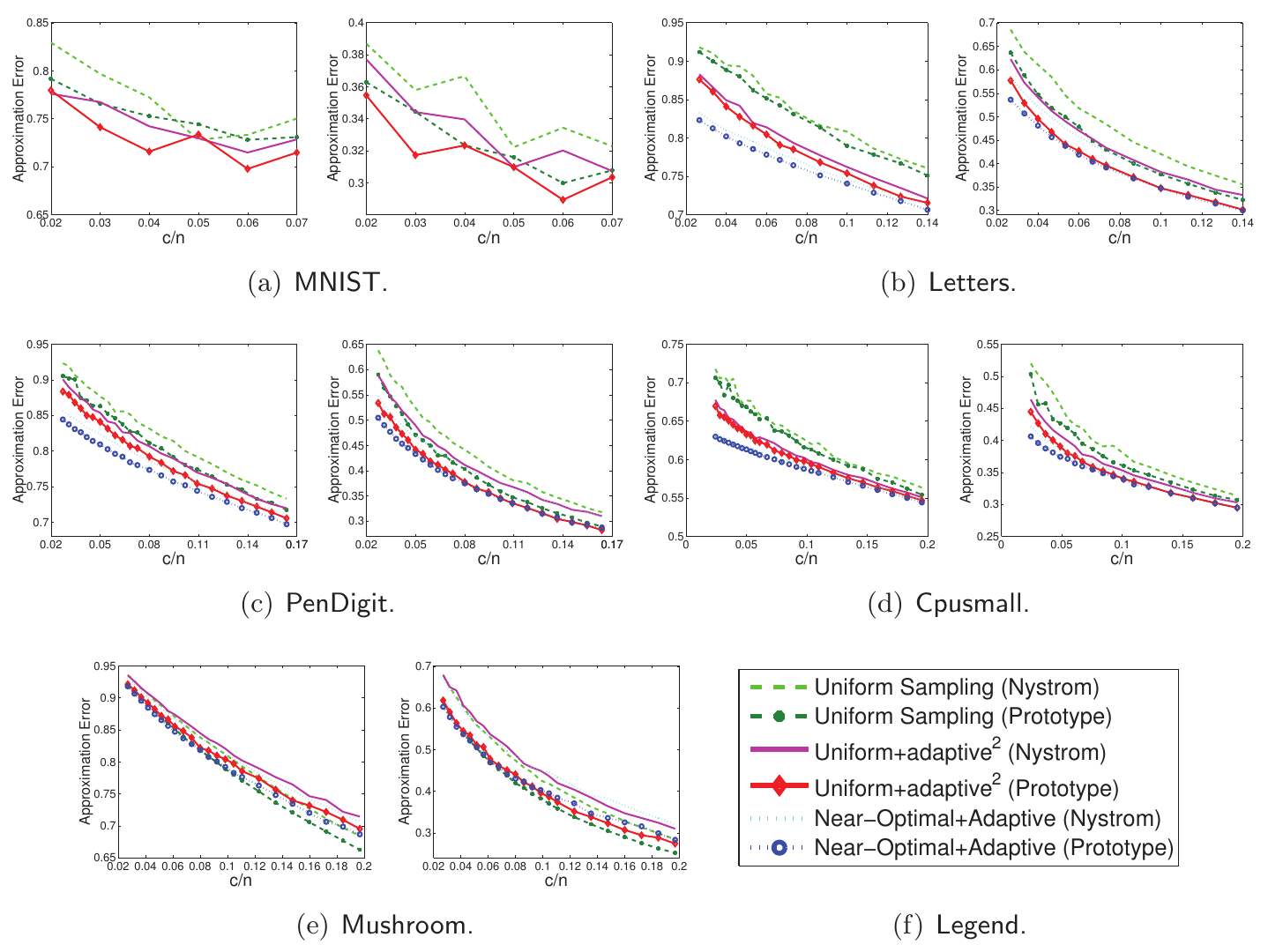}
\end{center}
   \caption{The ratio $\frac{c}{n}$ against the error ratio defined in \eqref{eq:def_approximation_error}.
   In each subfigure, the left corresponds to the RBF kernel matrix with $\eta=0.5$, and the right corresponds to $\eta=0.9$,
   where $\eta$ is defined in (\ref{eq:eigenvalue_ratio}).}
\label{fig:algorithm_memory}
\end{figure*}

\begin{figure*}[ht]
\begin{center}
\centering
\includegraphics[width=0.98\textwidth]{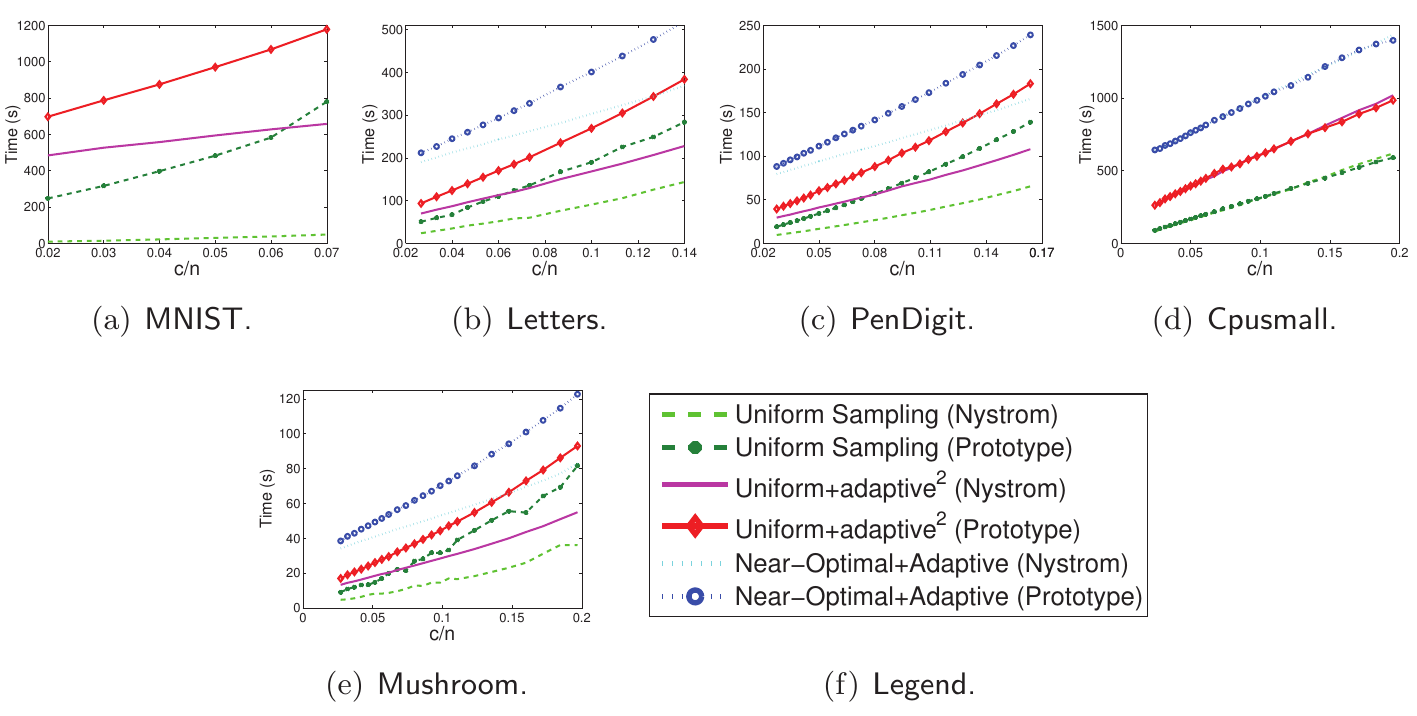}
\end{center}
   \caption{The growth of the average elapsed time in $\frac{c}{n}$.}
\label{fig:algorithm_time}
\end{figure*}


\section{Experiments on the Column Sampling Algorithms}  \label{sec:algorithm:experiments}

We empirically conduct comparison among three column selection algorithms---uniform sampling, uniform + adaptive$^2$, and the near-optimal + adaptive sampling algorithm.


\subsection{Experiment Setting} \label{sec:experiment_setting}

We perform experiments on several datasets collected on the LIBSVM website\footnote{http://www.csie.ntu.edu.tw/{$\sim$}cjlin/libsvmtools/datasets/}
where the data are scaled to [0,1].
We summarize the datasets in Table~\ref{tab:datasets}.

For each dataset, we generate a radial basis function (RBF) kernel matrix $\K$ defined by
$k_{ij} = \exp( -\frac{1}{2 \gamma^2} \|\x_i - \x_j\|_2^2 )$.
Here $\gamma>0$ is the scaling parameter;
the larger the scaling parameter $\gamma$ is, the faster the spectrum of the kernel matrix decays \citep{gittens2013revisiting}.
The previous work has shown that for the same dataset,
with different settings of $\gamma$, the sampling algorithms have very different performances.
Instead of setting $\gamma$ arbitrarily, we set $\gamma$ in the following way.

Letting $p = \lceil 0.05 n \rceil$,
we define
\begin{equation} \label{eq:eigenvalue_ratio}
\eta \; \triangleq \; \frac{ \sum_{i=1}^{p} \lambda_i^2 (\K) }{ \sum_{i=1}^{n} \lambda_i^2 (\K) }
\; = \; \frac{ \|\K_p \|_F^2 }{ \| \K \|_F^2 } ,
\end{equation}
which denotes the ratio of the top $5\%$ eigenvalues of the kernel matrix $\K$ to the all eigenvalues.
In general, a large $\gamma$ results in a large $\eta$.
For each dataset, we use two different settings of $\gamma$ such that $\eta=0.5$ or $\eta = 0.9$.

The models and algorithms are all implemented in MATLAB.
We run the algorithms on a workstation with Intel Xeon
2.40GHz CPUs, 24GB memory, and 64bit Windows Server 2008 system.
To compare the running time, we set MATLAB in single thread mode by the command ``$\mathrm{maxNumCompThreads(1)}$''.
In the experiments we do not keep $\K$ in memory.
We use a variant of Algorithm~\ref{alg:memory_cost}---we compute and store one block, instead of one column, of $\K$ at a time.
We keep at most $1,000$ columns of $\K$ in memory at a time.

We set the target rank $k$ to be $k= \lceil n/100\rceil$ in all the experiments unless otherwise specified.
We evaluate the performance by
\begin{eqnarray} \label{eq:def_approximation_error}
\textrm{Approximation Error} \; = \; {\| \K - \tilde{\K}\|_F} / {\|\K\|_F},
\end{eqnarray}
where $\tilde{\K}$ is the approximation generated by each method.

To evaluate the quality of the approximate rank-$k$ eigenvalue decomposition,
we use {\it misalignment} to indicate the distance between
the true eigenvectors $\U_k$ ($n\times k$) and the approximate eigenvectors $\tilde\V_k$ ($n\times k$):
\begin{eqnarray} \label{eq:def_misalignment}
\textrm{Misalignment}
\; = \;
\frac{1}{k}
\big\| \U_k - \tilde\V_k \tilde\V_k^T \U_k \big\|_F^2
\; \in \; [0, 1].
\end{eqnarray}

\begin{figure*}[ht]
\begin{center}
\centering
\includegraphics[width=0.98\textwidth]{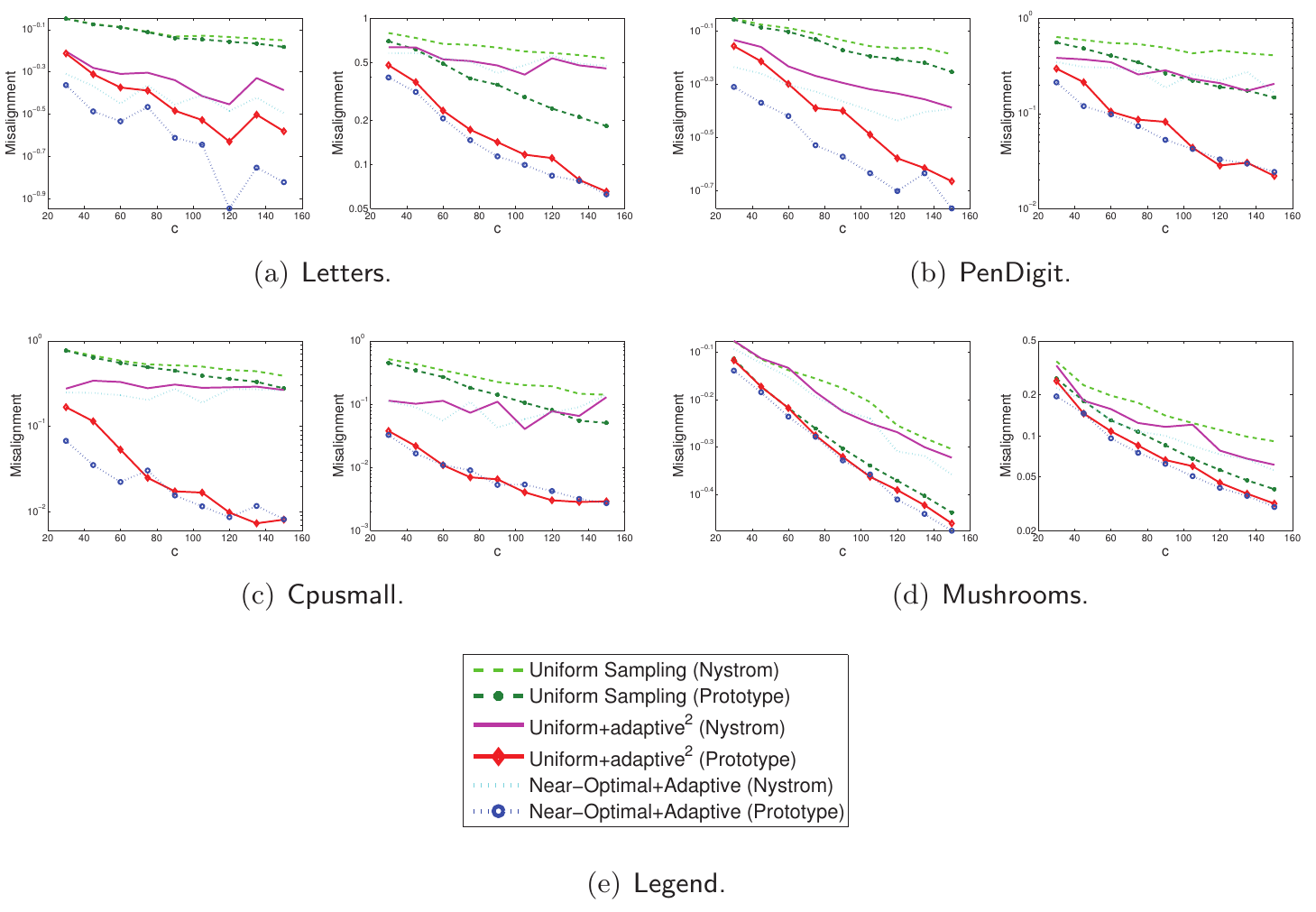}
\end{center}
   \caption{The number of selected columns $c$ against the misalignment (log-scale) defined in \eqref{eq:def_misalignment}.
   In each subfigure, the left corresponds to the RBF kernel matrix with $\eta=0.5$, and the right corresponds to $\eta=0.9$,
   where $\eta$ is defined in (\ref{eq:eigenvalue_ratio}).}
\label{fig:kpca_memory}
\end{figure*}

\begin{figure*}[ht]
\begin{center}
\centering
\includegraphics[width=0.98\textwidth]{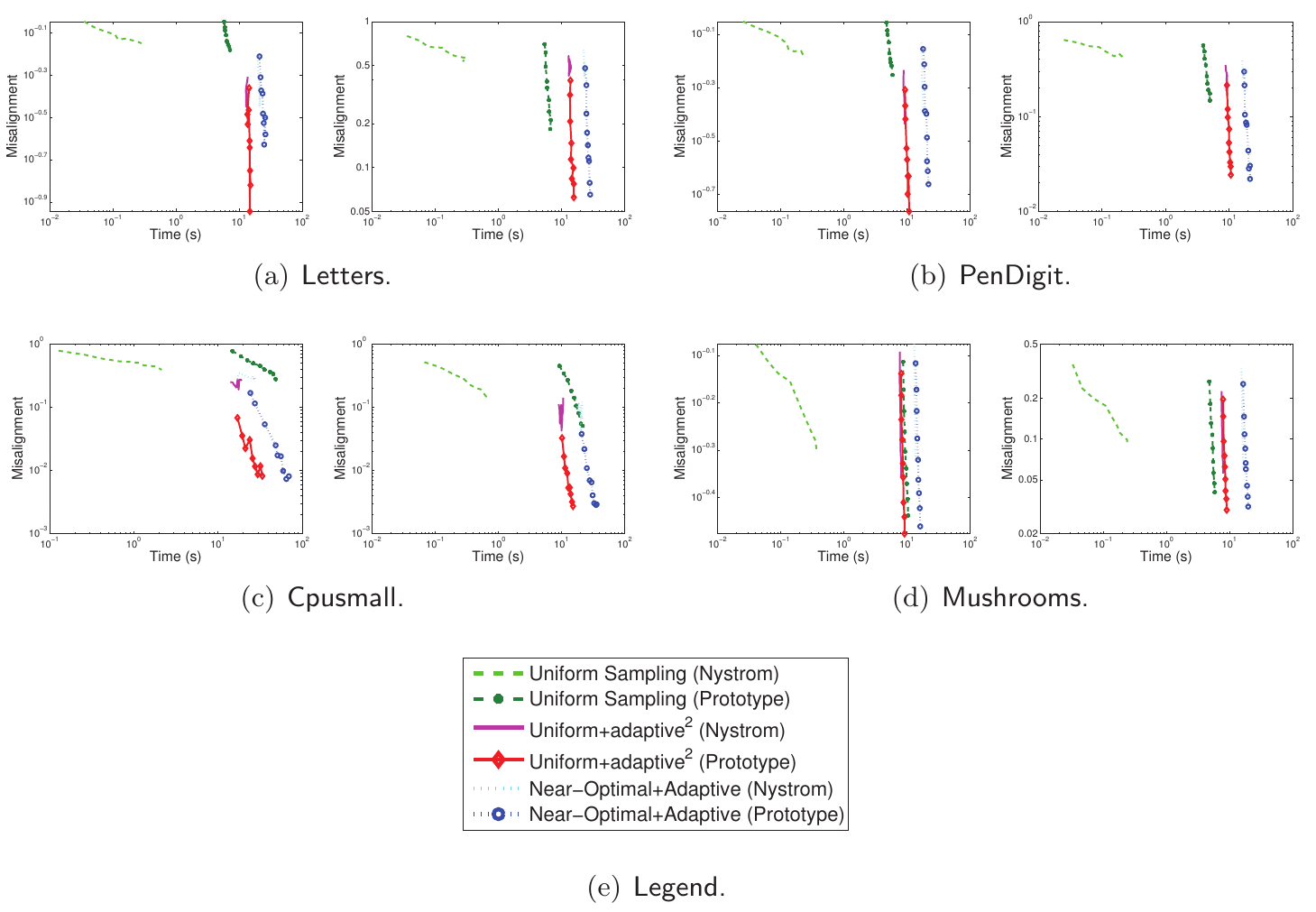}
\end{center}
   \caption{The elapsed time (log-scale) against the misalignment (log-scale) defined in \eqref{eq:def_misalignment}.
   In each subfigure, the left corresponds to the RBF kernel matrix with $\eta=0.5$, and the right corresponds to $\eta=0.9$,
   where $\eta$ is defined in (\ref{eq:eigenvalue_ratio}).}
\label{fig:kpca_time}
\end{figure*}


\subsection{Matrix Approximation Accuracy}

In the first set of experiments, we compare the matrix approximation quality using
the Frobenius norm approximation error defined in \eqref{eq:def_approximation_error} as the metric.

Every time when we do column sampling, we repeat each sampling algorithm $10$ times and record the minimal approximation error of the $10$ repeats.
We report the average elapsed time of the $10$ repeat rather than the total elapsed time because the $10$ repeats can be done in parallel on $10$ machines.
We plot $c$ against the approximation error in Figure~\ref{fig:algorithm_memory}.
For the kernel matrices with $\eta=0.9$, we plot $c$ against the average elapsed time in Figure~\ref{fig:algorithm_time};
for $\eta = 0.5$, the curve of the running time is very similar, so we do not show it.

The results show that our uniform+adaptive$^2$ algorithm achieves accuracy
comparable with the near-optimal+adaptive algorithm.
Especially, when $c$ is large, these two algorithms have virtually the same accuracy,
which agrees with our analysis:
a large $c$ implies a small error term $\epsilon$, and the error bounds of the two algorithms coincide when $\epsilon$ is small.
As for the running time, we can see that our uniform+adaptive$^2$ algorithm is much more efficient than the near-optimal+adaptive algorithm.

Particularly, the MNIST dataset has $60,000$ instances, and the $60,000\times 60,000$ kernel matrix $\K$ does not fit in memory.
The experiment shows that neither the prototype model nor the uniform+adaptive$^2$ algorithm require keeping $\K$ in memory.


\subsection{Kernel Principal Component Analysis} \label{sec:experiment_kpca}

In the second set of experiment,
we apply the kernel approximation methods to approximately compute the rank $k=3$ eigenvalue decomposition of the RBF kernel matrix.
We use the {\it misalignment} defined in \eqref{eq:def_misalignment} as the metric,
which reflects the distance between the true and the approximate eigenvectors.
We report the average misalignment of the 20 repeats.
We do not conduct experiments on the MNIST dataset because the true eigenvectors are too expensive to compute.

To evaluate the memory efficiency, we plot $c$ against the misalignment in Figure~\ref{fig:kpca_memory}.
The results show that the two non-uniform sampling algorithms are significantly better than uniform sampling.
The performance of our uniform+adaptive$^2$ algorithm is nearly the same to the near-optimal+adaptive algorithm.

To evaluate the time efficiency, we plot the elapsed time against the misalignment in Figure~\ref{fig:kpca_time}.
Though the uniform sampling algorithm is the most efficient in most cases, its accuracy is unsatisfactory.
In terms of time efficiency, the uniform+adaptive$^2$ algorithm is better than the near-optimal+adaptive algorithm.

The experiment on the kernel PCA shows that the prototype model with the uniform + adaptive$^2$ column sampling algorithm achieves the best performance.
Though the \nystrom method with uniform sampling is the most efficient,
its resulting misalignment is worse by an order of magnitude.
Therefore, when applied to speedup eigenvalue decomposition, the \nystrom method may not be a good choice,
especially when high accuracy is required.


\subsection{Separating the Time Costs}

Besides the total elapsed time,
the readers may be interested in the time cost of each step, especially when the data do no fit in memory.
We run the Nystr\"om method and the prototype model, each with uniform+adaptive$^2$ column sampling algorithm,
on the MNIST dataset with $\gamma = 2.3$ (see Table~\ref{tab:datasets}).
Notice that the $60,000\times 60,000$ kernel matrix does not fit in memory,
so we keep at most $1,000$ columns of the kernel matrix in memory at a time.
We set $c = 50$ or $500$ and repeat the procedure $20$ times and record the average elapsed time.
In Figure~\ref{fig:kernel_vs_other} we separately show the time costs of the uniform+adaptive$^2$ algorithm and the computation of the intersection matrices.
In addition, we separate the time costs of evaluating the kernel functions and all the other computations (e.g.\  SVD of $\C$ and matrix multiplications).

\begin{figure*}[ht]
\begin{center}
\centering
\subfigure[$c = 50$]{\includegraphics[width=0.48\textwidth]{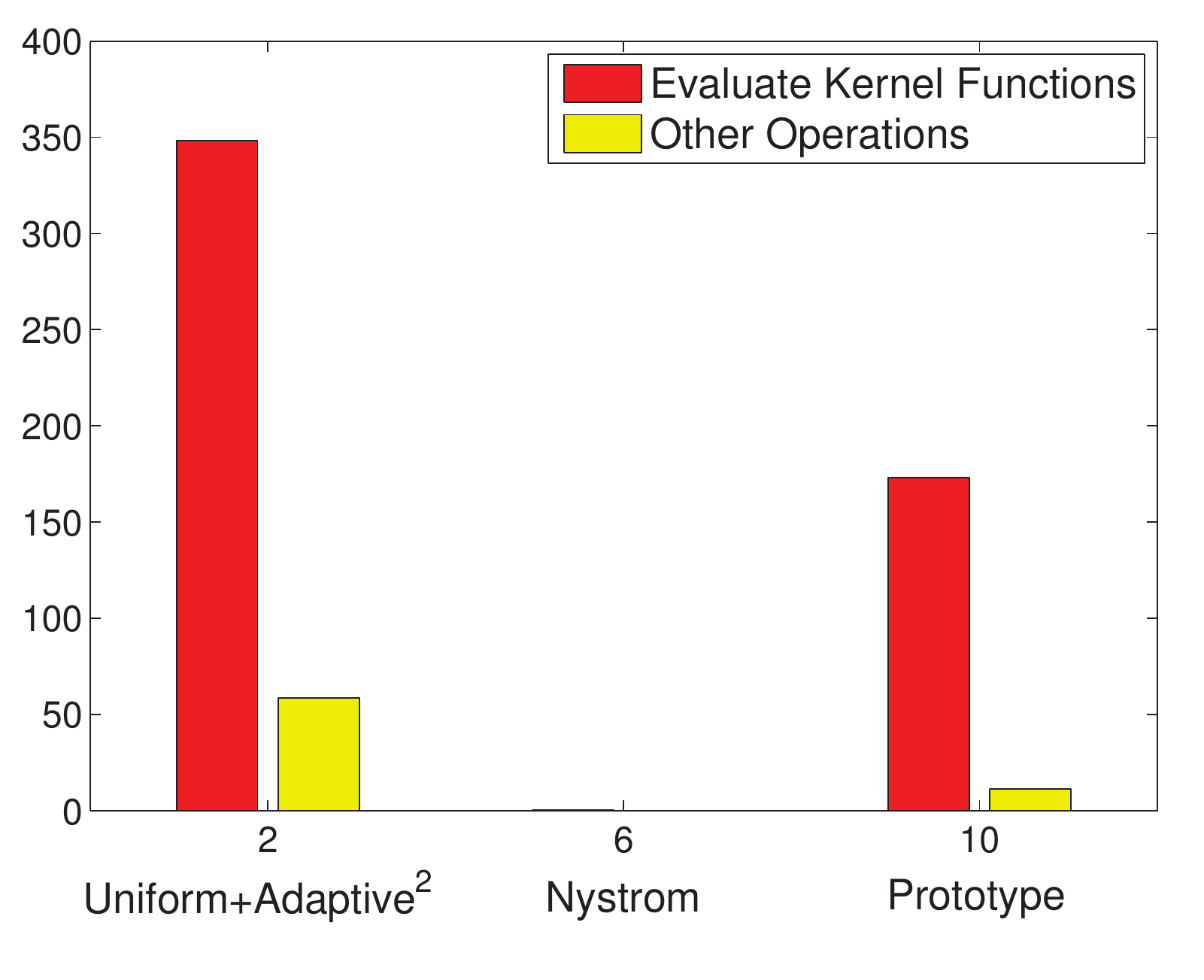}}
\subfigure[$c = 500$]{\includegraphics[width=0.48\textwidth]{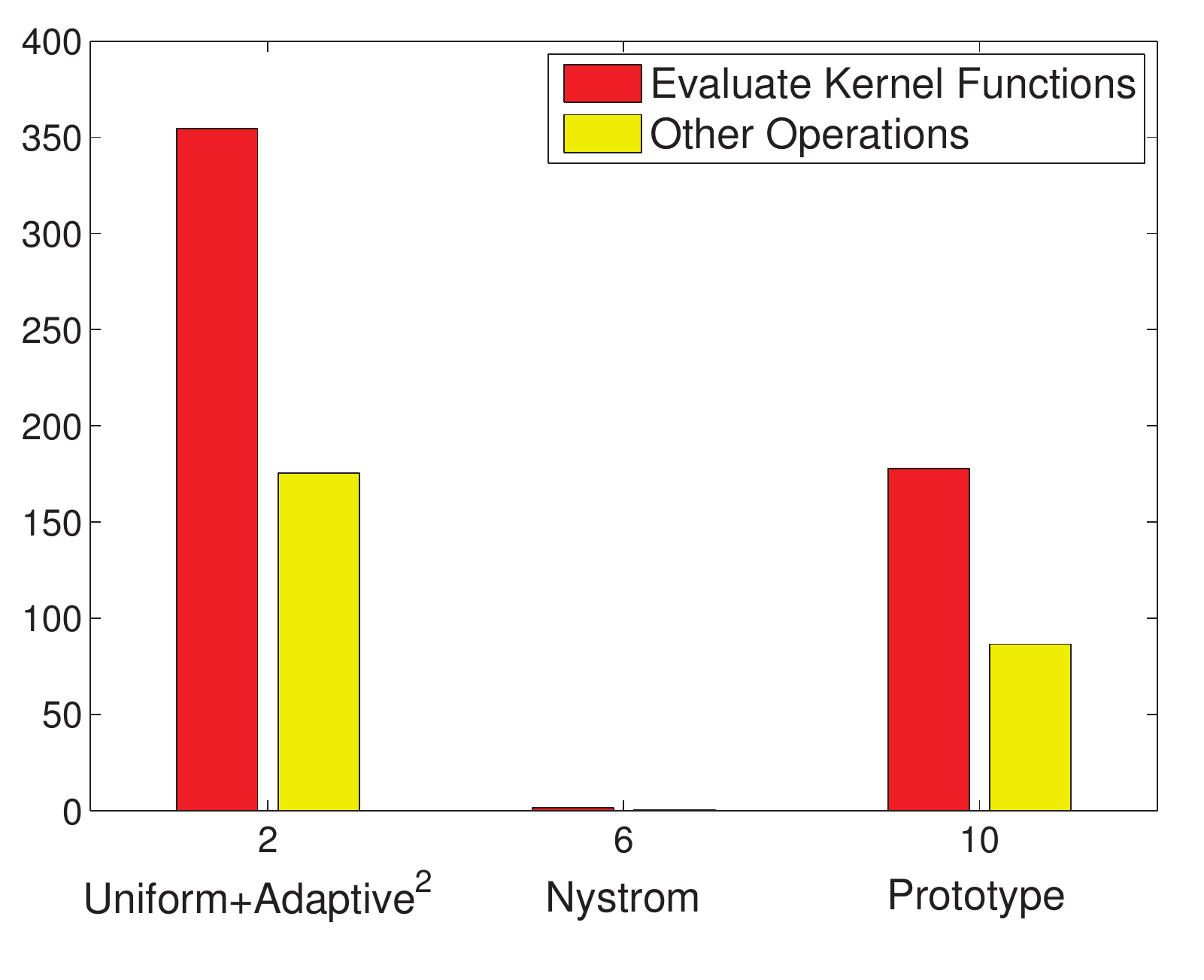}}
\end{center}
   \caption{The time costs (s) of the uniform+adaptive$^2$ algorithm and the computation of the intersection matrices.}
\label{fig:kernel_vs_other}
\end{figure*}

We can see from Figure~\ref{fig:kernel_vs_other} that when $\K$ does not fit in memory,
the computation of the kernel matrix contributes to the most of the computations.
By comparing the two subfigures in Figure~\ref{fig:kernel_vs_other},
we can see that as $c$ increases, the costs of computing the kernel matrix barely change,
but the costs of other matrix operations significantly increase.


\section{The Spectral Shifting Model} \label{sec:sspbs}

All the low-rank approximation methods work well only when the bottom eigenvalues of $\K$ are near zero.
In this section we develop extensions of the three SPSD matrix approximation models to tackle matrices with relatively big bottom eigenvalues.
We call the proposed method the {\it spectral shifting (SS) model} and describe it in Algorithm~\ref{alg:ss_nystrom}.
We show that the SS model has stronger error bound than the prototype model.

In Section~\ref{sec:sspbs:model} we formulate the SS model.
In Section~\ref{sec:sspbs:optimization} we study SS from an optimization perspective.
In Section~\ref{sec:sspbs:analysis} we show that SS has better error bound than the prototype model.
Especially, with the near-optimal+adaptive column sampling algorithm,
SS demonstrates much stronger error bound than the existing matrix approximation methods.
In Section~\ref{sec:sspbs:algorithm} we provide an efficient algorithm for computing the initial spectral shifting term.
In Section~\ref{sec:sspbs:combine} we discuss how to combine spectral shifting with other kernel approximation methods.

\begin{algorithm}[tb]
   \caption{The Spectral Shifting Method.}
   \label{alg:ss_nystrom}
\algsetup{indent=2em}
\begin{algorithmic}[1]
   \STATE {\bf Input:} an $n\times n$ SPSD matrix $\K$, a target rank $k$, the number of sampled columns $c$, the oversampling parameter $l$.
   \STATE // {\it (optional) approximately do the initial spectral shifting} \label{alg:ss_nystrom:delta_begin}
   \STATE $\Ome \longleftarrow n\times l$ standard Gaussian matrix;
   \STATE $\Q \longleftarrow$ the $l$ orthonormal basis of $\K \Ome \in \RB^{n\times l}$; \label{alg:ss_nystrom:Y}
   \STATE $s \longleftarrow$ sum of the top $k$ singular values of $\Q^T \K \in \RB^{l\times n}$;
   \STATE $\tilde{\delta} = \frac{1}{n-k} \big(\tr(\K) - s\big) \approx \bar{\delta}$; \label{alg:ss_nystrom:delta_end}
   \STATE $\bar{\K} \leftarrow \K - \tilde{\delta} \I_n \in \RB^{n\times n}$;
   \STATE // {\it perform sketching, e.g.\ random projection or column selection}
   \STATE $\bar{\C} = \bar\K \PP$, where $\PP$ is an $n\times c$ random projection or selection matrix;
   \STATE Optional: replace $\bar\C$ by its orthonormal bases;
   \STATE // {\it compute the spectral shifting parameter and the intersection matrix}
   \STATE $\delta^{\textrm{ss}} \longleftarrow \frac{1}{n-\rk(\bar{\C})}\Big( \tr(\K) - \tr\big(\bar{\C}^\dag\K \bar{\C} \big) \Big)$;
   \STATE $\U^{\textrm{ss}} \longleftarrow \bar{\C}^\dag\K(\bar{\C}^\dag)^T - \delta^{\textrm{ss}}(\bar{\C}^T \bar{\C})^{\dag}$;
   \RETURN the approximation $\tilde{\K}_c^{\textrm{ss}} = \bar{\C} \U^{\textrm{ss}} \bar{\C}^T + \delta^{\textrm{ss}} \I_n$.
\end{algorithmic}
\end{algorithm}


\subsection{Model Formulation} \label{sec:sspbs:model}

The spectral shifting (SS) model is defined by
\begin{eqnarray} \label{eq:def_sspbs}
\Kss \; = \; \bar{\C} \U^{\textrm{ss}} \bar{\C}^T + \delta^{\textrm{ss}} \I_n .
\end{eqnarray}
Here $\delta^{\textrm{ss}}\geq 0$ is called the spectral shifting term.
This approximation is computed in three steps.
Firstly, (approximately) compute the initial spectral shifting term
\begin{eqnarray} \label{eq:initial_ss_term}
\bar{\delta}
\; = \; \frac{1}{n-k} \bigg( \tr(\K) - \sum_{j=1}^k \sigma_j (\K) \bigg) ,
\end{eqnarray}
and then perform spectral shifting $\bar{\K} = \K - \bar{\delta} \I_n$, where $k \leq c$ is the target rank.
This step is optional.
Due to Theorem~\ref{thm:ss_closed_form} and Remark~\ref{remark:ss_closed_form},
SS is better than the prototype model even if $\bar{\delta} = 0$;
however, without this step, the quantity of the improvement contributed by SS is unknown.
Secondly, draw a column selection matrix $\PP$ and form the sketch $\bar\C = \bar\K \PP$.
Finally, with $\bar{\C}$ at hand, compute $\delta^{\textrm{ss}}$ and $\U^{\textrm{ss}}$ by
\begin{eqnarray}  \label{eq:ss_closed_form}
\delta^{\textrm{ss}} & = & \frac{1}{n- \rk(\bar{\C})}\Big( \tr(\K) - \tr\big({\bar{\C}}^\dag\K {\bar{\C}} \big) \Big), \nonumber \\
\U^{\textrm{ss}} & = & {\bar{\C}}^\dag \K ( {\bar{\C}}^\dag)^T - \delta^{\textrm{ss}}({\bar{\C}}^T {\bar{\C}})^{\dag}.
\end{eqnarray}
We will show that $\Kss$ is positive (semi)definite if $\K$ is positive (semi)definite.

However, when the bottom eigenvalues of $\K$ are small, the computed spectral shifting term is small,
where there is little difference between SS and the prototype model,
and the spectral shifting operation is not advised.


\subsection{Optimization Perspective} \label{sec:sspbs:optimization}

The SS model is an extension of the prototype model from the optimization perspective.
Given an SPSD matrix $\K$, the prototype model computes the sketch $\C = \K \PP$ and the intersection matrix
\begin{equation}  \label{eq:def_pbs_intersection}
\U^\star \; = \; \C^\dag \K (\C^\dag)^T
\; = \; \argmin_{\U} \| \K - \C \U \C^T \|_F^2.
\end{equation}
Analogously, with the sketch $\bar\C = \bar{\K} \PP = \K \PP - \bar{\delta} \PP$ at hand,
SS is obtained by solving
\begin{equation}  \label{eq:def_ss_nystrom}
\big(\U^{\textrm{ss}} , \delta^{\textrm{ss}} \big)
\; = \;
\argmin_{\U, \delta} \big\| \K - \bar{\C} \U \bar{\C}^T - \delta \I_n \big\|_F^2,
\end{equation}
obtaining the intersection matrix $\U^{\textrm{ss}}$ and the spectral shifting term $\delta^{\textrm{ss}}$.
By analyzing the optimization problem \eqref{eq:def_ss_nystrom},
we obtain the following theorem.
Its proof is in Appendix~\ref{sec:proof_ss_closed_form}.

\begin{theorem} \label{thm:ss_closed_form}
The pair ($\delta^{\textrm{ss}}, \U^{\textrm{ss}}$) defined in (\ref{eq:ss_closed_form})
is the global minimizer of problem (\ref{eq:def_ss_nystrom}),
which indicates that using any other ($\delta, \U$) to replace $(\delta^{\textrm{ss}}, \U^{\textrm{ss}})$
results in a larger approximation error.
Furthermore, if $\K$ is positive (semi)definite,
then the approximation $\bar{\C} \U^{\textrm{ss}} \bar{\C}^T + \delta^{\textrm{ss}} \I_n$ is also positive (semi)definite.
\end{theorem}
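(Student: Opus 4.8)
The plan is to treat \eqref{eq:def_ss_nystrom} as what it is: an unconstrained least-squares problem in the pair $(\U,\delta)$ whose objective is a convex quadratic, so it suffices to exhibit one global minimizer and read off its value. First I would reduce the problem to a transparent block form by diagonalizing with respect to the range of $\bar{\C}$. Write the condensed SVD $\bar{\C} = \U_{\bar{\C}}\Si_{\bar{\C}}\V_{\bar{\C}}^T$ with $r=\rk(\bar{\C})$, complete $\U_{\bar{\C}}$ to an $n\times n$ orthogonal matrix $\Q=[\,\U_{\bar{\C}},\,\Q_\perp\,]$, and use unitary invariance of $\|\cdot\|_F$ to replace $\K - \bar{\C}\U\bar{\C}^T - \delta\I_n$ by $\Q^T(\cdots)\Q$. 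Since $\Q_\perp^T\bar{\C}=\0$, the matrix $\Q^T\bar{\C}\U\bar{\C}^T\Q$ has only a nonzero $(1,1)$ block $\Si_{\bar{\C}}\V_{\bar{\C}}^T\U\V_{\bar{\C}}\Si_{\bar{\C}}$, while $\Q^T\delta\I_n\Q=\delta\I_n$ stays diagonal. Writing $\M_{11}=\U_{\bar{\C}}^T\K\U_{\bar{\C}}$, $\M_{22}=\Q_\perp^T\K\Q_\perp$ and off-diagonal blocks $\M_{12},\M_{21}$, the objective splits as $\|\M_{11}-\Si_{\bar{\C}}\V_{\bar{\C}}^T\U\V_{\bar{\C}}\Si_{\bar{\C}}-\delta\I_r\|_F^2$, a constant $\|\M_{12}\|_F^2+\|\M_{21}\|_F^2$, and $\|\M_{22}-\delta\I_{n-r}\|_F^2$.

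Next I would minimize this decoupled expression. For every fixed $\delta$ the first term can be driven to $0$: because $\Si_{\bar{\C}}$ is invertible and $\V_{\bar{\C}}$ has orthonormal columns, the map $\U\mapsto\V_{\bar{\C}}^T\U\V_{\bar{\C}}$ is onto the $r\times r$ matrices, so the choice $\V_{\bar{\C}}^T\U\V_{\bar{\C}}=\Si_{\bar{\C}}^{-1}(\M_{11}-\delta\I_r)\Si_{\bar{\C}}^{-1}$ annihilates it. What remains is the scalar least-squares $\|\M_{22}-\delta\I_{n-r}\|_F^2$, minimized at $\delta=\tr(\M_{22})/(n-r)$. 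It then remains only to match these rotated expressions with the claimed closed forms. Using $\bar{\C}^\dag=\V_{\bar{\C}}\Si_{\bar{\C}}^{-1}\U_{\bar{\C}}^T$ and $\V_{\bar{\C}}^T\V_{\bar{\C}}=\I_r$ gives $\tr(\bar{\C}^\dag\K\bar{\C})=\tr(\M_{11})$, and since $\tr(\K)=\tr(\M_{11})+\tr(\M_{22})$ this turns $\tr(\M_{22})/(n-r)$ into exactly $\delta^{\textrm{ss}}$; the same identities give $\bar{\C}^\dag\K(\bar{\C}^\dag)^T=\V_{\bar{\C}}\Si_{\bar{\C}}^{-1}\M_{11}\Si_{\bar{\C}}^{-1}\V_{\bar{\C}}^T$ and $(\bar{\C}^T\bar{\C})^\dag=\V_{\bar{\C}}\Si_{\bar{\C}}^{-2}\V_{\bar{\C}}^T$, so the stated $\U^{\textrm{ss}}$ satisfies $\V_{\bar{\C}}^T\U^{\textrm{ss}}\V_{\bar{\C}}=\Si_{\bar{\C}}^{-1}(\M_{11}-\delta^{\textrm{ss}}\I_r)\Si_{\bar{\C}}^{-1}$, i.e.\ it is one of the optimal $\U$'s. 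I would remark that only the product $\bar{\C}\U\bar{\C}^T$ and the scalar $\delta$ are pinned down, the pseudoinverse formula merely selecting the minimizer supported on $\mathrm{range}(\V_{\bar{\C}})$.

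For the semidefiniteness claim I would substitute $\U^{\textrm{ss}}$ back to collapse the approximation. Since $\bar{\C}\U^{\textrm{ss}}\bar{\C}^T=\U_{\bar{\C}}\Si_{\bar{\C}}(\V_{\bar{\C}}^T\U^{\textrm{ss}}\V_{\bar{\C}})\Si_{\bar{\C}}\U_{\bar{\C}}^T=\U_{\bar{\C}}(\M_{11}-\delta^{\textrm{ss}}\I_r)\U_{\bar{\C}}^T$, adding $\delta^{\textrm{ss}}\I_n$ gives the clean form $\Kss=\U_{\bar{\C}}\M_{11}\U_{\bar{\C}}^T+\delta^{\textrm{ss}}(\I_n-\U_{\bar{\C}}\U_{\bar{\C}}^T)$, where $\I_n-\U_{\bar{\C}}\U_{\bar{\C}}^T=\Q_\perp\Q_\perp^T\succeq\0$ is the projector onto $\mathrm{range}(\bar{\C})^\perp$. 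If $\K\succeq\0$ then $\M_{11}\succeq\0$ and $\M_{22}\succeq\0$, so $\delta^{\textrm{ss}}=\tr(\M_{22})/(n-r)\ge0$ and $\Kss$ is a sum of two PSD matrices; for $\K\succ\0$ both $\M_{11}\succ\0$ and $\delta^{\textrm{ss}}>0$, and splitting any $\x\neq\0$ into its components in $\mathrm{range}(\bar{\C})$ and its complement shows $\x^T\Kss\x>0$. The routine parts are the block split and the scalar minimization; the main obstacle I anticipate is the bookkeeping of the second paragraph — passing cleanly between the rotated optimality conditions and the pseudoinverse closed forms while respecting that $\U$ is fixed only up to directions outside $\mathrm{range}(\V_{\bar{\C}})$, and confirming $\delta^{\textrm{ss}}\ge0$ so that the semidefiniteness argument goes through.
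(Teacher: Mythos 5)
Your proof is correct, and its optimality half takes a genuinely different route from the paper's. The paper proceeds by matrix calculus: it sets $\partial f/\partial \U$ and $\partial f/\partial \delta$ to zero to derive the closed forms in (\ref{eq:ss_closed_form}), and then certifies global optimality by writing out the Hessian in $(\vect(\U),\delta)$ and showing the associated quadratic form equals $\sum_{i\neq j} y_{ij}^2 + \sum_{l}(y_{ll}+b)^2 \geq 0$ with $\Y = \bar{\C}\X\bar{\C}^T$, i.e.\ the objective is jointly convex, so the stationary point is a global minimum. Your rotation argument avoids calculus entirely: conjugating by $\Q = [\U_{\bar{\C}},\, \Q_\perp]$ and using unitary invariance decouples the objective into a $(1,1)$-block term that can be annihilated for every fixed $\delta$, a constant off-diagonal term, and a scalar least-squares problem in $\delta$; the closed forms are then matched via the pseudoinverse identities. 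This buys things the paper's proof leaves implicit: the minimal objective value, and the exact degree of non-uniqueness of the optimal $\U$ (only $\V_{\bar{\C}}^T \U \V_{\bar{\C}}$ is pinned down) --- which, as you note, means the theorem's phrase ``results in a larger approximation error'' should really be read as ``no smaller,'' since when $\rk(\bar{\C}) < c$ other optimal $\U$'s achieve the same error; the paper's own Hessian argument likewise only establishes non-strict minimality. The definiteness half of your argument is essentially the paper's: your collapsed form $\Kss = \U_{\bar{\C}} \M_{11} \U_{\bar{\C}}^T + \delta^{\textrm{ss}} (\I_n - \U_{\bar{\C}} \U_{\bar{\C}}^T)$ coincides with the paper's expression (\ref{eq:ss_spsd}), and your $\delta^{\textrm{ss}} = \tr(\M_{22})/(n-r) \geq 0$ is the paper's trace computation in different notation, with the same splitting argument for strict positive definiteness. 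One shared caveat: both your proof and the paper's implicitly assume $\rk(\bar{\C}) < n$ (otherwise $\delta^{\textrm{ss}}$ involves division by zero), which is harmless since $c \ll n$.
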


\begin{remark} \label{remark:ss_closed_form}
The optimization perspective indicates the superiority of SS.
Suppose we skip the initial spectral shifting step and simply set $\bar\delta = 0$.
Then $\bar\C = \C$.
If the constraint $\delta = 0$ is to the optimization problem \eqref{eq:def_ss_nystrom},
then \eqref{eq:def_ss_nystrom} will become identical to the prototype model \eqref{eq:def_pbs_intersection}.
Obviously, adding this constraint will make the optimal objective function value get worse,
so the optimal objective function value of \eqref{eq:def_ss_nystrom} is always less than or equal to \eqref{eq:def_pbs_intersection}.
Hence, without the initial spectral shifting step, SS is still more accurate than the prototype model.
\end{remark}


\subsection{Error Analysis} \label{sec:sspbs:analysis}

The following theorem indicates that the SS model with any spectral shifting term $\delta \in (0, \bar{\delta}]$
has a stronger bound than the prototype model.
The proof is in Appendix~\ref{sec:app:ss}.

\begin{theorem} \label{thm:superiority}
Suppose there is a sketching matrix $\PP \in \RB^{n\times c}$ such that
for any $n \times n$ {\it symmetric} matrix $\A$ and target rank $k$ ($\ll n$),
by forming $\C = \A \PP$,
the prototype model satisfies the error bound
\[
\big\| \A - \C \C^\dag \A (\C^\dag)^T \C^T \big\|_F^2
\; \leq \; \eta \, \big\|\A - \A_k \big\|_F^2
\]
for certain $\eta > 0$.
Let $\K$ be any $n \times n$ SPSD matrix, $\tilde\delta \in (0, \bar{\delta}]$ be the initial spectral shifting term
where $\bar{\delta}$ is defined in (\ref{eq:initial_ss_term}),
$\bar{\K} = \K - \tilde\delta \I_n$, $\bar\C = \bar\K \PP$,
and $\Kss$ be the SS model defined in \eqref{eq:def_sspbs}.
Then
\[
\big\| \K - \Kss \big\|_F^2
\; \leq \;
\eta \, \big\| \bar{\K} - \bar{\K}_k \|_F^2
\; \leq \; \eta \, \big\| \K - \K_k \|_F^2 .
\]
\end{theorem}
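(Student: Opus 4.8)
The plan is to establish the two inequalities in the statement separately. The first, $\| \K - \Kss \|_F^2 \leq \eta \, \| \bar\K - \bar\K_k \|_F^2$, will follow by feeding a carefully chosen \emph{suboptimal} point into the minimization \eqref{eq:def_ss_nystrom} and then invoking the prototype-model hypothesis applied to the shifted matrix $\bar\K$. The second, $\| \bar\K - \bar\K_k \|_F^2 \leq \| \K - \K_k \|_F^2$, is purely spectral and will reduce to the scalar inequality $\tilde\delta \leq 2\bar\delta$, which is guaranteed by the assumption $\tilde\delta \in (0, \bar\delta]$.

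For the first inequality, the crucial observation is that $\bar\C = \bar\K \PP$ and $\bar\K$ is symmetric (though possibly indefinite), so the hypothesis applies with $\A = \bar\K$, yielding $\big\| \bar\K - \bar\C \bar\C^\dag \bar\K (\bar\C^\dag)^T \bar\C^T \big\|_F^2 \leq \eta \, \| \bar\K - \bar\K_k \|_F^2$. I would then evaluate the objective of \eqref{eq:def_ss_nystrom} at the test point $\delta = \tilde\delta$ and $\U = \bar\C^\dag \bar\K (\bar\C^\dag)^T$. Because $\K = \bar\K + \tilde\delta \I_n$, the shift terms cancel and the residual collapses exactly: $\K - \bar\C \U \bar\C^T - \tilde\delta \I_n = \bar\K - \bar\C \bar\C^\dag \bar\K (\bar\C^\dag)^T \bar\C^T$. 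Since Theorem~\ref{thm:ss_closed_form} guarantees that $(\U^{\textrm{ss}}, \delta^{\textrm{ss}})$ is the global minimizer, the value $\| \K - \Kss \|_F^2$ is no larger than the objective at this test point, which is precisely the prototype residual for $\bar\K$ and is therefore bounded by $\eta \, \| \bar\K - \bar\K_k \|_F^2$.

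For the second inequality, I would write the eigenvalues of the SPSD matrix $\K$ as $\lambda_1 \geq \cdots \geq \lambda_n \geq 0$ with orthonormal eigenvectors $\u_i$, so that $\bar\K = \sum_i (\lambda_i - \tilde\delta)\, \u_i \u_i^T$. Bounding the best rank-$k$ error of $\bar\K$ from above by the error of the specific rank-$k$ approximant $\sum_{i=1}^k (\lambda_i - \tilde\delta)\, \u_i \u_i^T$ (legitimate by Eckart--Young, and of rank at most $k$ since $\lambda_i - \tilde\delta \geq \lambda_k - \bar\delta \geq 0$ for $i \leq k$), I get $\| \bar\K - \bar\K_k \|_F^2 \leq \sum_{i>k} (\lambda_i - \tilde\delta)^2$. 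Comparing with $\| \K - \K_k \|_F^2 = \sum_{i>k} \lambda_i^2$ and expanding the square reduces the claim to $(n-k)\,\tilde\delta \leq 2 \sum_{i>k} \lambda_i$, that is $\tilde\delta \leq 2\bar\delta$, where $\bar\delta = \frac{1}{n-k}\sum_{i>k}\lambda_i$ is exactly the initial shifting term \eqref{eq:initial_ss_term}. This is immediate from $\tilde\delta \leq \bar\delta$.

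The main obstacle is the first inequality: the non-obvious move is to recognize the test point $\big(\tilde\delta,\, \bar\C^\dag \bar\K (\bar\C^\dag)^T\big)$ for which the spectral shift cancels and the SS residual degenerates precisely into a prototype residual for $\bar\K$, so that the assumed error bound transfers. The second inequality is routine once the suboptimal rank-$k$ approximant is chosen and the constraint $\tilde\delta \leq \bar\delta$ is invoked; the only subtlety to check is that keeping the top-$k$ eigencomponents of $\bar\K$ genuinely has rank at most $k$, which follows from $\bar\delta \leq \lambda_{k+1} \leq \lambda_k$.
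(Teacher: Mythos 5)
Your proposal is correct and follows essentially the same route as the paper: your test point $\big(\tilde\delta,\;\bar\C^\dag\bar\K(\bar\C^\dag)^T\big)$ is exactly the paper's ``inexact spectral shifting'' (ISS) model, whose suboptimality relative to $(\U^{\textrm{ss}},\delta^{\textrm{ss}})$ (Theorem~\ref{thm:ss_closed_form}) collapses the SS residual into the prototype residual for the symmetric matrix $\bar\K$, to which the hypothesis applies. For the second inequality, the paper bounds $\|\bar\K-\bar\K_k\|_F^2$ by $\sum_{j>k}\big(\lambda_j(\K)-\tilde\delta\big)^2$ and invokes convexity of this quadratic in $\tilde\delta$ with minimizer $\bar\delta$, whereas you expand the square and reduce to $\tilde\delta\le 2\bar\delta$; these are the same elementary fact about the same parabola, so the arguments coincide in substance.
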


We give an example in Figure~\ref{fig:ss} to illustrate the intuition of spectral shifting.
We use the toy matrix $\K$: an $n\times n$ SPSD matrix whose the $t$-th eigenvalue is $1.05^{-t}$.
We set $n=100$ and $k=30$, and hence $\bar{\delta}=0.064$.
From the plot of the eigenvalues we can see that
the ``tail'' of the eigenvalues becomes thinner after the spectral shifting.
Specifically, $\|\K- \K_k\|_F^2 = 0.52$ and $\|\bar{\K}- \bar{\K}_k\|_F^2 \leq 0.24$.
From Theorem~\ref{thm:superiority} we can see that if $\| \K - \C \C^\dag \K (\C^\dag)^T \C^T \|_F \leq 0.52\eta$,
then $\| \K - \Kss \|_F \leq 0.24 \eta$.
This indicates that SS has much stronger error bound than the prototype model.

\begin{figure}[!ht]
\begin{center}
\centering
\subfigure[{Before spectral shifting.}]{\includegraphics[width=0.4\textwidth]{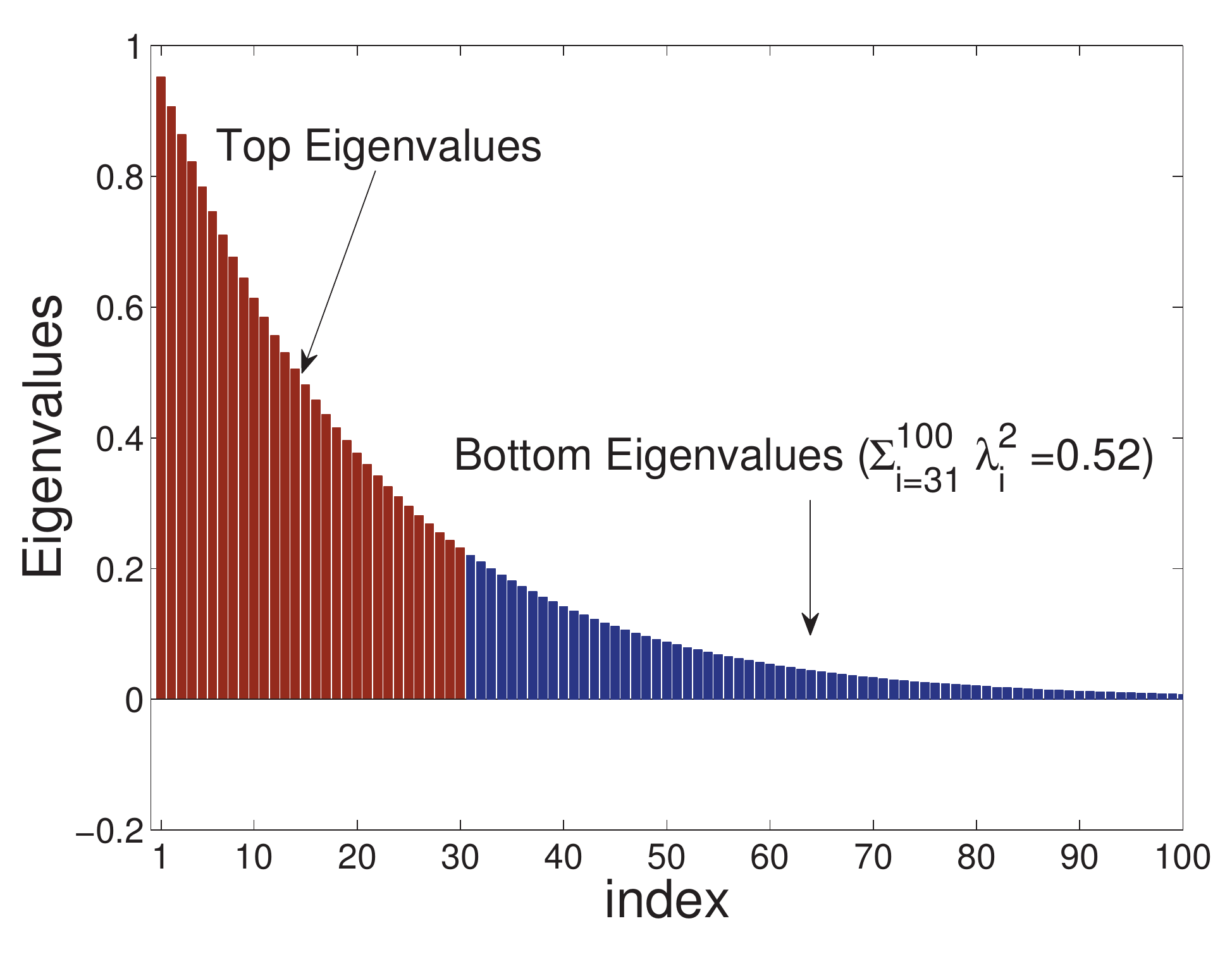}\label{fig:ss:a}}
\subfigure[{After spectral shifting.}]{\includegraphics[width=0.4\textwidth]{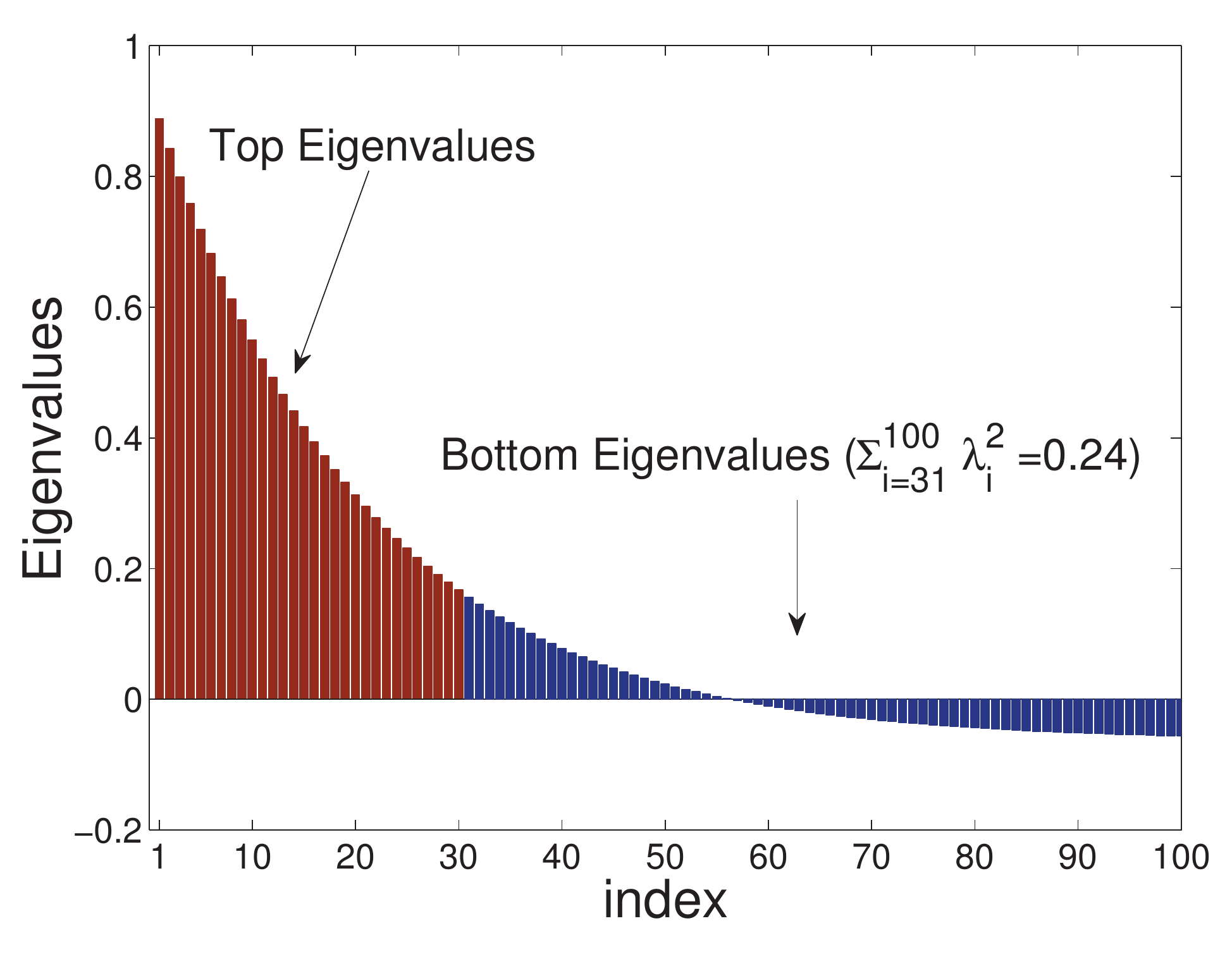}\label{fig:ss:b}}
\end{center}
   \caption{We plot the eigenvalues of $\K$ in Figure \ref{fig:ss:a} and $\bar{\K}=\K-\bar{\delta} \I_{100}$ in Figure \ref{fig:ss:b}.}
\label{fig:ss}
\end{figure}

The following theorem shows an error bound of the SS model,
which is stronger than the prototype model, especially when the bottom $n-k$ eigenvalues of $\K$ are big.
The proof is in Appendix~\ref{sec:app:ss}.

\begin{theorem} \label{thm:ss_nystrom_bound}
Suppose there is a sketching matrix $\PP \in \RB^{n\times c}$ such that
for any $n \times n$ {\it symmetric} matrix $\A$ and target rank $k$ ($\ll n$),
by forming the sketch $\C = \A \PP$,
the prototype model satisfies the error bound
\[
\big\| \A - \C \C^\dag \A (\C^\dag)^T \C^T \big\|_F^2
\; \leq \; \eta \, \big\|\A - \A_k \big\|_F^2
\]
for certain $\eta > 0$.
Let $\K$ be any $n \times n$ SPSD matrix,
$\bar{\delta}$ defined in (\ref{eq:initial_ss_term}) be the initial spectral shifting term,
and $\Kss$ be the SS model defined in \eqref{eq:def_sspbs}.
Then
\[
\big\|\K - \Kss \big\|_F^2
\: \leq \:
\eta \bigg( \big\|\K - \K_k \big\|_F^2 - \frac{\big[ \sum_{i=k+1}^n \lambda_{i} (\K) \big]^2}{n-k}  \bigg).
\]
\end{theorem}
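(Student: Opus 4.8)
The plan is to obtain Theorem~\ref{thm:ss_nystrom_bound} as a quantitative sharpening of Theorem~\ref{thm:superiority}. Choosing the initial shift to be exactly $\tilde\delta = \bar{\delta}$ (the largest value the hypothesis of Theorem~\ref{thm:superiority} permits; when $\bar{\delta}=0$ the matrix $\K$ is rank $k$ and both sides of the claim vanish) and writing $\bar{\K} = \K - \bar{\delta}\I_n$, Theorem~\ref{thm:superiority} already supplies
\[
\big\|\K - \Kss\big\|_F^2 \;\leq\; \eta\,\big\|\bar{\K} - \bar{\K}_k\big\|_F^2 .
\]
Thus the whole problem collapses to showing that
\[
\big\|\bar{\K} - \bar{\K}_k\big\|_F^2 \;\leq\; \sum_{i=k+1}^n \lambda_i^2(\K) \;-\; (n-k)\,\bar{\delta}^2 ,
\]
because $\sum_{i=k+1}^n \lambda_i^2(\K) = \|\K - \K_k\|_F^2$ and, by the definition \eqref{eq:initial_ss_term} of $\bar{\delta}$, we have $(n-k)\bar{\delta} = \sum_{i=k+1}^n \lambda_i(\K)$, so that $(n-k)\bar{\delta}^2 = \big[\sum_{i=k+1}^n\lambda_i(\K)\big]^2/(n-k)$, which is precisely the subtracted term in the claim.

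To prove the displayed bound I would diagonalize. Fix a full orthonormal eigenbasis $\u_1,\dots,\u_n$ of $\K$ with $\K = \sum_{i=1}^n \lambda_i(\K)\,\u_i\u_i^T$; since $\bar{\K}$ merely shifts the spectrum, $\bar{\K} = \sum_{i=1}^n \big(\lambda_i(\K)-\bar{\delta}\big)\,\u_i\u_i^T$ in the same basis. The matrix $\bar{\K}_k$ is the \emph{optimal} rank-$k$ approximation of $\bar{\K}$, so its residual cannot exceed that of any fixed rank-$k$ competitor. I would therefore plug in the truncation $\tilde{\K} = \sum_{i=1}^k \big(\lambda_i(\K)-\bar{\delta}\big)\,\u_i\u_i^T$, which keeps only the top-$k$ eigenspace of $\K$ and has rank at most $k$. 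Optimality of the truncated SVD (Eckart--Young) then gives
\[
\big\|\bar{\K} - \bar{\K}_k\big\|_F^2 \;\leq\; \big\|\bar{\K} - \tilde{\K}\big\|_F^2 \;=\; \sum_{i=k+1}^n \big(\lambda_i(\K) - \bar{\delta}\big)^2 .
\]
Expanding the square and using $\sum_{i=k+1}^n \lambda_i(\K) = (n-k)\bar{\delta}$ to combine the cross term with the constant term turns the right-hand side into $\sum_{i=k+1}^n \lambda_i^2(\K) - (n-k)\bar{\delta}^2$, which is exactly the bound required.

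The step to watch is the inequality comparing $\bar{\K}_k$ against $\tilde{\K}$: it is tempting to assert \emph{equality}, i.e.\ that $\bar{\K}_k$ retains the top-$k$ eigenspace of $\K$, but this can fail. The shift may drive a small eigenvalue $\lambda_n(\K)$ to a large-magnitude negative value $\lambda_n(\K)-\bar{\delta}$, so the best rank-$k$ approximation of $\bar{\K}$ keeps the $k$ directions of \emph{largest absolute} shifted eigenvalue, which need not coincide with $\u_1,\dots,\u_k$. This is exactly why I route the argument through the optimality of $\bar{\K}_k$ rather than through an explicit formula: only the inequality direction is needed, and optimality delivers it regardless of which eigenspace $\bar{\K}_k$ actually selects. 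Everything else is routine algebra, so I anticipate no further difficulty.
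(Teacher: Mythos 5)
Your proposal is correct and takes essentially the same route as the paper's proof: the paper also reduces to $\big\|\K - \Kss\big\|_F^2 \leq \eta \big\|\bar{\K} - \bar{\K}_k\big\|_F^2$ (via the inexact spectral shifting argument that underlies Theorem~\ref{thm:superiority}, which you simply cite with $\tilde\delta = \bar{\delta}$), and then bounds $\big\|\bar{\K} - \bar{\K}_k\big\|_F^2$ by the fixed competitor value $\sum_{i=k+1}^n \big(\lambda_i(\K) - \bar{\delta}\big)^2$ before expanding the square. The pitfall you flag---that $\bar{\K}_k$ need not retain the top-$k$ eigenspace of $\K$ after shifting---is handled identically in the paper, which observes that the sum of the smallest $n-k$ eigenvalues of $\bar{\K}^2$ is at most the sum over any particular choice of $n-k$ of them.
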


If $\bar\C$ contains the columns of $\bar{\K}$ sampled by the near-optimal+adaptive algorithm in Theorem~\ref{thm:near_opt},
which has the strongest bound,
then the error bound incurred by SS is given in the following corollary.

\begin{corollary}\label{cor:ss_nystrom_bound}
Suppose we are given any SPSD matrix $\K$ and
we sample $c=\OM(k /\epsilon)$ columns of $\bar{\K}$ to form $\bar{\C}$
using the near-optimal+adaptive column sampling algorithm (Theorem~\ref{thm:near_opt}).
Then the inequality holds:
\[
\EB \big\|\K - \tilde{\K}_{c}^{\textrm{ss}} \big\|_F^2
 \leq
(1+\epsilon) \bigg( \|\K - \K_k \|_F^2 - \frac{\big[ \sum_{i=k+1}^n \lambda_{i} (\K) \big]^2}{n-k}  \bigg).
\]
\end{corollary}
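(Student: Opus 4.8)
The plan is to obtain Corollary~\ref{cor:ss_nystrom_bound} by feeding the sampling guarantee of Theorem~\ref{thm:near_opt} into the error analysis of Theorem~\ref{thm:ss_nystrom_bound}, but applied to the \emph{shifted} matrix $\bar\K = \K - \bar\delta \I_n$ rather than to $\K$ itself. The key observation I would exploit is that the only place Theorem~\ref{thm:ss_nystrom_bound} uses a prototype bound is as a statement about the prototype error for the single matrix $\bar\K$; since $\bar\K$ is symmetric, Theorem~\ref{thm:near_opt} applies to it directly with target rank $k$ and $c=\OM(k/\epsilon)$ sampled columns, supplying the value $\eta = 1+\epsilon$ (in expectation).

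First I would record the deterministic (pointwise in the random draw of $\PP$) reduction of the SS error to the prototype error on $\bar\K$. By Theorem~\ref{thm:ss_closed_form}, the pair $(\delta^{\textrm{ss}},\U^{\textrm{ss}})$ globally minimizes \eqref{eq:def_ss_nystrom}, so any other feasible pair only increases the objective. Choosing the suboptimal pair $\delta = \bar\delta$ and $\U = \bar\C^\dag \bar\K (\bar\C^\dag)^T$ gives
\[
\big\| \K - \Kss \big\|_F^2
\;\leq\;
\big\| \K - \bar\delta \I_n - \bar\C \big( \bar\C^\dag \bar\K (\bar\C^\dag)^T \big) \bar\C^T \big\|_F^2
\;=\;
\big\| \bar\K - \bar\C \bar\C^\dag \bar\K (\bar\C^\dag)^T \bar\C^T \big\|_F^2 ,
\]
where the equality uses $\bar\K = \K - \bar\delta \I_n$. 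The right-hand side is exactly the prototype-model error for $\bar\K$ with sketch $\bar\C = \bar\K \PP$. Taking expectations over the randomness of the near-optimal+adaptive algorithm and invoking the (expected-squared-error form of the) guarantee of Theorem~\ref{thm:near_opt} with $\A = \bar\K$ yields $\EB\| \K - \Kss \|_F^2 \leq (1+\epsilon)\,\| \bar\K - \bar\K_k \|_F^2$.

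It remains to rewrite $\| \bar\K - \bar\K_k \|_F^2$ through the spectrum of $\K$. Since $\bar\K_k$ is the \emph{optimal} rank-$k$ truncation of $\bar\K$, retaining instead the top-$k$ eigendirections of $\K$ can only enlarge the residual, so $\| \bar\K - \bar\K_k \|_F^2 \leq \sum_{i=k+1}^n (\lambda_i(\K) - \bar\delta)^2$. Expanding the square and substituting $\bar\delta = \frac{1}{n-k}\sum_{i=k+1}^n \lambda_i(\K)$ from \eqref{eq:initial_ss_term} collapses the cross term, giving
\[
\sum_{i=k+1}^n \big(\lambda_i(\K) - \bar\delta\big)^2
\;=\;
\sum_{i=k+1}^n \lambda_i^2(\K) - \frac{\big[\sum_{i=k+1}^n \lambda_i(\K)\big]^2}{n-k}
\;=\;
\| \K - \K_k \|_F^2 - \frac{\big[\sum_{i=k+1}^n \lambda_i(\K)\big]^2}{n-k} ,
\]
using $\| \K - \K_k \|_F^2 = \sum_{i=k+1}^n \lambda_i^2(\K)$ for the SPSD matrix $\K$. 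Chaining the two displays gives the claimed bound.

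The step I expect to require the most care is the transition through the expectation: Theorem~\ref{thm:near_opt} is phrased for the \emph{non-squared} Frobenius norm and with a data-dependent $\PP$, whereas Corollary~\ref{cor:ss_nystrom_bound} asserts a bound on $\EB\|\cdot\|_F^2$. I would resolve this by using the squared-error form of the near-optimal+adaptive guarantee that the proof of Theorem~\ref{thm:near_opt} actually establishes (adaptive-sampling bounds are naturally bounds on expected \emph{squared} error, from which the stated $\EB\|\cdot\|_F \leq (1+\epsilon)\|\cdot\|_F$ follows by Jensen, up to a constant rescaling of $\epsilon$ absorbed into $\OM(k/\epsilon)$), and by noting that the reduction in the first display holds for \emph{every} realization of $\PP$, so that taking the expectation afterward is legitimate even though $\bar\C$ depends on $\bar\K$. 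This is precisely why the universal-sketch hypothesis of Theorem~\ref{thm:ss_nystrom_bound} can be relaxed here to a single-matrix, in-expectation prototype bound.
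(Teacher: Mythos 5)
Your proof is correct and takes essentially the same route as the paper: what you call plugging in the ``suboptimal pair'' $(\bar\delta,\,\bar\C^\dag\bar\K(\bar\C^\dag)^T)$ is exactly the paper's inexact spectral shifting (ISS) reduction via Theorem~\ref{thm:ss_closed_form}, followed by applying the near-optimal+adaptive guarantee of Theorem~\ref{thm:near_opt} to the symmetric matrix $\bar\K$ and the identical spectral computation $\|\bar\K-\bar\K_k\|_F^2 \leq \sum_{i>k}(\lambda_i(\K)-\bar\delta)^2 = \|\K-\K_k\|_F^2 - \big[\sum_{i>k}\lambda_i(\K)\big]^2/(n-k)$. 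Your explicit handling of the expectation issue---that the reduction holds pointwise in the draw of $\PP$, and that the squared-error form underlying Theorem~\ref{thm:near_opt} is what gets invoked (with a constant rescaling of $\epsilon$)---is a careful touch that the paper leaves implicit when it combines Theorem~\ref{thm:ss_nystrom_bound} (whose hypothesis is stated as a deterministic, universal bound) with a randomized, in-expectation sampling guarantee.
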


Here we give an example to demonstrate the superiority of SS over the prototype model, the Nystr\"om method, and even the truncated SVD of the same scale.

\begin{example} \label{exa:example1}
Let $\K$ be an $n\times n$ SPSD matrix such that
$\lambda_1 (\K) \geq \cdots \geq \lambda_k (\K) > \theta = \lambda_{k+1} (\K) = \cdots = \lambda_n (\K)>0$.
By sampling $c = \OM(k )$ columns by the near-optimal+adaptive algorithm (Theorem~\ref{thm:near_opt}),
we have that
\begin{eqnarray}
\big\|\K - \tilde{\K}_{c}^{\textrm{ss}} \big\|_F^2
\; = \; 0  \nonumber
\end{eqnarray}
and that
\begin{eqnarray}
(n-c) \theta^2
\; = \;
\big\|\K - \K_c \big\|_F^2
\; \leq \;
\big\|\K - \tilde{\K}_{c}^{\textrm{proto}} \big\|_F^2
\; \leq \;
\big\|\K - \tilde{\K}_{c}^{\textrm{nys}} \big\|_F^2 . \nonumber
\end{eqnarray}
Here $\tilde{\K}_{c}^{\textrm{proto}}$ and $\tilde{\K}_{c}^{\textrm{nys}}$ respectively denote the approximation formed by the prototype model and the Nystr\"om method.
In this example the SS model is far better than the other models if we set $\theta$ as a large constant.
\end{example}


\subsection{Approximately Computing $\bar{\delta}$} \label{sec:sspbs:algorithm}

The SS model uses $\bar{\delta}$ as the initial spectral shifting term.
However, computing $\bar{\delta}$ according to (\ref{eq:initial_ss_term})
requires the partial eigenvalue decomposition which costs $\OM(n^2 k)$ time and $\OM(n^2)$ memory.
For large-scale data, one can simply set $\bar{\delta} = 0$;
Remark~\ref{remark:ss_closed_form} shows that SS with this setting still works better than the prototype model.
For medium-scale data, one can approximately compute $\bar{\delta}$ by the algorithm devised and analyzed in this subsection.

We depict the algorithm in Lines \ref{alg:ss_nystrom:delta_begin}--\ref{alg:ss_nystrom:delta_end} of Algorithm~\ref{alg:ss_nystrom}.
The performance of the approximation is analyzed in the following theorem.

\begin{theorem} \label{thm:delta}
Let $\bar{\delta}$ be defined in (\ref{eq:initial_ss_term})
and $\tilde{\delta}$, $k$, $l$, $n$ be defined in Algorithm~\ref{alg:ss_nystrom}.
The following inequality holds:
\[
\EB \big[ { \big|\bar{\delta} - \tilde{\delta}\big|} \, \big/ \, {\bar{\delta}} \big] \;\leq \; {k} / {\sqrt{l}},
\]
where the expectation is taken w.r.t.\ the Gaussian random matrix $\Ome$ in Algorithm~\ref{alg:ss_nystrom}.
Lines~\ref{alg:ss_nystrom:delta_begin}--\ref{alg:ss_nystrom:delta_end} in Algorithm~\ref{alg:ss_nystrom} compute $\tilde{\delta}$
in $\OM (n^2 l)$ time and $\OM(n l)$ memory.
\end{theorem}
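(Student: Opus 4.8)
The plan is to reduce the claim to a statement about how well the randomized range finder captures the top-$k$ singular values of $\K$, and then to invoke a Gaussian moment computation. First I would record the two elementary facts that make the relative error transparent. Since $\K$ is SPSD, $\tr(\K)=\sum_{i=1}^n\sigma_i(\K)$, so $\bar{\delta}=\frac{1}{n-k}\sum_{i=k+1}^n\sigma_i(\K)$; and since $\Q$ has orthonormal columns, $\Q^T$ is a co-isometry, whence $\sigma_j(\Q^T\K)\le\sigma_j(\K)$ for every $j$. Writing $s=\sum_{j=1}^k\sigma_j(\Q^T\K)$, this gives $s\le\sum_{j=1}^k\sigma_j(\K)$, so $\tilde{\delta}\ge\bar{\delta}\ge 0$ and
\[
\frac{|\bar{\delta}-\tilde{\delta}|}{\bar{\delta}}
= \frac{\sum_{j=1}^k\sigma_j(\K) - s}{\sum_{i=k+1}^n\sigma_i(\K)}.
\]
Because the denominator is deterministic, it remains only to bound the expectation of the numerator $N := \sum_{j=1}^k\sigma_j(\K) - \sum_{j=1}^k\sigma_j(\Q^T\K)$.

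Next I would obtain a deterministic bound on $N$ in terms of the interaction between the leading and trailing spectral subspaces. Let $\K=\U\Si\U^T$ with $\U=[\U_1,\U_2]$ and $\Si=\Si_1\oplus\Si_2$ split at rank $k$, and set $\Ome_1=\U_1^T\Ome$ and $\Ome_2=\U_2^T\Ome$, which are independent Gaussian blocks. The key point is that $N$ measures only the loss in the captured head and is therefore controlled by the tilt of $\U_1$ out of the range of $\K\Ome$, with no contribution from the uncaptured tail energy itself. Using the Ky Fan triangle inequality together with the standard randomized range finder algebra of \citep{halko2011ramdom}, I would establish
\[
N \;\le\; \sum_{j=1}^k\sigma_j\big(\Si_2\Ome_2\Ome_1^\dagger\big)
\;\le\; \sqrt{k}\,\big\|\Si_2\Ome_2\Ome_1^\dagger\big\|_F .
\]
The essential feature, and the reason the final bound is clean, is that the $\|\Si_2\|_F$ ``floor'' term that appears in residual bounds for $\|(\I-\Q\Q^T)\K\|_F$ does not enter here: it inflates the residual but does not decrease the top-$k$ singular values of $\Q^T\K$, so one must work with the head sum directly rather than with the residual.

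Finally I would take expectations. By independence of $\Ome_1,\Ome_2$ and the identity $\EB[\Ome_2^T \A\Ome_2]=\tr(\A)\I_l$ for Gaussian $\Ome_2$, we get $\EB\|\Si_2\Ome_2\Ome_1^\dagger\|_F^2=\|\Si_2\|_F^2\,\EB\|\Ome_1^\dagger\|_F^2$, and the inverse-Wishart moment $\EB\|\Ome_1^\dagger\|_F^2=k/(l-k-1)$ (for $l>k+1$) gives, via Jensen,
\[
\EB[N] \;\le\; \sqrt{k}\,\|\Si_2\|_F\sqrt{\tfrac{k}{l-k-1}}
\;=\; \frac{k\,\|\Si_2\|_F}{\sqrt{l-k-1}} .
\]
Dividing by $\sum_{i=k+1}^n\sigma_i(\K)=\|\Si_2\|_*\ge\|\Si_2\|_F$ yields the claimed $\EB[|\bar{\delta}-\tilde{\delta}|/\bar{\delta}]\le k/\sqrt{l}$ once the clean form of the Gaussian moment is used. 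The cost statement is routine: forming $\K\Ome$ and $\Q^T\K$ each take $\OM(n^2 l)$ time, the orthonormalization and the $l\times n$ SVD take $\OM(nl^2)$, and only $\OM(nl)$ storage is needed, so the dominant cost is $\OM(n^2 l)$ time and $\OM(nl)$ memory. The main obstacle is the deterministic step: justifying that $N$ is governed purely by the interaction term $\Si_2\Ome_2\Ome_1^\dagger$ without a Frobenius-tail floor. This is precisely what separates the tight $k/\sqrt{l}$ bound from the weaker estimate one gets by passing through $\|(\I-\Q\Q^T)\K\|_F$, and it is also where matching the stated constant $k/\sqrt{l}$ (rather than $k/\sqrt{l-k-1}$) relies on the exact form of $\EB\|\Ome_1^\dagger\|_F^2$.
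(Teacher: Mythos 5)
Your opening reduction is fine, and it matches the spirit of the paper's proof: since $\sigma_j(\Q^T\K)\le\sigma_j(\K)$, the relative error equals $N/\sum_{i=k+1}^n\sigma_i(\K)$ with $N:=\sum_{j=1}^k\big[\sigma_j(\K)-\sigma_j(\Q^T\K)\big]$, and the denominator is deterministic. The first genuine gap is the deterministic step that carries the whole argument: the inequality $N\le\sum_{j=1}^k\sigma_j\big(\Si_2\Ome_2\Ome_1^\dag\big)$ is asserted, not proved, and it does \emph{not} follow from ``the Ky Fan triangle inequality together with the standard randomized range finder algebra.'' That algebra controls the residual $(\I-\Q\Q^T)\K$, and the Ky Fan route through the split $\K=\Q\Q^T\K+(\I-\Q\Q^T)\K$ gives only $N\le\sum_{j=1}^k\sigma_j\big((\I-\Q\Q^T)\K\big)$, which reintroduces exactly the tail floor you say must be avoided: in the extreme case $\Ome_2=\0$ the interaction term vanishes, yet this bound degenerates to $\sum_{j=k+1}^{2k}\sigma_j(\K)>0$. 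The interaction-only bound happens to be true, but a proof needs a genuinely finer argument --- e.g.\ lower-bounding the Ky Fan $k$-norm of $\Q^T\K$ variationally with the test subspace spanned by $\K\Ome\Ome_1^\dag$, which gives $N\le\tr\big((\I-(\I+\G^T\G)^{-1/2})\Si_1\big)$ with $\G=\Si_2\Ome_2\Ome_1^\dag\Si_1^{-1}$, followed by the operator inequality $\I-(\I+\G^T\G)^{-1/2}\preceq(\G^T\G)^{1/2}$ and the observation that $|\G|\Si_1$ and $\G\Si_1=\Si_2\Ome_2\Ome_1^\dag$ share singular values. None of this appears in your proposal; you correctly flag it as ``the main obstacle'' but never resolve it.

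The second gap is the constant. For a $k\times l$ standard Gaussian $\Ome_1$ the exact moment is $\EB\|\Ome_1^\dag\|_F^2=k/(l-k-1)$ (finite only for $l\ge k+2$), so even granting your deterministic step the chain yields $\EB[N]\le k\,\|\Si_2\|_F/\sqrt{l-k-1}$ and a final bound $k/\sqrt{l-k-1}$, which is strictly weaker than the claimed $k/\sqrt{l}$; there is no ``clean form'' of this moment that repairs it. The paper sidesteps both difficulties simultaneously with a different mechanism: it sets $\tilde\K=\Q(\Q^T\K)_k$, invokes the cited bound $\EB\|\K-\tilde\K\|_F^2\le(1+k/l)\|\K-\K_k\|_F^2$ as a black box, and applies a Hoffman--Wielandt-type inequality $\|\si_{\K}-\si_{\tilde\K}\|_2^2\le\|\K-\tilde\K\|_F^2$ (Lemma~\ref{lem:delta}). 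Because $\tilde\K$ has rank at most $k$, its trailing singular values are zero, so the full tail energy $\|\si_{\K,-k}\|_2^2=\|\K-\K_k\|_F^2$ sits inside the left-hand side and cancels the leading $1$ in $(1+k/l)$, leaving $\EB\|\si_{\K,k}-\si_{\tilde\K,k}\|_2^2\le(k/l)\,\|\K-\K_k\|_F^2$; the $\ell_2$-to-$\ell_1$ conversion on a $k$-vector then gives exactly $k/\sqrt{l}$. That rank-$k$ truncation trick is the floor-removal device your proposal is missing. Your cost accounting is correct and matches the paper's.
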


Here we empirically evaluate the accuracy of the approximation to $\bar{\delta}$
(Lines \ref{alg:ss_nystrom:delta_begin}--\ref{alg:ss_nystrom:delta_end} in Algorithm~\ref{alg:ss_nystrom}) proposed in Theorem~\ref{thm:delta}.
We use the RBF kernel matrices with the scaling parameter $\gamma$ listed Table~\ref{tab:datasets}.
We use the error ratio $|\bar{\delta} - \tilde{\delta}| / \bar{\delta}$ to evaluate the approximation quality.
We repeat the experiments 20 times and plot $l/k$ against the average error ratio in Figure~\ref{fig:delta}.
Here $\tilde{\delta}$, $l$, and $k$ are defined in Theorem~\ref{thm:delta}.
We can see that the approximation of $\bar{\delta}$ has high quality:
when $l = 4k$, the error ratios are less than $0.03$ in all cases,
no matter whether the spectrum of $\K$ decays fast or slow.

\begin{figure*}[ht]
\begin{center}
\centering
\subfigure[{Letters}]{\includegraphics[width=0.48\textwidth]{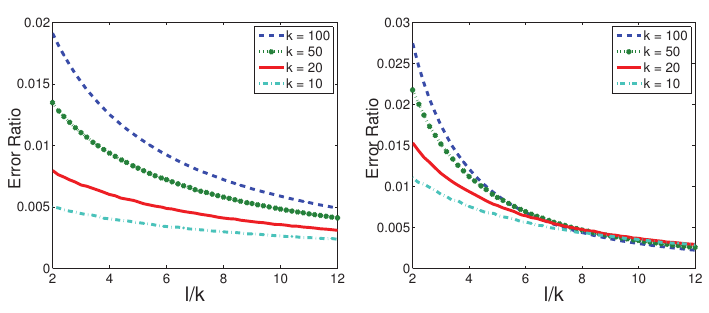}}~~
\subfigure[{PenDigit}]{\includegraphics[width=0.48\textwidth]{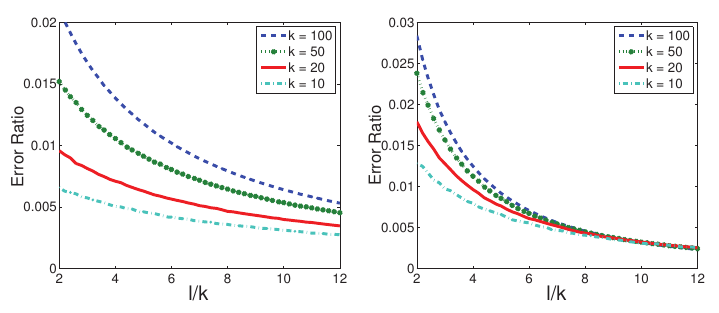}}
\end{center}
   \caption{The ratio $\frac{l}{k}$ against the error $|\bar{\delta} - \tilde{\delta}| / \bar{\delta}$.
   In each subfigure, the left corresponds to the RBF kernel matrix with $\eta=0.5$, and the right corresponds to $\eta=0.9$,
   where $\eta$ is defined in (\ref{eq:eigenvalue_ratio}).}
\label{fig:delta}
\end{figure*}


\subsection{Combining with Other Matrix Approximation Methods} \label{sec:sspbs:combine}

There are many other matrix approximation approaches such as
the ensemble \nystrom method \citep{kumar2012sampling} and  MEKA \citep{si2014memory}.
In fact, the key components of the ensemble \nystrom method and MEKA are the \nystrom method,
which can be straightforwardly replaced by other matrix approximation methods such as the SS model.

{\bf The ensemble \nystrom method} improves the \nystrom method by running the \nystrom method
$t$ times and combine the samples to construct the kernel approximation:
\begin{equation*}
\Kens \; = \;
\sum_{i=1}^t \mu^{(i)} \C^{(i)} {\W^{(i)}}^\dag {\C^{(i)}}^T,
\end{equation*}
where $\mu^{(1)} , \cdots , \mu^{(t)}$ are the weights of the samples,
and a simple but effective strategy is to set the weights as $\mu^{(1)} = \cdots = \mu^{(t)} = \frac{1}{t}$.
However, the time and memory costs of computing $\C$ and $\U$ are respectively $t$ times as much as
that of its base method (e.g.\ the \nystrom method),
and the ensemble \nystrom method needs to use the Sherman-Morrison-Woodbury formula
$t$ times to combine the samples.
When executed on a single machine, the accuracy gained by the ensemble may not worth the $t$ times more time and memory costs.

{\bf MEKA} is reported to be the state-of-the-art kernel approximation method.
It exploits the block-diagonal structure of kernel matrices,
and outputs an ${n\times c}$ sparse matrix $\C$ and a $c\times c$ small matrix $\U$ such that $\K \approx \C \U \C^T$.
MEKA first finds the blocks by clustering the data into $b$ clusters
and permutes the kernel matrix accordingly.
It then approximates the diagonal blocks by the \nystrom method,
{\it which can be replaced by other methods.}
It finally approximates the off-diagonal blocks using the diagonal blocks.
If the kernel matrix is partitioned into $b^2$ blocks,
only $b$ blocks among the $b^2$ blocks of $\C$ are nonzero,
and the number of nonzero entries of $\C$ is at most $\nnz(\C) = n c / b$.
MEKA is thus much more memory efficient than the \nystrom method.
If we use the SS model to approximate the diagonal blocks,
then the resulting MEKA approximation will be in the form $\K \approx \C \U \C^T + \delta_1 \I \oplus \cdots \oplus \delta_b \I$,
where $\delta_i$ corresponds to the $i$-th diagonal block.

\begin{figure*}[ht]
\begin{center}
\centering
\includegraphics[width=0.98\textwidth]{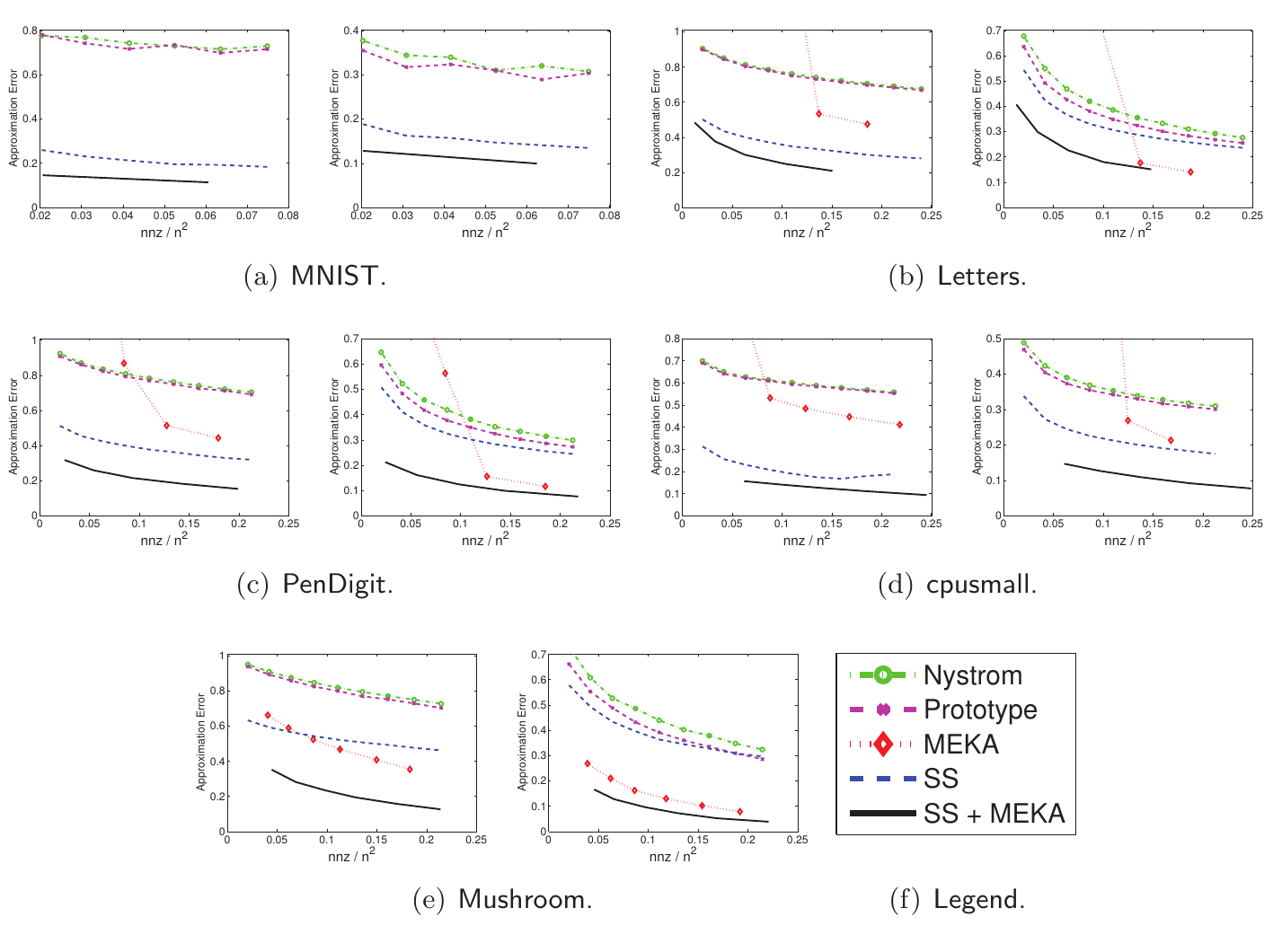}
\end{center}
   \caption{The memory cost against approximation error.
    Here ``nnz'' is number of nonzero entries in the sketch, namely, $\nnz (\C) + \nnz (\U)$.
   In each subfigure, the left corresponds to the RBF kernel matrix with $\eta=0.5$, and the right corresponds to $\eta=0.9$,
   where $\eta$ is defined in (\ref{eq:eigenvalue_ratio}).}
\label{fig:ss_memory}
\end{figure*}

\begin{figure*}[ht]
\begin{center}
\centering
\includegraphics[width=0.98\textwidth]{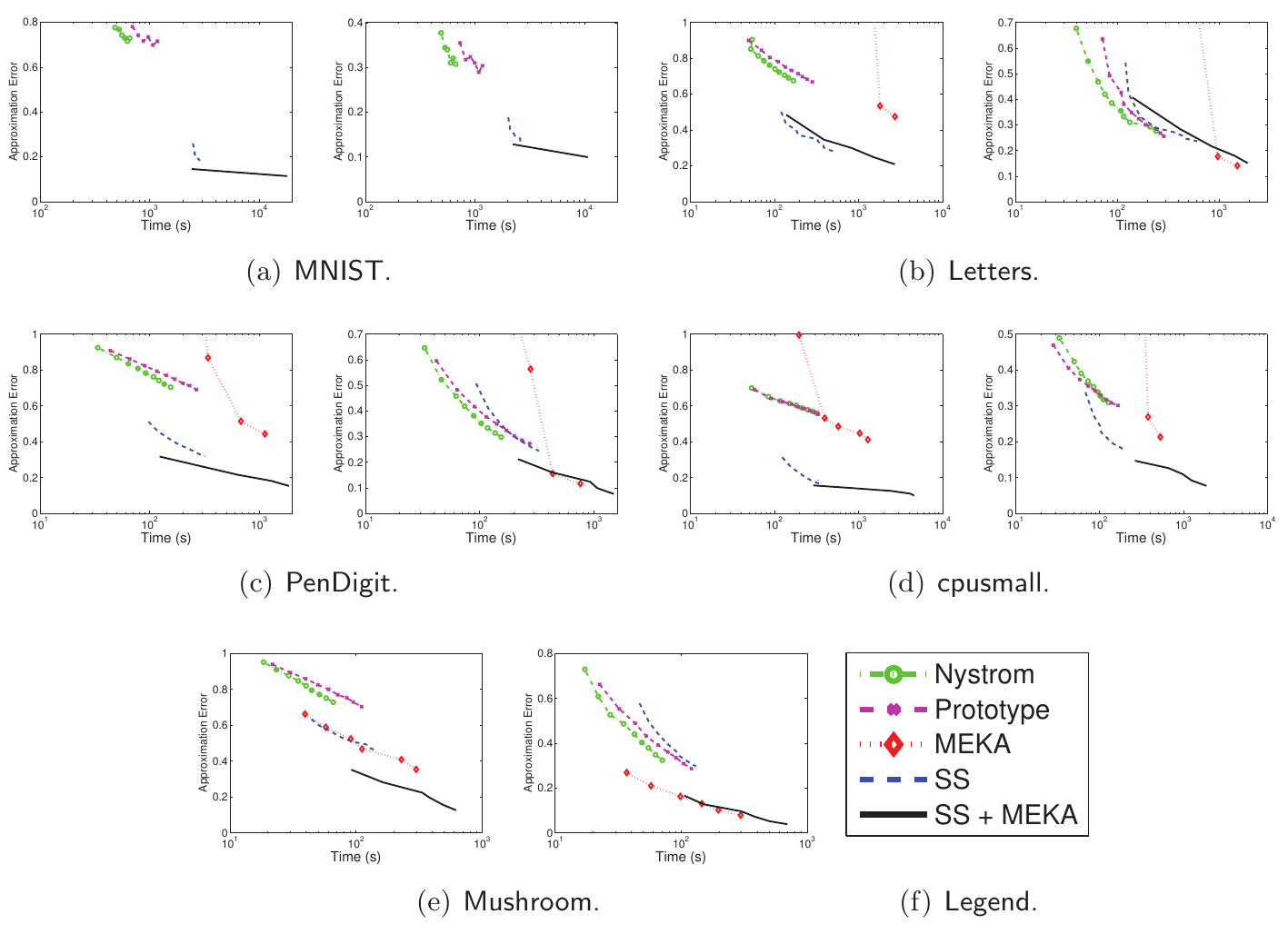}
\end{center}
   \caption{The elapsed time (log-scale) against the approximation error.
   In each subfigure, the left corresponds to the RBF kernel matrix with $\eta=0.5$, and the right corresponds to $\eta=0.9$,
   where $\eta$ is defined in (\ref{eq:eigenvalue_ratio}).}
\label{fig:ss_time}
\end{figure*}


\section{Empirically Evaluating the Spectral Shifting Model} \label{sec:sspbs:experiments}

We empirically evaluate the spectral shifting method by comparing  the following kernel approximation models
in terms of approximation quality and the generalization performance on Gaussian process regression.
\begin{itemize}
\item   The \nystrom method with the uniform+adaptive$^2$ algorithm.
\vspace{-2mm}
\item   The prototype model with the uniform+adaptive$^2$ algorithm.
\vspace{-2mm}
\item   The memory efficient kernel approximation (MEKA) method \citep{si2014memory},
        which approximates the diagonal blocks of the kernel matrix by the \nystrom method.
        We use the code released by the authors with default settings.
\vspace{-2mm}
\item   The spectral shifting (SS) model with the uniform+adaptive$^2$ algorithm.
\vspace{-2mm}
\item   SS+MEKA: the same to MEKA except for using SS, rather than the \nystrom method, to approximate the diagonal blocks.
\end{itemize}
Since the experiments are all done on a single machine,
the time and memory costs of the ensemble \nystrom method \citep{kumar2012sampling} are $t$ times larger.
It would be unfair to directly do  comparison with the ensemble \nystrom method.

\begin{figure*}[ht]
\begin{center}
\centering
\subfigure[$\eta \approx 0.5$]{\includegraphics[width=0.24\textwidth]{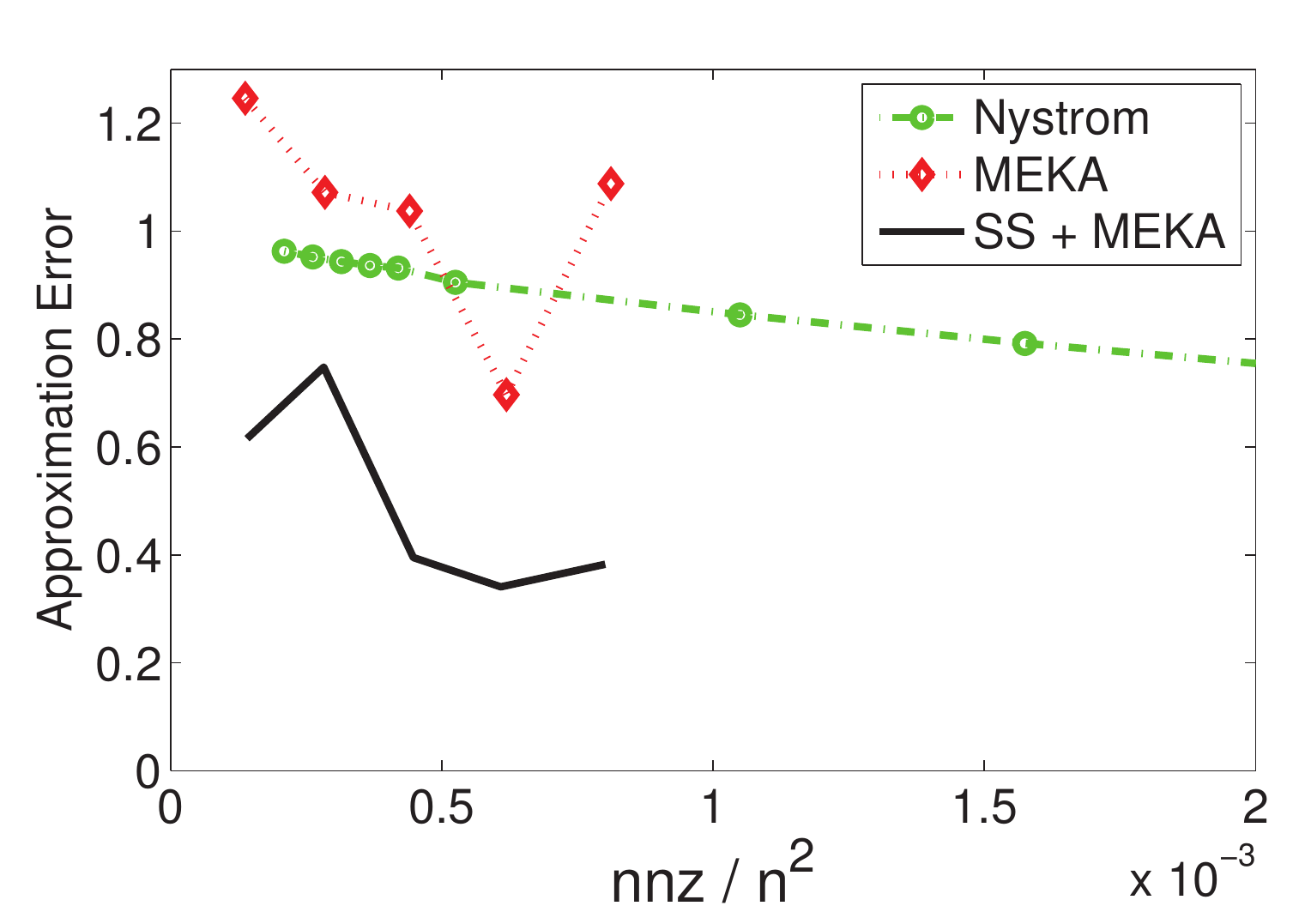}
                            \includegraphics[width=0.24\textwidth]{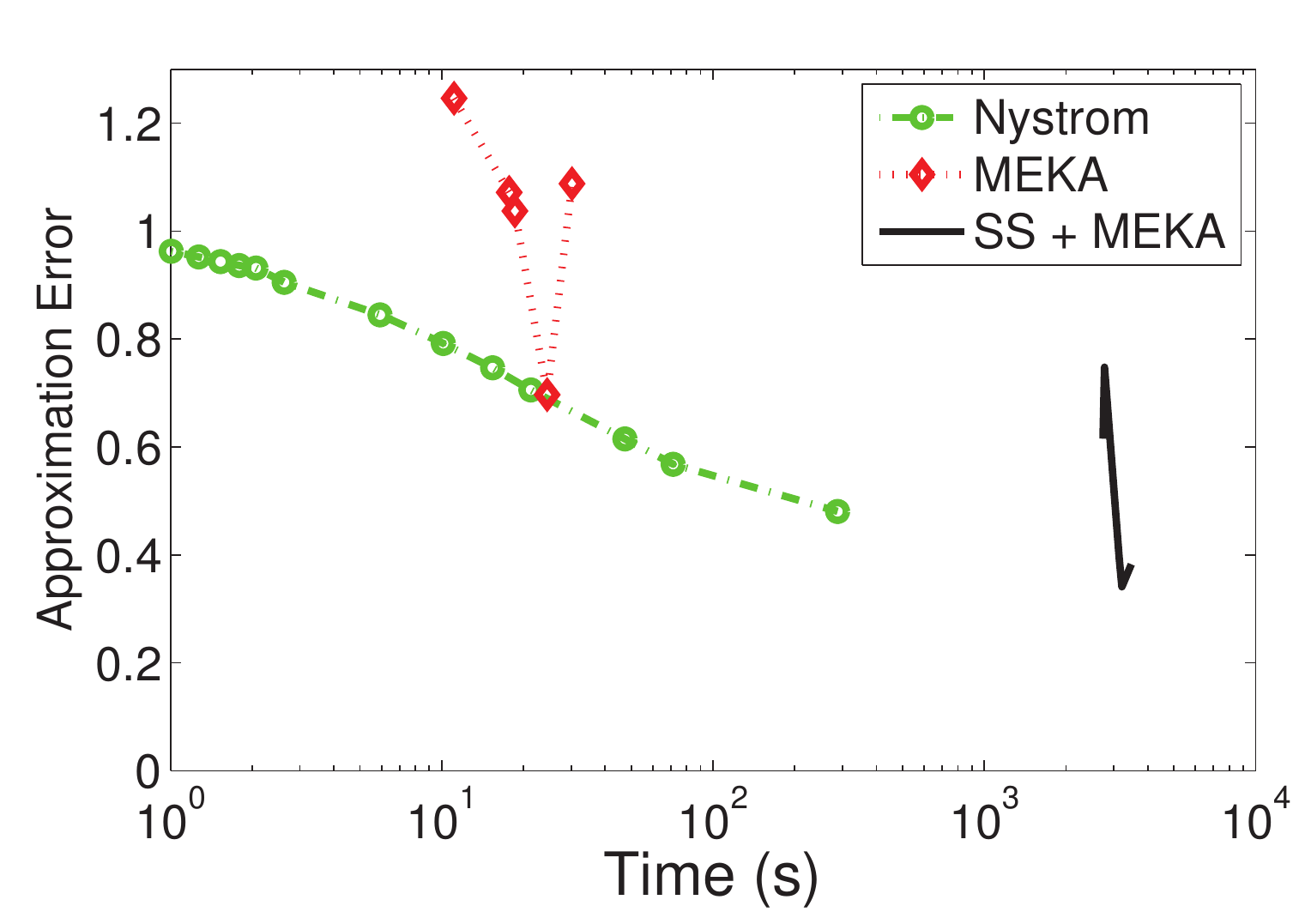}}
\subfigure[$\eta \approx 0.9$]{\includegraphics[width=0.24\textwidth]{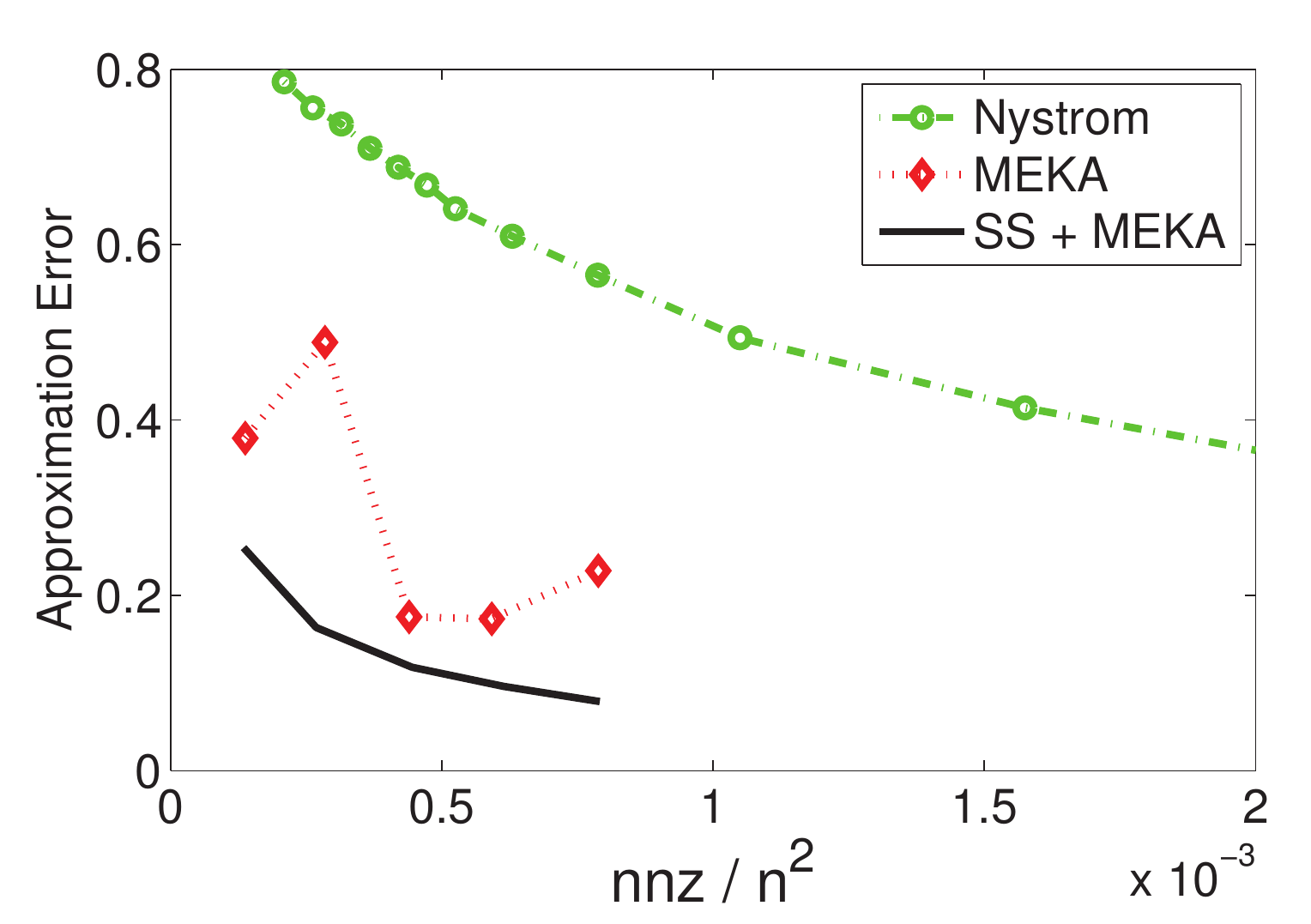}
                            \includegraphics[width=0.24\textwidth]{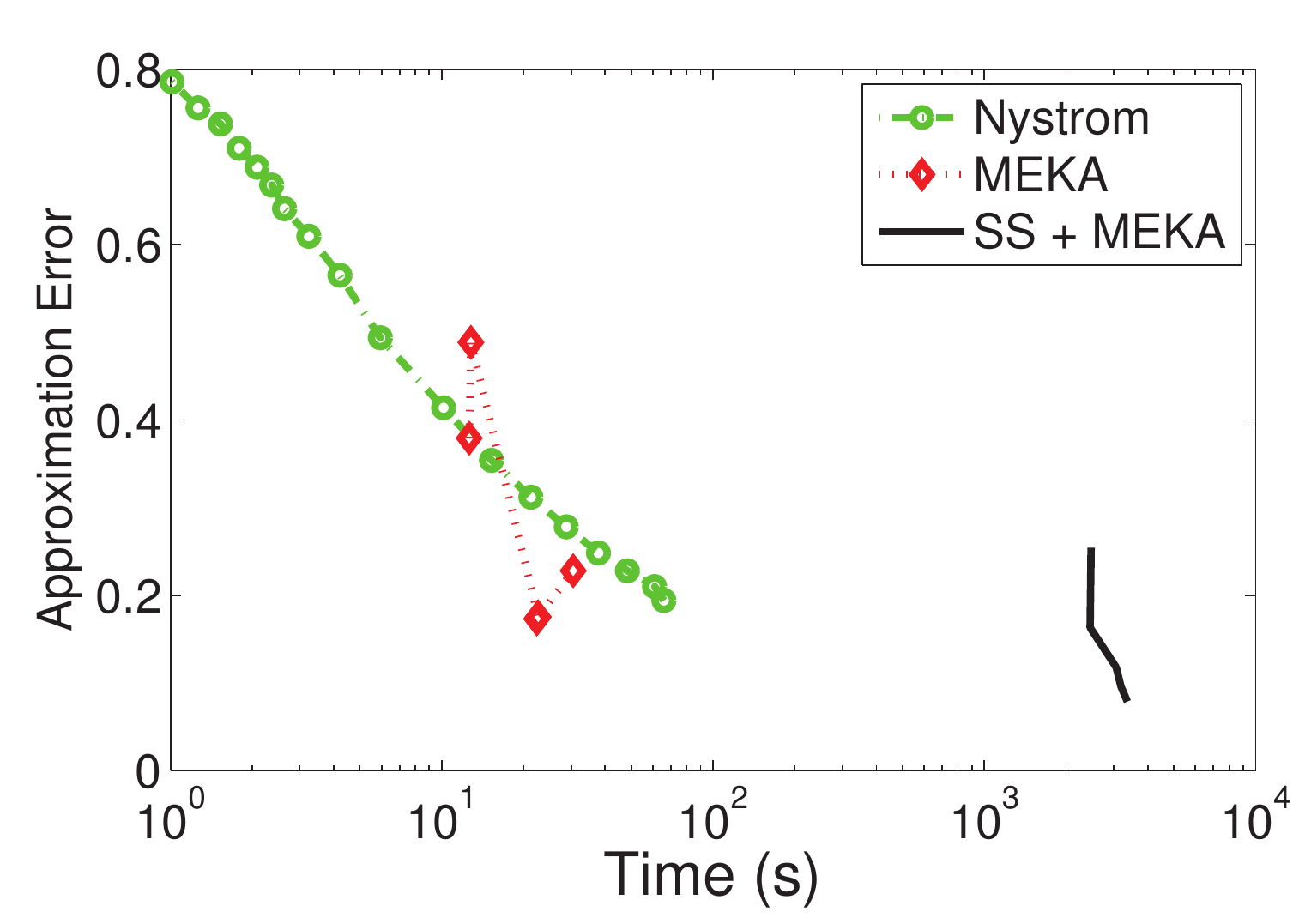}}
\end{center}
   \caption{The experiments on the covtype dataset. We plot the memory cost against approximation error and plot the elapsed time (log-scale) against the approximation error.}
\label{fig:ss_covtype}
\end{figure*}


\subsection{Experiments on Approximation Accuracy}

We conduct experiments using the same setting as in Section~\ref{sec:experiment_setting}.
To demonstrate the effect of spectral shifting,
we only consider a kernel matrix with slowly decaying spectrum.
Thus we set $\eta$ (defined in \eqref{eq:eigenvalue_ratio}) relatively small: $\eta = 0.5$ and $\eta = 0.9$.
When $\eta$ is near one, the spectral shifting term becomes near zero,
and spectral shifting makes no difference.

We present the results in two ways:
(1) Figure~\ref{fig:ss_memory} plots the memory usage against the approximation error,
where the memory usage is proportional to the number of nonzero entries of $\C$ and $\U$;
(2) Figure \ref{fig:ss_time} plots the elapsed time against the approximation error.
The results have the following implications.
\begin{itemize}
\item
Using the spectral shifting technique is better than without using it:
SS and SS+MEKA are more accurate than the prototype model and MEKA, respectively.
The advantage of spectral shifting is particularly obvious when the spectrum of $\K$ decays slowly, i.e.\ when $\eta = 0.5$.
\item
SS and SS+MEKA are the best among the compared methods according to our experiments.
Especially, if a high-quality approximation in limited memory is desired, SS+MEKA should be the best choice.
\item
The kernel matrix of the MNIST dataset is $60,000\times 60,000$, which does not fit in the $24$GB memory.
This shows that the compared methods and sampling algorithm do not require keeping $\K$ in memory.
\end{itemize}

In addition, we conduct experiments on the large-scale dataset---covtype---which has 581,012 data instances.
We compare among the Nystr\"om method, MEKA, and MEKA+SS.
The experiment setting is slightly different from the ones in the above.
For each of the four methods, we use uniform sampling to select columns.
As for the MEKA based methods, we set the number of clusters to be 30 (whereas the default is 10) to increase scalability.
As for the spectral shifting method, we do not perform the initial spectral shifting.
The results are plotted in Figure~\ref{fig:ss_covtype}.
The results show the effectiveness and scalability of the spectral shift method.

\begin{table*}[t]\setlength{\tabcolsep}{0.3pt}
    \caption{Summary of datasets for Gaussian process regression}\label{tab:expGPR}
\begin{footnotesize}
    \begin{center}
        \begin{tabular}{ c ccccccc}
            \hline
                        &~~Plant~~&~~White Wine~~&~~Red Wine~~&~~Concrete~~&~~Energy (Heat)~~&~~Energy (Cool)~~&~~Housing~~\\
            \hline
   {\bf \#Instance}     & 9,568 &   4,898   &   1,599   &    1,030  &     768       &     768       &  506  \\
   {\bf \#Attribute}    &    4  &   11      &    11     &    8      &      8        &      8        & 13  \\
            $\gamma$    & 0.1   &    1      &     1     &    1      &      0.5      &      0.5      &  1    \\
            $\sigma^2$  & 0.1   &   0.01    &   0.01    &   0.0002  &     0.0005    &    0.001      & 0.005 \\
            \hline
        \end{tabular}
       \vskip -0.4cm
    \end{center}
\end{footnotesize}
\end{table*}


\subsection{Experiments on Gaussian Process Regression}

We apply the kernel approximation methods to Gaussian process regression (GPR).
We assume that the training set is $\{(\x_1, y_1), . . . ,(\x_n, y_n)\}$,
where $\x_i \in \RB^d$ are input vectors and $y_i\in\RB$ are the corresponding outputs.
GPR is defined as
\begin{eqnarray*}
y = u + f(\x) + \epsilon, \quad \epsilon \sim  \mathcal{N}(0, \sigma^2),
\end{eqnarray*}
where $f(\x)$ follows a Gaussian process with mean function 0 and kernel function $\kappa (\cdot,\cdot)$.
Furthermore, we define the kernel matrix $\K \in \RB^{n\times n}$, where the $(i,j)$-th entry of $\K$ is $\kappa(\x_i, \x_j)$.

For a test input $\x_*$, the prediction is given by
\begin{eqnarray*}
    \hat{y}_* = {\bf k}^T(\x_*)(\K + \sigma^2\I)^{-1}\y,
\end{eqnarray*}
where $\y = [y_1,\dots,y_n]^T$ and $\k (\x_*) = [\kappa(\x_*, \x_1), \dots, \kappa(\x_*, \x_n)]^T$.
We apply different kernel approximation methods to approximate $\K$
and approximately compute $(\K + \sigma^2\I)^{-1}\y$ according to Section~\ref{sec:motivation}.
We evaluate the generalization performance using the mean squared error:
\begin{eqnarray*}
   \mbox{MSE} = \frac{1}{m}\sum_{i=1}^m (y_{i*}-{\hat y}_{i*})^2,
\end{eqnarray*}
where $y_{i*}$ is the real output of the $i$-th test sample and $m$ is the number of test samples.

\begin{figure*}[ht]
\begin{center}
\centering
\includegraphics[width=0.98\textwidth]{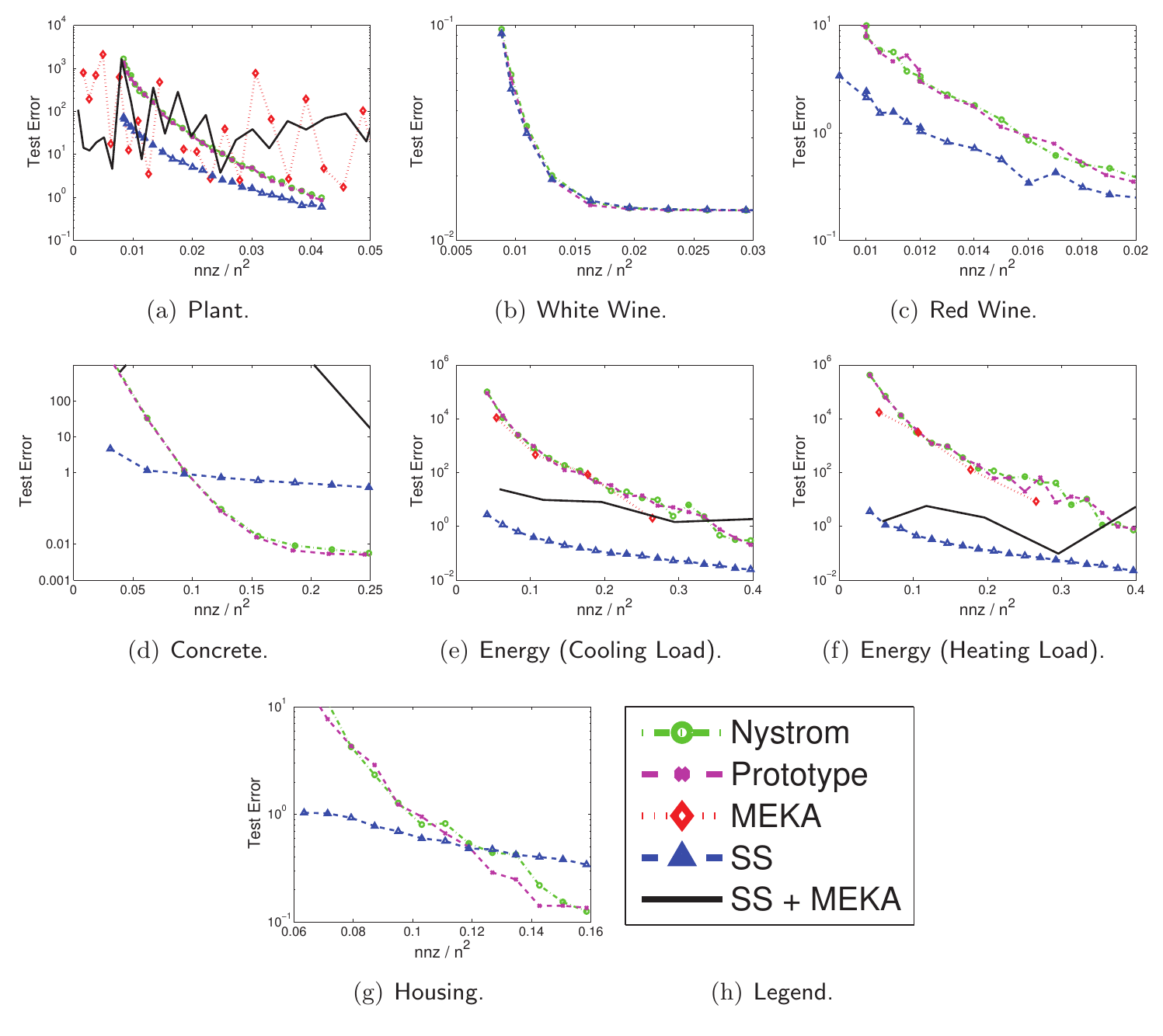}
\end{center}
   \caption{The results on Gaussian process regression.
    In the figures ``nnz'' is number of nonzero entries in the sketch, that is, $\nnz (\C) + \nnz (\U)$.}
\label{fig:gp_nnz}
\end{figure*}

\begin{figure*}[ht]
\begin{center}
\centering
\includegraphics[width=0.98\textwidth]{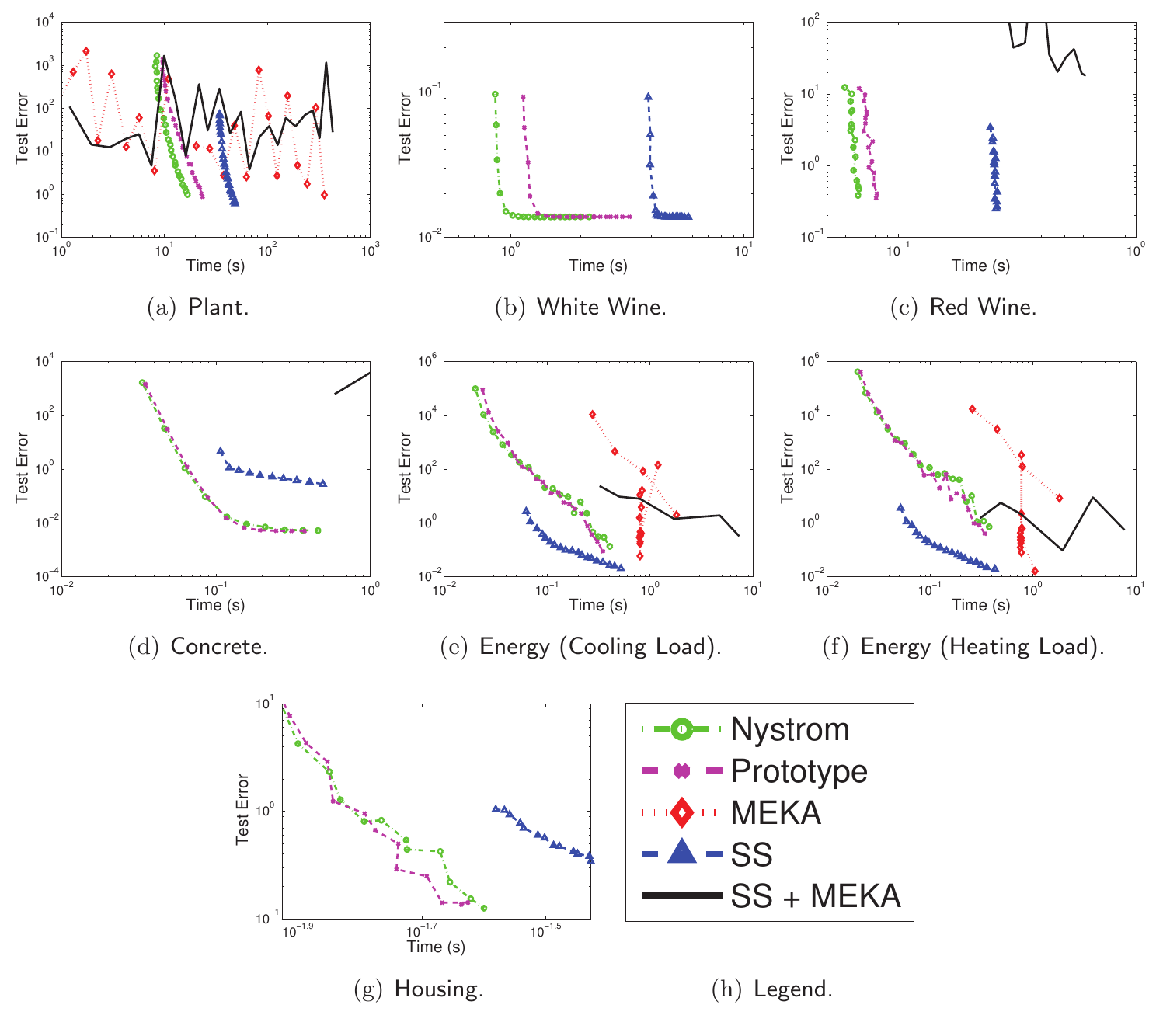}
\end{center}
   \caption{The results on Gaussian process regression.}
\label{fig:gp_time}
\end{figure*}

We conduct experiment on seven datasets summarized in Table~\ref{tab:expGPR}.
We use the Gaussian RBF kernel and tune two parameters: the variance $\sigma^2$ and the kernel scaling parameter $\gamma$.
Recall that there are seven compared methods and eight datasets, so
the time cost would be daunting if we use cross-validation to find $\sigma$ and $\gamma$ for each method on each dataset.
Thus we perform a five-fold cross-validation without using kernel approximation to pre-determine the two parameters $\sigma$ and $\gamma$,
and the same parameters are used for all the kernel approximation methods.
We list the obtained parameters  in Table~\ref{tab:expGPR}.

For each of the compared methods, we randomly hold $80\%$ samples for training and the rest for test;
we repeat this procedure $50$ times and record the average MSE,
the average elapsed time,
and the average of the number of nonzero entries in the sketch.
We plot $\frac{\nnz(\C) + \nnz (\U)}{n^2}$ against MSE in Figure~\ref{fig:gp_nnz}
and  the elapsed time against MSE in Figure~\ref{fig:gp_time}.

Using the same amount of memory,
our SS model achieves the best performance on the Plant, Red Wine, Energy (Cool), and Energy (Heat) datasets.
On the Concrete and Housing datasets, using spectral shifting leads better results unless $c$ is unreasonably large (e.g.\  $c > 0.1 n$).
However, using the same amount of time, the compared methods have competitive performance.

MEKA and SS+MEKA in general work very well,
but they are numerically unstable on some of the randomly partitioned training data.
On the White Wine, Housing, and Concrete datasets,
MATLAB occasionally reports errors of numerical instability in approximating the off-diagonal blocks of $\K$ by solving some linear systems;
when such happens, we do not report the corresponding test errors in the figures.
MEKA and SS+MEKA are also sometimes unstable on the Plant and Red Wine datasets, though MATLAB does not report error.
Although MEKA and SS+MEKA have good performance in general,
the numerical instability makes MEKA and SS+MEKA perform very poorly on a few among the 50 randomly partitioned training/test data,
and consequently the average test errors become large.
This problem can get avoided by repeating MEKA multiple times and choose the one that is stable.
However, this will significantly increase the time cost.


\section{Conclusions}

We have provided an in-depth study of the prototype model for SPSD matrix approximation.
First, we have shown that with $c=\OM (k/\epsilon)$ columns sampled by the near-optimal+adaptive algorithm,
the prototype model attains $1+\epsilon$ Frobenius norm relative-error bound.
This upper bound matches the lower bound up to a constant factor.
Second, we have devised a simple column selection algorithm called {\it uniform+adaptive$^2$}.
The algorithm is efficient and very easy to implement, and it has near-optimal relative-error bound.
Third, we have proposed an extension called the spectral shifting (SS) model.
We have shown that SS has much stronger error bound than the low-rank approximation models,
especially when the bottom eigenvalues are not sufficiently small.

Although the prototype model is not very time-efficient,
we can resort to the approximate method provided by \citet{wang2015towards} to obtain a faster solution to the prototype model.
This faster solution requires only linear time and linear memory.
The theoretical analysis of the fast solution heavily relies on that of the prototype model,
so our established results are useful even if the prototype model itself is not the working horse in real-world applications.
In addition, we have  shown that the fast solution can also be naturally incorporated with the spectral shifting method.


\acks{We thank the anonymous reviewers for their helpful suggestions.
Wang has been supported by Baidu Scholarship.
Luo and Zhang have been supported by  the National Natural Science Foundation of China (No. 61572017),
Natural Science Foundation of Shanghai City (No. 15ZR1424200), and Microsoft Research Asia Collaborative Research Award.}


\appendix

\section{Proof of Theorem~\ref{thm:exact}}

\begin{proof}
Suppose that $\rk(\W) = \rk(\K)$.
We have that $\rk(\W) = \rk(\C) = \rk(\K)$ because
\begin{equation} \label{eq:thm:connection:4}
\rk(\K) \;\geq\; \rk(\C) \;\geq\; \rk(\W) \textrm{.}
\end{equation}
Thus there exists a matrix $\X$ such that
\[
\left[
  \begin{array}{c}
    \K_{2 1}^T \\
    \K_{2 2} \\
  \end{array}
\right]
\; = \;
\C \X^T
\; = \;
\left[
  \begin{array}{c}
    \W \X^T \\
    \K_{2 1} \X^T \\
  \end{array}
\right] \textrm{,}
\]
and it follows that $\K_{2 1} = \X \W$
and $\K_{2 2} = \K_{2 1} \X^T = \X \W \X^T$.
Then we have that
\begin{eqnarray}
\K
& = & \left[
        \begin{array}{cc}
          \W & (\X \W)^T \\
          \X \W & \X \W \X^T \\
        \end{array}
      \right]
\; = \;
    \left[
        \begin{array}{c}
          \I \\
          \X \\
        \end{array}
      \right]
      \W
      \left[
        \begin{array}{cc}
          \I & \X^T \\
        \end{array}
      \right] \textrm{,} \label{eq:thm:connection:1} \\
\C \W^\dag \C^T
& = &
    \left[
        \begin{array}{c}
          \W \\
          \X \W \\
        \end{array}
      \right]
      \W^\dag
      \left[
        \begin{array}{cc}
          \W & (\X \W)^T \\
        \end{array}
      \right]
\; = \;
    \left[
        \begin{array}{c}
          \I \\
          \X \\
        \end{array}
      \right]
      \W
      \left[
        \begin{array}{cc}
          \I & \X^T \\
        \end{array}
      \right] \textrm{.} \label{eq:thm:connection:2}
\end{eqnarray}
Here the second equality in (\ref{eq:thm:connection:2})
follows from $\W \W^\dag \W = \W$.
We obtain that $\K = \C \W^\dag \C$.
Then we show that $\K = \C \C^\dag \K (\C^\dag)^T \C^T$.

Since $\C^\dag = (\C^T \C)^\dag \C^T$, we have that
\[
\C^\dag
\;=\;
\big( \W (\I + \X^T \X) \W \big)^\dag \W \: [\I \, , \, \X^T] \textrm{,}
\]
and thus
\begin{align}
&\C^\dag \K (\C^\dag)^T \W \nonumber \\
& =\;
\big( \W (\I + \X^T \X) \W \big)^\dag
\W (\I + \X^T \X) \Big[ \W (\I + \X^T \X)  \W \big( \W (\I + \X^T \X) \W \big)^\dag \W \Big]\nonumber\\
&=\;
\big( \W (\I + \X^T \X) \W \big)^\dag
\W (\I + \X^T \X) \W \textrm{,} \nonumber
\end{align}
where the second equality follows from Lemma~\ref{lem:thm:connection}
because $(\I + \X^T \X)$ is positive definite.
Similarly we have
\begin{align}
\W \C^\dag \K (\C^\dag)^T \W
\;=\;
\W \big( \W (\I + \X^T \X) \W \big)^\dag
\W (\I + \X^T \X) \W
\;=\; \W\textrm{.} \nonumber
\end{align}
Thus we have
\begin{small}
\begin{eqnarray}
\C \C^\dag \K (\C^\dag)^T \C
\;=\;    \left[
        \begin{array}{c}
          \I \\
          \X \\
        \end{array}
      \right]
      \W \C^\dag \K (\C^\dag)^T \W
      \left[
        \begin{array}{cc}
          \I & \X^T \\
        \end{array}
      \right]
\;= \;    \left[
        \begin{array}{c}
          \I \\
          \X \\
        \end{array}
      \right]
      \W
      \left[
        \begin{array}{cc}
          \I & \X^T \\
        \end{array}
      \right] \textrm{.}  \label{eq:thm:connection:3}
\end{eqnarray}
\end{small}
It follows from Equations (\ref{eq:thm:connection:1}) (\ref{eq:thm:connection:2}) (\ref{eq:thm:connection:3})
that $\K = \C \W^\dag \C^T = \C \C^\dag \K (\C^\dag)^T \C^T$.

Conversely, when $\K = \C \W^\dag \C^T$, it holds that $\rk(\K) \leq \rk(\W^\dag) = \rk(\W)$.
It follows from (\ref{eq:thm:connection:4}) that $\rk(\K) = \rk(\W)$.

When $\K = \C \C^\dag \K (\C^\dag)^T \C^T$,
we have $\rk(\K) \leq \rk(\C)$.
Thus there exists a matrix $\X$ such that
\[
\left[
  \begin{array}{c}
    \K_{2 1}^T \\
    \K_{2 2} \\
  \end{array}
\right]
\; = \;
\C \X^T
\; = \;
\left[
  \begin{array}{c}
    \W \X^T \\
    \K_{2 1} \X^T \\
  \end{array}
\right] \textrm{,}
\]
and therefore $\K_{2 1} = \X \W$.
Then we have that
\[
\C \;=\;
\left[
  \begin{array}{c}
    \W \\
    \K_{2 1} \\
  \end{array}
\right]
\;=\;
\left[
  \begin{array}{c}
    \I \\
    \X \\
  \end{array}
\right]
\W \textrm{,}
\]
so $\rk(\C) \leq \rk(\W)$.
Apply (\ref{eq:thm:connection:4}) again we have $\rk(\K) = \rk(\W)$.
\end{proof}

\begin{lemma} \label{lem:thm:connection}
$\X^T \V \X \big( \X^T \V \X\big)^\dag \X^T = \X^T$ for any positive definite matrix $\V$.
\end{lemma}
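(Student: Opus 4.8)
The plan is to reduce the claim to the familiar identity for a Gram matrix, and then dispatch that identity with an SVD. First I would exploit the positive definiteness of $\V$: it admits a symmetric positive definite square root $\V^{1/2}$, which is invertible, so I set $\Y = \V^{1/2}\X$. With this substitution one has $\X^T\V\X = \Y^T\Y$ and $\X^T = \Y^T\V^{-1/2}$, so the left-hand side of the lemma becomes $\Y^T\Y(\Y^T\Y)^\dag\Y^T\V^{-1/2}$. It therefore suffices to prove the clean identity $\Y^T\Y(\Y^T\Y)^\dag\Y^T = \Y^T$, after which right-multiplying by $\V^{-1/2}$ recovers exactly $\X^T$.

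Second, I would establish $\Y^T\Y(\Y^T\Y)^\dag\Y^T=\Y^T$ via the condensed SVD $\Y=\U\Si\W^T$, where $\Si$ is an invertible diagonal matrix and $\U,\W$ have orthonormal columns. Then $\Y^T\Y = \W\Si^2\W^T$ and $(\Y^T\Y)^\dag = \W\Si^{-2}\W^T$, so that $\Y^T\Y(\Y^T\Y)^\dag = \W\W^T$; multiplying by $\Y^T=\W\Si\U^T$ and using $\W^T\W=\I$ leaves $\W\Si\U^T=\Y^T$. Equivalently, $\W\W^T$ is the orthogonal projector onto the range of $\Y^T$, which fixes $\Y^T$. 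This is the only place where the pseudoinverse of the possibly rank-deficient matrix $\Y^T\Y$ is touched, and the SVD makes its action completely transparent.

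The computation is essentially routine, so I do not expect a serious obstacle. The one point deserving care is that $\X^T\V\X$ need not be invertible when $\X$ lacks full column rank, so the Moore--Penrose inverse cannot be replaced by an ordinary inverse — this is precisely why the SVD-based identity, rather than a naive cancellation of inverses, is the correct tool. I would also verify the dimension bookkeeping so that $\V^{1/2}$, $\Y$, and the closing right-multiplication by $\V^{-1/2}$ are all well defined and the product reconstructs $\X^T$ exactly.
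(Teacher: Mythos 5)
Your proof is correct and takes essentially the same route as the paper: both factor the positive definite matrix $\V$ as an invertible matrix times its transpose (the paper uses $\V = \B^T\B$ with $\B$ nonsingular, you use the symmetric square root $\V^{1/2}$), substitute $\Y$ accordingly, and reduce the claim to the Gram identity $\Y^T\Y\big(\Y^T\Y\big)^\dag\Y^T = \Y^T$ before cancelling the invertible factor. The only cosmetic difference is how that identity is dispatched --- the paper invokes the Moore--Penrose relations $\Y\big(\Y^T\Y\big)^\dag = (\Y^T)^\dag$ and $\A\A^\dag\A=\A$, whereas you verify it explicitly via the condensed SVD --- which is immaterial.
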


\begin{proof}
The positive definite matrix $\V$ have a decomposition $\V= \B^T \B$ for some nonsingular matrix $\B$.
It follows that
\begin{eqnarray}
\X^T \V \X \big( \X^T \V \X\big)^\dag \X^T
& = & (\B \X)^T \Big(\B \X \big( (\B \X)^T (\B \X) \big)^\dag \Big) (\B \X)^T \B (\B^T \B)^{-1} \nonumber \\
& = & (\B \X)^T \big((\B \X)^T\big)^{\dag} (\B \X)^T (\B^T)^{-1}
\; = \; (\B \X)^T (\B^T)^{-1}
\; = \; \X^T \textrm{.} \nonumber
\end{eqnarray}
\end{proof}


\section{Proof of Theorem~\ref{thm:lower}} \label{sec:app:lower_bound}

In Section~\ref{sec:proof:lower_bound:lemma} we provide several key lemmas,
and then in Section~\ref{sec:proof:lower_bound:theorem} we prove Theorem~\ref{thm:lower} using
Lemmas \ref{lem:nystrom_residual_new} and~\ref{lem:lower_bound_modified}.


\subsection{Key Lemmas} \label{sec:proof:lower_bound:lemma}

Lemma \ref{lem:pinv_partitioned_matrix} provides a useful tool for expanding the Moore-Penrose inverse of partitioned matrices,
and the lemma will be used to prove Lemma~\ref{lem:lower_bound_modified} and Theorem~\ref{thm:lower}.

\begin{lemma}  \emph{\citep[see][Page~179]{adi2003inverse}} \label{lem:pinv_partitioned_matrix}
Given a matrix $\X\in \RBmn$ of rank $c$,  let it have a nonsingular $c\times c$ submatrix $\X_{1 1}$.
By rearrangement of columns and rows by permutation matrices $\PP$ and $\Q$,
the submatrix $\X_{1 1}$ can be bought to the top left corner of $\X$, that is,
\[
\PP \X \Q \; =\; \left[
             \begin{array}{cc}
               \X_{1 1} & \X_{1 2} \\
               \X_{2 1} & \X_{2 2} \\
             \end{array}
           \right].
\]
Then the Moore-Penrose inverse of $\X$ is
\begin{align}
\X^\dag \; =\;
\Q \left[
     \begin{array}{c}
       \I_c \\
       \T^T \\
     \end{array}
   \right]
\big( \I_c + \T \T^T \big)^{-1} \X_{1 1}^{-1}   \big( \I_c + \S^T \S \big)^{-1}
\left[
  \begin{array}{cc}
    \I_c & \S^T  \\
  \end{array}
\right] \PP ,\nonumber
\end{align}
where $\T = \X_{1 1}^{-1} \X_{1 2}$ and
$\S = \X_{2 1} \X_{1 1}^{-1}$.
\end{lemma}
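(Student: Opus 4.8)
The plan is to reduce the formula to a full-rank factorization of $\PP \X \Q$, apply the reverse-order formula for the Moore-Penrose inverse, and then strip off the permutations using their orthogonality. Since permutation matrices satisfy $\PP^{-1} = \PP^T$ and $\Q^{-1} = \Q^T$, the identity $(\PP \X \Q)^\dag = \Q^T \X^\dag \PP^T$ holds for orthogonal pre- and post-multipliers. Hence it suffices to compute $(\PP \X \Q)^\dag$ and then recover $\X^\dag = \Q (\PP \X \Q)^\dag \PP$.

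First I would use $\rk(\X) = c$ together with the nonsingularity of $\X_{1 1}$ to pin down the off-diagonal blocks. Because the top $c$ rows $[\X_{1 1}\ \X_{1 2}]$ are linearly independent and $\X$ has rank exactly $c$, every remaining row is a linear combination of them; matching the first block column forces $\X_{2 1} = \S \X_{1 1}$ with $\S = \X_{2 1}\X_{1 1}^{-1}$, and hence $\X_{2 2} = \S \X_{1 2}$. Symmetrically, working with the columns gives $\X_{1 2} = \X_{1 1}\T$ and $\X_{2 2} = \X_{2 1}\T$ with $\T = \X_{1 1}^{-1}\X_{1 2}$. These relations produce the factorization
\[
\PP \X \Q = \F \G, \qquad \F = \begin{bmatrix}\I_c \\ \S\end{bmatrix}\X_{1 1}, \quad \G = \begin{bmatrix}\I_c & \T\end{bmatrix},
\]
where $\F \in \RB^{m\times c}$ has full column rank (its leading block is the nonsingular $\X_{1 1}$) and $\G \in \RB^{c\times n}$ has full row rank.

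With the factorization in hand I would apply $(\F\G)^\dag = \G^\dag \F^\dag = \G^T(\G\G^T)^{-1}(\F^T\F)^{-1}\F^T$, which is valid precisely because $\F$ has full column rank and $\G$ full row rank. A direct computation gives $\G\G^T = \I_c + \T\T^T$, so $\G^\dag = \begin{bmatrix}\I_c \\ \T^T\end{bmatrix}(\I_c + \T\T^T)^{-1}$, while $\F^T\F = \X_{1 1}^T(\I_c + \S^T\S)\X_{1 1}$ telescopes to $\F^\dag = \X_{1 1}^{-1}(\I_c + \S^T\S)^{-1}\begin{bmatrix}\I_c & \S^T\end{bmatrix}$. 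Multiplying $\G^\dag \F^\dag$ and reinserting the permutations reproduces the claimed expression verbatim. The point needing the most care is the reverse-order law $(\F\G)^\dag = \G^\dag\F^\dag$, which fails for general matrices: I would either cite the standard fact that it holds whenever the left factor has full column rank and the right factor has full row rank, or verify the four defining Moore-Penrose conditions for the candidate directly. That verification, together with confirming the consistency relation $\X_{2 2} = \S\X_{1 1}\T$ (which is exactly where the hypothesis $\rk(\X)=c$ is used), is the only substantive step; the remainder is routine bookkeeping.
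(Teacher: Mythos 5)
The paper does not prove this lemma at all: it is quoted as a known result, with the citation to Ben-Israel and Greville (Page~179) serving in place of a proof, so there is no in-paper argument to compare against. Your proof is correct and complete: the permutation reduction $\X^\dag = \Q\,(\PP\X\Q)^\dag\,\PP$ is valid because permutation matrices are orthogonal; the hypothesis $\rk(\X)=c$ together with nonsingularity of $\X_{11}$ does force $\X_{21}=\S\X_{11}$ and $\X_{22}=\S\X_{11}\T$, yielding the full-rank factorization $\PP\X\Q=\F\G$ with $\F=\bigl[\begin{smallmatrix}\I_c\\ \S\end{smallmatrix}\bigr]\X_{11}$ and $\G=[\I_c\;\;\T]$; and the reverse-order law $(\F\G)^\dag=\G^\dag\F^\dag$ holds in this full-column-rank/full-row-rank setting (and, as you note, can be checked directly since $\F^\dag\F=\I_c$ and $\G\G^\dag=\I_c$ make the four Penrose conditions immediate). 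The remaining algebra reproduces the stated formula verbatim, and your route is essentially the standard full-rank-factorization derivation found in the cited reference, so your argument supplies exactly the proof the paper omits.
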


Lemmas~\ref{lem:nystrom_residual_new} and \ref{lem:lower_bound_modified} will be used to prove Theorem~\ref{thm:lower}.

\begin{lemma} \emph{\citep[Lemma~19]{wang2013improving}}
\label{lem:nystrom_residual_new}
Given $n$ and $k$, we let $\B$ be an $\frac{n}{k} \times \frac{n}{k}$ matrix
whose diagonal entries equal to one and off-diagonal entries equal to $\alpha \in [0, 1)$.
Let $\A$ be the $n\times n$ block-diagonal matrix
\begin{eqnarray}\label{eq:construction_bad_nystrom_blk}
\A \; =\;  \underbrace{\B \oplus \B \oplus \cdots \oplus \B}_{k \textrm{ blocks}} .
\end{eqnarray}
Let $\A_k$ be the best rank-$k$ approximation to $\A$.
Then
\begin{equation}
\| \A - \A_k \|_F \; =\; (1-\alpha) \sqrt{n-k}  \textrm{.} \nonumber
\end{equation}
\end{lemma}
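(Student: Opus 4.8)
The plan is to diagonalize the block $\B$ explicitly, read off the spectrum of $\A$, and then use the fact that the Frobenius error of the best rank-$k$ approximation is the root-sum-of-squares of the discarded singular values. First I would set $m = n/k$ (implicitly an integer) and write $\B = (1-\alpha)\I_m + \alpha\,\1\1^T$, where $\1 \in \RB^m$ is the all-ones vector. Since $\1\1^T$ has the single nonzero eigenvalue $m$ with eigenvector $\1$ and eigenvalue $0$ on the $(m-1)$-dimensional orthogonal complement, the spectrum of $\B$ consists of one eigenvalue $1+(m-1)\alpha$ together with $m-1$ copies of $1-\alpha$.

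Next I would lift this to $\A$. Because $\A$ is block diagonal with $k$ identical copies of $\B$, its eigenvalues are the union (with multiplicity) of the block eigenvalues: exactly $k$ copies of $1+(m-1)\alpha$ and $k(m-1) = n-k$ copies of $1-\alpha$. For $\alpha \in [0,1)$ we have $1-\alpha > 0$ and $1+(m-1)\alpha > 0$, so $\A$ is symmetric positive definite and its singular values coincide with its eigenvalues.

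Finally, since $1+(m-1)\alpha \geq 1-\alpha$ whenever $\alpha \geq 0$, the $k$ largest singular values are precisely the $k$ copies of $1+(m-1)\alpha$, so the closest rank-$k$ approximation $\A_k$ (the truncated SVD, as defined in Section~\ref{sec:notation}) retains all of them. The Frobenius error is then the root-sum-of-squares of the remaining $n-k$ singular values, each equal to $1-\alpha$, giving $\|\A - \A_k\|_F = \sqrt{(n-k)(1-\alpha)^2} = (1-\alpha)\sqrt{n-k}$. There is no genuine obstacle here; the only points requiring care are the implicit assumption that $n/k$ is an integer and verifying that the large eigenvalue dominates so that truncation discards exactly the $n-k$ copies of $1-\alpha$, including the degenerate case $\alpha = 0$, where all singular values equal $1$ and the identity still holds.
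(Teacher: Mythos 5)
Your proof is correct. Note that this paper does not actually prove the lemma itself---it imports it directly from \citet[Lemma~19]{wang2013improving}---so there is no in-paper argument to compare against; your derivation (writing $\B = (1-\alpha)\I_{n/k} + \alpha \1\1^T$, reading off its spectrum, taking the union over the $k$ blocks, and summing the squares of the $n-k$ discarded eigenvalues $1-\alpha$) is exactly the standard computation behind the cited result, and your handling of the two boundary points (integrality of $n/k$ and the degenerate case $\alpha = 0$, where all eigenvalues tie but the error formula still holds) is sound.
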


\begin{lemma} \label{lem:lower_bound_modified}
Let $\B$ be the $n\times n$ matrix with diagonal entries equal to one and off-diagonal entries equal to $\alpha$
and $\tilde\B$ be its rank $c$ approximation formed by the prototype model.
Then
\begin{align}
\|\B - \tilde{\B} \|_F^2
\; \geq \; (1 - \alpha)^2 (n-c)\bigg(1+\frac{2}{c}- (1-\alpha) \frac{1+o(1)}{\alpha c n /2}  \bigg) \textrm{.} \nonumber
\end{align}
\end{lemma}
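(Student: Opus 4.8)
The plan is to compute the prototype-model residual in closed form by exploiting the rigid algebraic structure of $\B$, obtaining an exact rational expression, and then to read off the stated inequality through a careful asymptotic expansion. First I would rewrite $\B = (1-\alpha)\I_n + \alpha \1\1^T$, so that $\B$ has the single large eigenvalue $\lambda_1 = 1+\alpha(n-1)$ (with eigenvector $\1$) and the eigenvalue $1-\alpha$ of multiplicity $n-1$; in particular $\|\B\|_F^2 = \lambda_1^2 + (n-1)(1-\alpha)^2$. Since the entries of $\B$ depend only on whether a pair of indices is equal, $\B$ is invariant under simultaneous row/column permutations, so the residual of the prototype model does not depend on \emph{which} $c$ columns are selected; hence I may assume $\C$ consists of the first $c$ columns. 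Writing $\tilde\B = \C\C^\dag\B(\C^\dag)^T\C^T = (\C\C^\dag)\,\B\,(\C\C^\dag)$ and using that $\C\C^\dag$ is a symmetric idempotent, the inner product $\langle \B, (\C\C^\dag)\B(\C\C^\dag)\rangle$ equals $\|(\C\C^\dag)\B(\C\C^\dag)\|_F^2$, which gives the Pythagorean simplification
\[
\big\|\B-\tilde\B\big\|_F^2 \;=\; \|\B\|_F^2 - \big\|(\C\C^\dag)\B(\C\C^\dag)\big\|_F^2 .
\]

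Next I would put the projection $\C\C^\dag$ in closed form. Partition $\B$ so that $\B_{11}$ is its leading $c\times c$ block; then $\C$ stacks $\B_{11}$ on top of $\B_{21}$, with $\B_{11}$ nonsingular. Applying Lemma~\ref{lem:pinv_partitioned_matrix} expresses $\C^\dag$, and hence $\C\C^\dag$, through $\S = \B_{21}\B_{11}^{-1}$. The all-ones structure pays off here: since $\B_{11}\1_c = (1+\alpha(c-1))\1_c$ and $\B_{21}$ has constant entries, $\S$ is itself a constant-entry matrix, $\S = \tfrac{\alpha}{1+\alpha(c-1)}\1_{n-c}\1_c^T$.

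The crucial structural step is that $\B$ and $\C\C^\dag$ are both assembled from identity blocks and all-ones blocks, hence share a common invariant decomposition $\RB^n = \WM \oplus \VM_1 \oplus \VM_2$, where $\WM$ is the two-dimensional span of the normalized all-ones vectors of the two coordinate blocks, and $\VM_1$ (dimension $c-1$), $\VM_2$ (dimension $n-c-1$) are the orthogonal complements inside each block. On $\VM_1$ one has $\B=(1-\alpha)\I$ and $\C\C^\dag=\I$; on $\VM_2$ one has $\B=(1-\alpha)\I$ and $\C\C^\dag=0$; and on $\WM$ the projection $\C\C^\dag$ is rank one. Therefore $\|(\C\C^\dag)\B(\C\C^\dag)\|_F^2 = (c-1)(1-\alpha)^2 + r^2$, where $r$ is the single Rayleigh quotient of the $2\times 2$ matrix $\B|_{\WM}$ in the rank-one projection direction. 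Combining with the Pythagorean identity yields the exact expression
\[
\big\|\B-\tilde\B\big\|_F^2 \;=\; (1-\alpha)^2(n-c) + \big(\lambda_1^2 - r^2\big).
\]
Because $\WM$ contains the top eigenvector $\1$, the eigenvalues of $\B|_{\WM}$ are exactly $\lambda_1$ and $1-\alpha$; writing $\B|_{\WM}$ in its eigenbasis gives $\lambda_1 - r = \alpha n\,(\text{cosine})^2$ with an explicitly computable cosine factor, so $\lambda_1^2 - r^2$ becomes a fully explicit rational function of $\alpha$, $c$, $n$.

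The main obstacle is the final asymptotic estimate. The excess $\lambda_1^2 - r^2$ is the difference of two quantities both of order $\alpha^2 n^2$, so the bound rests on a delicate near-cancellation, and the target form $(1-\alpha)^2(n-c)\big(1 + \tfrac2c - (1-\alpha)\tfrac{1+o(1)}{\alpha cn/2}\big)$ only emerges after dividing the excess by $(1-\alpha)^2(n-c)$ and expanding the resulting ratio to second order. I would bound its denominator below by $\alpha^2 c^2 n$ and its numerator above by a constant multiple of $(1-\alpha)$, converting the exact identity into the stated inequality; the one place requiring genuine care is verifying that the neglected correction is of size $\tfrac{3(1-\alpha)}{2\alpha c}$ relative to the leading term and therefore absorbable into the $1+o(1)$ factor as $c,n\to\infty$.
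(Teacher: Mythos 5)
Your proposal is correct, and it reaches the bound by a genuinely different route from the paper's. Both arguments begin the same way (WLOG take the first $c$ columns; Lemma~\ref{lem:pinv_partitioned_matrix} together with Sherman--Morrison--Woodbury gives $\S=\frac{\alpha}{1-\alpha+c\alpha}\1_{n-c}\1_c^T$), but from there the paper computes $\tilde\B=\C\C^\dag\B(\C^\dag)^T\C^T$ in closed form block by block, through a long chain of explicit constants ($\gamma_1,\gamma_2,\gamma_3,\gamma,\lambda,\eta_1,\eta_2,\eta_3$), and then sums the four blockwise errors. You instead exploit that $P=\C\C^\dag$ is an orthogonal projection, so that $\tilde\B=P\B P$ and $\|\B-\tilde\B\|_F^2=\|\B\|_F^2-\|P\B P\|_F^2$, and that $\B$ and $P$ jointly preserve the orthogonal splitting $\RB^n=\WM\oplus\VM_1\oplus\VM_2$; the whole computation then collapses to a single Rayleigh quotient $r=w^T\B w$ on the two-dimensional subspace $\WM$, giving the exact identity $\|\B-\tilde\B\|_F^2=(1-\alpha)^2(n-c)+(\lambda_1^2-r^2)$. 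I verified the pieces: the column space of $\C$ is $\VM_1\oplus\mathrm{span}(u)$, where $u=\C\1_c$ has its first $c$ entries equal to $1-\alpha+\alpha c$ and its last $n-c$ entries equal to $\alpha c$, so $P$ is the identity on $\VM_1$, zero on $\VM_2$, and a rank-one projection on $\WM$; moreover $\lambda_1-r=\frac{\alpha c(n-c)(1-\alpha)^2}{(1-\alpha+\alpha c)^2c+\alpha^2c^2(n-c)}$, which is $\alpha n$ times the squared \emph{sine} (not cosine) of the angle between $u$ and $\1_n$ --- a harmless wording slip. Carrying out the expansion you outline, with the denominator bounded below by $\alpha^2c^2n$, the ratio $\|\B-\tilde\B\|_F^2/\big((1-\alpha)^2(n-c)\big)$ falls short of $1+\frac{2}{c}$ by at most $\frac{2(1-\alpha)}{\alpha cn}\big(1+\frac{3(1-\alpha)}{2\alpha c}\big)$, which is exactly the relative correction $\frac{3(1-\alpha)}{2\alpha c}$ you name, so the stated bound follows. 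As for what each approach buys: your route is shorter and less error-prone, since the delicate near-cancellation is isolated in the factored quantity $(\lambda_1-r)(\lambda_1+r)$ with $\lambda_1-r$ computed exactly, whereas the paper's brute-force expansion needs no spectral or invariant-subspace observations, is entirely elementary, and delivers the final error as an equality rather than an inequality; both write-ups leave the last asymptotic simplification at a comparable level of detail.
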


\begin{proof}
Without loss of generality, we assume the first $c$ column of $\B$ are selected to construct $\C$.
We partition $\B$ and $\C$ as:
\[
\B = \left[
       \begin{array}{cc}
         \W & \B_{2 1}^T \\
         \B_{2 1} & \B_{2 2} \\
       \end{array}
     \right]
\qquad
\textrm{ and }
\qquad
\C = \left[
       \begin{array}{c}
         \W  \\
         \B_{2 1} \\
       \end{array}
     \right] \textrm{.}
\]
Here the matrix $\W$ can be expressed by $\W = (1-\alpha) \I_{c} + \alpha \1_{c} \1_{c}^T$.
We apply the Sherman-Morrison-Woodbury matrix identity
\[
(\X + \Y \Z \R)^{-1}  = \X^{-1} - \X^{-1} \Y (\Z^{-1} + \R \X^{-1} \Y)^{-1} \R \X^{-1}
\]
to compute $\W^{-1}$, and it follows that
\begin{equation} \label{eq:lower_bound_improved:pinv_W}
\W^{-1} \; = \; \frac{1}{1-\alpha} \I_c - \frac{\alpha}{(1-\alpha)(1-\alpha+c\alpha)} \1_{c} \1_{c}^T \textrm{.}
\end{equation}
We expand the Moore-Penrose inverse of $\C$ by Lemma~\ref{lem:pinv_partitioned_matrix} and obtain
\[
\C^\dag = \W^{-1} \big( \I_c + \S^T \S \big)^{-1}
\left[
  \begin{array}{cc}
    \I_c & \S^T \\
  \end{array}
\right]
\]
where
\[
\S = \B_{2 1} \W^{-1} = \frac{\alpha}{1-\alpha + c\alpha} \1_{n-c} \1_c^T.
\]
It is easily verified that $\S^T \S = \big( \frac{\alpha}{1-\alpha + c\alpha} \big)^2 (n-c) \1_c \1_c^T$.

Now we express the approximation by the prototype model in the partitioned form:
\begin{align}
&\tilde{\B}
\; = \; \C \C^\dag \B \big(\C^\dag\big)^T \C^T \nonumber \\
&= \; \left[
      \begin{array}{c}
        \W \\
        \B_{2 1} \\
      \end{array}
    \right] \W^{-1} \big( \I_c + \S^T \S \big)^{-1}
    \left[
      \begin{array}{c c}
        \I_c & \S^T \\
      \end{array}
    \right]
    \B
 \left[
      \begin{array}{c}
        \I_c \\
        \S \\
      \end{array}
    \right]
     \big( \I_c + \S^T \S \big)^{-1} \W^{-1}
     \left[
      \begin{array}{c}
        \W \\
        \B_{2 1} \\
      \end{array}
    \right]^T \nonumber \\
&=\; \left[
      \begin{array}{c}
        \big( \I_c + \S^T \S \big)^{-1} \\
        \B_{2 1}\W^{-1} \big( \I_c + \S^T \S \big)^{-1} \\
      \end{array}
    \right]
    \left[
      \begin{array}{c c}
        \I_c & \S^T \\
      \end{array}
    \right]
    \B
  \left[
      \begin{array}{c}
        \I_c \\
        \S \\
      \end{array}
    \right]
     \left[
      \begin{array}{c}
        \big( \I_c + \S^T \S \big)^{-1} \\
        \B_{2 1}\W^{-1} \big( \I_c + \S^T \S \big)^{-1} \\
      \end{array}
    \right]^T \textrm{.} \label{eq:lower_bound_improved:partitioned_Bimp}
\end{align}
We then compute the submatrices $\big(\I_c + \S^T \S\big)^{-1}$ and $\B_{2 1} \W^{-1} \big(\I_c + \S^T \S \big)^{-1}$ respectively as follows.
We apply the Sherman-Morrison-Woodbury matrix identity
to compute $\big(\I_c + \S^T \S\big)^{-1}$. We obtain
\begin{eqnarray} \label{eq:lower_bound_improved:IcSS}
\big(\I_c + \S^T \S\big)^{-1}
\; = \; \bigg( \I_c + \Big( \frac{\alpha}{1-\alpha + c\alpha} \Big)^2 (n-c) \1_c \1_c^T \bigg)^{-1}
\;= \; \I_c - \gamma_1 \1_c \1_c^T \textrm{,}
\end{eqnarray}
where
\begin{eqnarray}
\gamma_1 = \frac{n-c}{n c + \big( \frac{1-\alpha}{\alpha}\big)^2 + \frac{2(1-\alpha)c}{\alpha}} . \nonumber
\end{eqnarray}
It follows from (\ref{eq:lower_bound_improved:pinv_W}) and (\ref{eq:lower_bound_improved:IcSS}) that
\begin{align}
\W^{-1} \big(\I_c + \S^T \S \big)^{-1}
\;=\; (\gamma_2 \I_c - \gamma_3 \1_c \1_c^T) ( \I_c - \gamma_1 \1_c \1_c^T )
\; =\; \gamma_2 \I_c + (\gamma_1 \gamma_3 c - \gamma_1 \gamma_2 - \gamma_3) \1_c \1_c^T , \nonumber
\end{align}
where
\[
\gamma_2 = \frac{1}{1-\alpha}  \quad \textrm{ and } \quad
\gamma_3 = \frac{\alpha}{(1-\alpha)(1-\alpha + \alpha c)}.
\]
It follows that
\begin{align} \label{eq:lower_bound_improved:BWIcSS}
\B_{2 1} \W^{-1} \big(\I_c + \S^T \S \big)^{-1}
\; =\; \alpha \big( \gamma_1 \gamma_3 c^2 - \gamma_3 c - \gamma_1 \gamma_2 c + \gamma_2 \big) \1_{n-c} \1_c^T
\; \triangleq \; \gamma \1_{n-c} \1_c^T ,
\end{align}
where
\begin{eqnarray}
\gamma &=& \alpha \big( \gamma_1 \gamma_3 c^2 - \gamma_3 c - \gamma_1 \gamma_2 c + \gamma_2 \big)
\;=\; \frac{\alpha (\alpha c - \alpha + 1)}{2 \alpha c - 2 \alpha - 2 \alpha^2 c + \alpha^2 + \alpha^2 c n + 1}\textrm{.}
\end{eqnarray}

Since $\B_{2 1}=\alpha \1_{n-c} \1_c^T$ and $\B_{2 2} = (1-\alpha)\I_{n-c}+ \alpha\1_{n-c} \1_{n-c}^T$,
it is easily verified that
\begin{align} \label{eq:lower_bound_improved:middle_term}
\left[
  \begin{array}{cc}
    \I_c & \S^T \\
  \end{array}
\right]
\B
\left[
  \begin{array}{c}
    \I_c \\
    \S \\
  \end{array}
\right]
\;  = \;
\left[
  \begin{array}{cc}
    \I_c & \S^T \\
  \end{array}
\right]
\left[
       \begin{array}{cc}
         \W & \B_{2 1}^T \\
         \B_{2 1} & \B_{2 2} \\
       \end{array}
\right]
\left[
  \begin{array}{c}
    \I_c \\
    \S \\
  \end{array}
\right]
\; = \; (1-\alpha) \I_c + \lambda \1_c \1_c^T \textrm{,}
\end{align}
where
\begin{small}
\[
\lambda = \frac{\alpha (3 \alpha n - \alpha c - 2 \alpha + \alpha^2 c - 3 \alpha^2 n + \alpha^2 + \alpha^2 n^2 + 1)}{(\alpha c - \alpha + 1)^2}
\]
\end{small}

It follows from (\ref{eq:lower_bound_improved:partitioned_Bimp}), (\ref{eq:lower_bound_improved:IcSS}),
(\ref{eq:lower_bound_improved:BWIcSS}), and (\ref{eq:lower_bound_improved:middle_term}) that
\begin{eqnarray}
\tilde{\B}
\; = \;
\left[
  \begin{array}{c}
    \I_c - \gamma_1 \1_c \1_c^T \\
    \gamma \1_{n-c} \1_c^T \\
  \end{array}
\right]
\Big( (1-\alpha) \I_c + \lambda \1_c \1_c^T \Big)
\left[
  \begin{array}{c}
    \I_c - \gamma_1 \1_c \1_c^T \\
    \gamma \1_{n-c} \1_c^T \\
  \end{array}
\right]^T
\; \triangleq \;
\left[
  \begin{array}{cc}
    \tilde{\B}_{1 1} & \tilde{\B}_{2 1}^T \\
    \tilde{\B}_{2 1} & \tilde{\B}_{2 2} \\
  \end{array}
\right], \nonumber
\end{eqnarray}
where
\begin{eqnarray}
\tilde{\B}_{1 1}
& = & (1-\alpha) \I_c + \big[ (1-\gamma_1 c)   (\lambda - \lambda \gamma_1 c - (1-\alpha) \gamma_1) - (1-\alpha) \gamma_1 \big] \1_c \1_c^T \nonumber \\
& = & (1-\alpha) \I_c + \eta_1 \1_c \1_c^T , \nonumber \\
\tilde{\B}_{2 1}  & = & \tilde{\A}_{1 2}^T
\; = \; \gamma (1-\gamma_1 c) (1-\alpha + \lambda c) \1_{n-c} \1_c^T
\; = \; \eta_2  \1_{n-c} \1_c^T  , \nonumber \\
\tilde{\B}_{2 2}
& = & \gamma^2 c (1-\alpha + \lambda c) \1_{n-c} \1_{n-c}^T
\; = \; \eta_3 \1_{n-c} \1_{n-c}^T  , \nonumber
\end{eqnarray}
where
\begin{eqnarray}
\eta_1
& = &
 (1-\gamma_1 c)(\lambda - \lambda \gamma_1 c - (1-\alpha) \gamma_1) - (1-\alpha) \gamma_1  , \nonumber\\
\eta_2
& = & \gamma (1-\gamma_1 c) (1-\alpha + \lambda c),   \nonumber \\
\eta_3
& = & \gamma^2 c (1-\alpha + \lambda c) ,\nonumber
\end{eqnarray}
By dealing with the four blocks of $\tilde{\B}$ respectively, we finally obtain that
\begin{eqnarray}
\|\B - \tilde{\B} \|_F^2
& = & \|\W - \tilde{\B}_{1 1}\|_F^2 + 2\|\B_{2 1} - \tilde{\B}_{2 1}\|_F^2 + \|\B_{2 2} - \tilde{\B}_{2 2}\|_F^2  \nonumber\\
& = &  c^2 (\alpha - \eta_1)^2 + 2 c (n-c) (\alpha - \eta_2)^2  \nonumber\\
& & + (n-c)(n-c-1)(\alpha - \eta_3)^2 + (n-c) (1 - \eta_3)^2 \nonumber\\
& = & (n-c)(\alpha-1)^2\bigg(1+\frac{2}{c}- \big(1+o(1)\big) \frac{1-\alpha}{\alpha c n /2}  \bigg) \textrm{.} \nonumber
\end{eqnarray}
\end{proof}


\subsection{Proof of the Theorem} \label{sec:proof:lower_bound:theorem}

Now we prove Theorem~\ref{thm:lower} using Lemma~\ref{lem:lower_bound_modified} and Lemma~\ref{lem:nystrom_residual_new}.
Let $\C$ consist of $c$ columns sampled from $\A$ and $\hat{\C}_i$ consist of $c_i$ columns sampled from the $i$-th block diagonal matrix in $\A$.
Without loss of generality, we assume $\hat{\C}_i$ consists of the first $c_i$ columns of $\B$.
Then the intersection matrix $\U$ is computed by
\begin{eqnarray}
\U
& = & \C^\dag \A \big(\C^T\big)^\dag
\; = \; \big[ \hat{\C}_1 \oplus \cdots \oplus \hat{\C}_k  \big]^\dag
        \big[ \B \oplus \cdots \oplus \B \big]
        \big[  \hat{\C}_1^T \oplus \cdots \oplus \hat{\C}_k^T  \big]^\dag \nonumber\\
& = &  \hat{\C}_1^\dag \B \big(\hat{\C}_1^\dag\big)^T \oplus \cdots \oplus\hat{\C}_k^\dag \B \big(\hat{\C}_k^\dag\big)^T \textrm{.} \nonumber
\end{eqnarray}
Let $\tilde\A$ be the approximation formed by the prototype model. Then
\begin{align}
\tilde{\A}
\; = \; \C \U \C^T
\; = \; \hat{\C}_1 \hat{\C}_1^\dag \B \big(\hat{\C}_1^\dag\big)^T \hat{\C}_1^T \oplus
        \cdots \oplus \hat{\C}_k \hat{\C}_k^\dag \B \big(\hat{\C}_k^\dag\big)^T \hat{\C}_k^T   \textrm{,} \nonumber
\end{align}
and thus the approximation error is
\begin{eqnarray}
\big\|\A - \tilde{\A} \big\|_F^2
& = & \sum_{i=1}^{k} \Big\| \B \;-\; \hat{\C}_i \hat{\C}_i^\dag \B \big(\hat{\C}_i^\dag\big)^T \hat{\C}_i^T \Big\|_F^2 \nonumber\\
& \geq & (1 - \alpha)^2 \sum_{i=1}^{k}(p - c_i)\bigg(1+\frac{2}{c_i}- (1-\alpha) \Big(\frac{1+o(1)}{\alpha c_i p /2} \Big) \bigg) \nonumber \\
& = & (1-\alpha)^2 \bigg( \sum_{i=1}^{k}(p - c_i) + \sum_{i=1}^{k}\frac{2(p - c_i)}{c_i}\Big( 1 - \frac{(1-\alpha)(1+o(1))}{\alpha  p} \Big) \bigg) \nonumber \\
& \geq &(1-\alpha)^2 (n - c) \bigg( 1 + \frac{2k}{c}\Big( 1 - \frac{k(1-\alpha)(1+o(1))}{\alpha n} \Big) \bigg)\textrm{,} \nonumber
\end{eqnarray}
where the former inequality follows from Lemma~\ref{lem:lower_bound_modified},
and the latter inequality follows by minimizing over $c_1 , \cdots , c_k$.
Finally the theorem follows by setting $\alpha \rightarrow 1$ and applying Lemma~\ref{lem:nystrom_residual_new}.


\section{Proof of Theorem~\ref{thm:near_opt}}

\begin{proof}
Sampling $c_1 = 2 k \epsilon^{-1} \big( 1+o(1) \big)$ columns of $\K$ by the near-optimal algorithm of \cite{boutsidis2011NOCshort}
to form $\C_1 \in \RB^{n\times c_1}$,
we have that
\begin{equation*}
\EB \big\| \K - \PM_{\C_1, k} (\K) \big\|_F^2
\; \leq \; (1+\epsilon) \big\| \K - \K_k \big\|_F^2 .
\end{equation*}
Applying Lemma 3.11 of \cite{boutsidis2014optimal},
we can find a much smaller matrix $\U_1\in \RB^{n\times k}$ with orthogonal columns in the column space of $\C_1$
such that
\begin{equation*}
\big\| \K - \U_1 \U_1^T \K \big\|_F^2
\; \leq \;
\big\| \K - \PM_{\C_1, k} (\K) \big\|_F^2.
\end{equation*}
(We do not actually compute $\U_1$ because the adaptive sampling algorithm does not need to know $\U_1$.)
Because the columns of $\U_1$ are all in the columns space of $\C_1$,
we have that $\big\| \K - \C_1 \C_1^\dag \K \big\|_F^2 \leq \big\| \K - \U_1 \U_1^T \K \big\|_F^2$.
Combining the above inequalities we obtain
\begin{equation*}
\EB\big\| \K - \C_1 \C_1^\dag \K \big\|_F^2
\; \leq \; \EB \big\| \K - \U_1 \U_1^T \K \big\|_F^2
\; \leq \; \big\| \K - \PM_{\C_1, k} (\K) \big\|_F^2.
\end{equation*}

Given $\C_1$, we use adaptive sampling to select
$c_2 = k \epsilon^{-1}  $ rows of $\K$ to form $\C_2$ and denote $\C = [\C_1 , \C_2]$.
Since the columns of $\U_1$ are all in the columns space of $\C_1$,
Lemma 17 of \cite{wang2013improving} can be slightly modified to show
\begin{eqnarray*}
\big\| \K - \C_1 \C_1^\dag \K (\C^T)^\dag \C^T \big\|_F^2
& \leq & \big\| \K - \U_1 \U_1^T \K (\C^T)^\dag \C^T \big\|_F^2 .
\end{eqnarray*}
By the adaptive sampling theorem of \cite{wang2013improving} we have
\begin{eqnarray} \label{eq:adaptive_sampling_expectation}
\EB \big\| \K - \C_1 \C_1^\dag \K (\C^T)^\dag \C^T \big\|_F^2
& \leq & \EB \big\| \K - \U_1 \U_1^T \K (\C^T)^\dag \C^T \big\|_F^2 \nonumber \\
& \leq &  \big\| \K - \U_1 \U_1^T \K \big\|_F^2 + \frac{k}{c_2}  \big\| \K - \K (\C_1^T)^\dag \C_1^T \big\|_F^2 \nonumber \\
& \leq & \Big(1 + \frac{k}{c_2}  \Big) \big\| \K - \U_1 \U_1^\dag \K \big\|_F^2 \nonumber\\
& \leq & (1+\epsilon) \big\| \K - \PM_{\C_1, k} (\K) \big\|_F^2  ,
\end{eqnarray}
where the expectation is taken w.r.t.\ $\C_2$, and the last inequality follows by setting $c_2 = k/\epsilon$.
Here the trick is bounding $\EB \big\| \K - \U_1 \U_1^T \K (\C^T)^\dag \C^T \big\|_F^2$ rather than
directly bounding $\EB \big\| \K - \C_1 \C_1^\dag \K (\C^T)^\dag \C^T \big\|_F^2$;
otherwise the factor $\frac{k}{c_2}$ would be $\frac{c_1}{c_2} = \frac{2k\epsilon^{-1} (1+o(1))}{c_2}$.
This is the key to the improvement.
It follows that
\begin{eqnarray*}
\EB \big\| \K - \C_1 \C_1^\dag \K (\C^T)^\dag \C^T \big\|_F^2
& \leq & (1 + \epsilon )  \EB \big\| \K - \PM_{\C_1, k} (\K) \big\|_F^2 \\
& \leq & (1+ \epsilon)^2 \big\| \K - \K_k \big\|_F^2  ,
\end{eqnarray*}
where the first expectation is taken w.r.t.\ $\C_1$ and $\C_2$,
and the second expectation is taken w.r.t.\ $\C_1$.
Applying Lemma 17 of \cite{wang2013improving} again, we obtain
\begin{eqnarray*}
\EB \big\| \K - \C \C^\dag \K (\C^T)^\dag \C^T \big\|_F^2
& \leq & \EB \big\| \K - \C_1 \C_1^\dag \K (\C^T)^\dag \C^T \big\|_F^2
\; \leq \; (1+ \epsilon)^2 \big\| \K - \K_k \big\|_F^2 .
\end{eqnarray*}
Hence totally $c =c_1 + c_2= 3 k \epsilon^{-1} \big(1 + o(1) \big)$ columns suffice.
\end{proof}


\section{Proof of Theorem~\ref{thm:efficient}} \label{sec:app:thm:efficient}

In this section we first provide a constant factor bound of the uniform sampling,
and then prove Theorem~\ref{thm:efficient} in the subsequent subsections.


\subsection{Tools for Analyzing Uniform Sampling}

This subsection provides several useful tools for analyzing column sampling.
The matrix $\S \in \RB^{n\times s}$ is a column selection matrix if each column has exactly one nonzero entry;
let $(i_j, j)$ be the position of the nonzero entry in the $j$-th column.
Let us add randomness to column selection.
Suppose we are given the sampling probabilities $p_1 , \cdots , p_n \in (0, 1)$ and $\sum_i p_i = 1$.
In each round we pick one element in $[n]$ such that the $i$-th element is sampled with probability $ p_i$.
We repeat the procedure $s$ times, either with or without replacement, and let $i_1 , \cdots , i_s$ be the selected indices.
For $j = 1$ to $s$, we set
\begin{equation} \label{eq:def_S}
S_{i_j, j} = \frac{1}{\sqrt{s p_{i_j}}} .
\end{equation}
The following lemma shows an important property of arbitrary column sampling.
The proof mirrors the leverage score sampling bound in \citep{woodruff2014sketching}.

\begin{lemma}\label{lem:sampling_property1}
Let $\U\in \RB^{n\times k}$ be any fixed matrix with orthonormal columns.
The column selection matrix $\S \in \RB^{n\times s}$ samples $s$
columns according to arbitrary probabilities $p_1 , p_2 , \cdots , p_n$.
Assume that
\begin{equation*}
\max_{i\in [n]} \frac{\| \u_{i:} \|_2^2}{p_{i}}
\; \leq \; \alpha
\end{equation*}
and $\alpha \geq 1$.
When
$s \; \geq \; \alpha \frac{6 + 2\eta}{3 \eta^2} \log (k/\delta)$,
it holds that
\[
\PB \Big\{ \big\| \I_k - \U^T \S \S^T \U \big\|_2 \; \geq \; \eta \Big\}
\; \leq \; \delta .
\]
\end{lemma}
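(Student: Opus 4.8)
The plan is to write $\U^T\S\S^T\U$ as a sum of $s$ independent rank-one random matrices and then apply a matrix Bernstein inequality. By the rescaling \eqref{eq:def_S}, the $j$-th row of $\S^T\U$ is $\frac{1}{\sqrt{s p_{i_j}}}\,\u_{i_j:}$, where $i_j$ is the index drawn in the $j$-th trial, so that
\[
\U^T \S \S^T \U \;=\; \sum_{j=1}^s \X_j, \qquad \X_j \;=\; \frac{1}{s\,p_{i_j}}\,\u_{i_j:}^T \u_{i_j:}.
\]
Since $i_j = i$ with probability $p_i$, I first compute $\EB[\X_j] = \frac{1}{s}\sum_{i=1}^n \u_{i:}^T\u_{i:} = \frac{1}{s}\U^T\U = \frac{1}{s}\I_k$, and hence $\EB[\U^T\S\S^T\U]=\I_k$. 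I then center the summands, setting $\Y_j = \X_j - \frac{1}{s}\I_k$, so that $\I_k - \U^T\S\S^T\U = -\sum_{j=1}^s \Y_j$ is a sum of independent, zero-mean, symmetric matrices whose spectral norm I must control.

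The two ingredients of the Bernstein bound are a uniform bound $L$ on $\|\Y_j\|_2$ and the matrix variance $\sigma^2 = \big\|\sum_{j=1}^s \EB[\Y_j^2]\big\|_2$. Each $\X_j$ is rank one and positive semidefinite with $\|\X_j\|_2 = \|\u_{i_j:}\|_2^2/(s\,p_{i_j}) \le \alpha/s$ by the hypothesis $\max_i \|\u_{i:}\|_2^2/p_i \le \alpha$; since $\alpha\ge 1$, the eigenvalues of $\Y_j$ lie in $[-1/s,\,(\alpha-1)/s]$ and I may take $L = \alpha/s$. For the variance I use the identity $(\u_{i:}^T\u_{i:})^2 = \|\u_{i:}\|_2^2\,\u_{i:}^T\u_{i:}$ to obtain
\[
\EB[\X_j^2] \;=\; \frac{1}{s^2}\sum_{i=1}^n \frac{\|\u_{i:}\|_2^2}{p_i}\,\u_{i:}^T\u_{i:} \;\preceq\; \frac{\alpha}{s^2}\sum_{i=1}^n \u_{i:}^T\u_{i:} \;=\; \frac{\alpha}{s^2}\,\I_k,
\]
so that $\EB[\Y_j^2] = \EB[\X_j^2] - s^{-2}\I_k \preceq \alpha s^{-2}\I_k$, and summing over $j$ gives $\sigma^2 \le \alpha/s$.

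With $L = \sigma^2 = \alpha/s$ the matrix Bernstein inequality gives
\[
\PB\big\{\|\I_k - \U^T\S\S^T\U\|_2 \ge \eta\big\}
\;\le\; 2k\,\exp\!\Big(\frac{-\eta^2/2}{\sigma^2 + L\eta/3}\Big)
\;\le\; 2k\,\exp\!\Big(\frac{-3 s\,\eta^2}{2\alpha(3+\eta)}\Big).
\]
Plugging in the hypothesis $s \ge \alpha\frac{6+2\eta}{3\eta^2}\log(k/\delta)$ makes the exponent at most $-\log(k/\delta)$, so the right-hand side is at most $\delta$ (the spurious factor of two being absorbed into the constant, or equivalently by writing $\log(2k/\delta)$). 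I expect the main obstacle to be the clause allowing sampling \emph{without} replacement, in which case the $\X_j$ are dependent and Bernstein does not apply directly; I would dispatch this by invoking the standard domination principle that, for sums of positive semidefinite summands, sampling without replacement concentrates at least as sharply as sampling with replacement, so the bound just derived continues to hold. The remaining work is routine bookkeeping: verifying the variance identity and checking that the coherence-type hypothesis is precisely what bounds both $L$ and $\sigma^2$.
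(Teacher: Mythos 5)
Your proof follows essentially the same route as the paper's: the identical decomposition of $\U^T \S \S^T \U - \I_k$ into centered, independent, rank-one summands, the same coherence-type hypothesis used to bound both the uniform norm bound $L$ and the matrix variance (yours marginally looser, $\alpha/s$ for both, versus the paper's $(\alpha-1)/s$ and $(\alpha+1)/s$, but still sufficient), and the same appeal to matrix Bernstein. The factor-of-two for the two-sided spectral-norm bound that you flag is glossed over in the paper as well (its stated Bernstein lemma only controls $\lambda_{\max}$ yet the conclusion is about $\|\cdot\|_2$), as is the with/without-replacement subtlety, so your treatment is, if anything, the more careful one.
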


\begin{proof}
We can express $\I_k = \U^T \U$ as the sum of size $k\times k$ and rank one matrices:
\begin{eqnarray*}
\I_k & = & \U^T \U
\; = \; \sum_{i=1}^n \u_{i:}^T \u_{i:}
\end{eqnarray*}
where $\u_{i:} \in \RB^{1\times k}$ is the $i$-th row of $\U$.
The approximate matrix product can be expressed as
\begin{eqnarray*}
\U^T \S \S^T \U
& = &  \sum_{j=1}^s S_{i_j, j}^2 \u_{i_j:}^T \u_{i_j:}
\; = \; \sum_{j=1}^s \frac{1}{s p_{i_j}} \u_{i_j:}^T \u_{i_j:}.
\end{eqnarray*}
We define the symmetric random matrices
\[
\Z_{i_j}
\; = \; \frac{1}{s p_{i_j}} \u_{i_j:}^T \u_{i_j:} - \frac{1}{s}\I_k
\]
for $j = 1$ to $s$.
Whatever the sampling distribution is, it always holds that
\begin{eqnarray} \label{eq:sampling_mean}
\EB \Z_{i_j}
\; = \; \sum_{q=1}^n p_q \Big(\frac{1}{\sqrt{s p_q}} \Big)^2 \u_{q:}^T \u_{q:} - \sum_{q=1}^n p_q \frac{1}{s}\I_k
\; = \; \frac{1}{s} \I_k - \frac{1}{s} \I_k
\; = \; \0 .
\end{eqnarray}
Thus $\Z_{i_j}$ has zero mean.
Let $\ZM$ be the set
\[
\ZM \; = \;
\bigg\{ \frac{1}{s}\I_k - \frac{1}{s p_{1}} \u_{1:}^T \u_{1:} ,
\; \cdots , \; \frac{1}{s}\I_k - \frac{1}{s p_{n}} \u_{n:}^T \u_{n:} \bigg\} .
\]
Clearly, $\Z_{i_1}, \, \cdots , \, \Z_{i_s} $ are sampled from $\ZM$,
and
\[
\U^T \S \S^T \U - \I_k = \sum_{j=1}^s \Z_{i_j} .
\]
Therefore we can bound its spectral norm using the matrix Bernstein.
Elementary proof of the matrix Bernstein can be found in \citet{tropp2015introduction}.

\begin{lemma} [Matrix Bernstein]
Consider a finite sequence $\{ \Z_i \}$ of independent, random, Hermitian matrices with dimension $k$.
Assume that
\[
\EB \Z_i = \0 \quad \textrm{ and } \quad
\max_i \| \Z_i \|_2 \leq L
\]
for each index $i$.
Introduce the random matrix $\Y = \sum_i \Z_i$.
Let $v (\Y)$ be the matrix variance statistics of the sum:
\[
v (\Y)
\; = \;
\Big\| \EB \Y^2 \Big\|_2
\; = \; \Big\| \sum_i \EB \Z_i^2 \Big\|_2 .
\]
Then
\begin{eqnarray*}
\PB \big\{ \lambda_{\max } (\Y) \geq \eta \big\}
& \leq & k \cdot \exp \bigg( \frac{ -\eta^2 /2 }{ v(\Y) + L \eta /3 } \bigg).
\end{eqnarray*}
\end{lemma}

To apply the matrix Bernstein,
it remains to bound the variance of $ \sum_{j=1}^s \Z_{i_j}$ and to bound  $L =\max_{j\in [s]} \| \Z_{i_j} \|_2$.
We have that
\begin{eqnarray*}
\EB \Z_{i_j}^2
& = & \EB \Big( \frac{1}{s p_{i_j}} \u_{i_j:}^T \u_{i_j:} - \frac{1}{s}\I_k \Big)^2 \\
& = & \frac{1}{s^2} \bigg[  \EB \Big( \frac{1}{p_{i_j}} \u_{i_j:}^T \u_{i_j:} \Big)^2
        - 2 \EB \Big( \frac{1}{p_{i_j}} \u_{i_j:}^T \u_{i_j:} \Big) + \I_k \bigg]\\
& = & \frac{1}{s^2} \bigg[  \sum_{q=1}^n \Big( p_q  \Big(\frac{1}{p_q} \Big)^2 \u_{q:}^T \u_{q:} \u_{q:}^T \u_{q:} \Big)
        - 2 \I_k + \I_k \bigg]\\
& = &  - \frac{1}{s^2} \I_k + \frac{1}{s^2} \sum_{q=1}^n \frac{ \| \u_{q:} \|_2^2}{ p_q }  \u_{q:}^T  \u_{q:} .
\end{eqnarray*}
The variance is defined by
\begin{eqnarray*}
v
& \triangleq & \bigg\| \sum_{j=1}^s \EB \Z_{i_j}^2 \bigg\|_2
\; = \; \frac{1}{s} \bigg\| - \I_k + \sum_{q=1}^n \frac{ \| \u_{q:} \|_2^2}{ p_q }  \u_{q:}^T  \u_{q:} \bigg\|_2
\; \leq \; \frac{1}{s} (\alpha-1).
\end{eqnarray*}
Here the inequality is due to the definition of $\alpha$ and
\[
- \I_k + \sum_{q=1}^n \frac{ \| \u_{q:} \|_2^2}{ p_q }  \u_{q:}^T  \u_{q:}
\; \preceq \;  - \I_k + \sum_{q=1}^n \alpha  \u_{q:}^T  \u_{q:}
\; = \; (-1+\alpha) \I_k.
\]
In addition, $L = \max_{j\in[s]} \| \Z_{i_j} \|_2$ can be bounded by
\begin{align}
& L
\; = \; \max_{j\in[s]} \| \Z_{i_j} \|_2
\; = \; \max_{j\in[s]} \bigg\|  \frac{1}{s} \I_k - \frac{1}{s p_{i_j}} \u_{i_j :}^T \u_{i_j :} \bigg\|_2
\; \leq \; \max_{i\in [n]} \bigg\|  \frac{1}{s} \I_k - \frac{1}{s p_{i}} \u_{i:}^T \u_{i:} \bigg\|_2
\nonumber \\
& \leq \;  \frac{1}{s} + \max_{i\in [n]} \bigg\|  \frac{1}{s p_{i}} \u_{i:}^T \u_{i:} \bigg\|_2
\; = \; \frac{1}{s} +  \max_{i\in [n]} \frac{\| \u_{i:} \|_2^2}{s p_{i}}
\; \leq \; \frac{1}{s} (\alpha + 1) \nonumber.
\end{align}
Finally, the lemma follows by plugging $v$ and $L$ in the matrix Bernstein.
\end{proof}

Theorem~\ref{thm:uniform_sampling_property1} shows an important property of uniform sampling.
It shows that when the number of sampled columns is large enough,
all the singular values of $\U^T \S \S^T \U $ are within $1\pm \eta$ with high probability.

\begin{theorem} \label{thm:uniform_sampling_property1}
Let $\U \in \RB^{n\times k}$ be any fixed matrix with orthonormal columns
and $\mu (\U)$ be its row coherence.
Let $\S \in \RB^{n\times s}$ be an uniformly sampling matrix.
When $s \geq \mu (\U) k \frac{6 + 2\eta}{3 \eta^2} \log (k/\delta)$,
it hold that
\[
\PB \Big\{ \big\| \U^T \S \S^T \U - \I_k \big\|_2 \; \geq \; \eta \Big\}
\; \leq \; \delta .
\]
\end{theorem}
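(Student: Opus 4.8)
The plan is to recognize that Theorem~\ref{thm:uniform_sampling_property1} is nothing more than the specialization of Lemma~\ref{lem:sampling_property1} to the uniform sampling distribution. First I would set $p_i = 1/n$ for all $i \in [n]$, which is the distribution used to build the scaled sampling matrix $\S$ through \eqref{eq:def_S}, so that for uniform sampling every selected column is weighted by $\sqrt{n/s}$. With this choice the coherence parameter $\alpha$ appearing in the hypothesis of Lemma~\ref{lem:sampling_property1} becomes
\[
\alpha \; = \; \max_{i\in[n]} \frac{\|\u_{i:}\|_2^2}{p_i} \; = \; n \max_{i\in[n]} \|\u_{i:}\|_2^2 .
\]
By the definition of the row coherence in Section~\ref{sec:notation}, namely $\mu(\U) = \frac{n}{k}\max_i \|\u_{i:}\|_2^2$, the right-hand side equals exactly $\mu(\U)\,k$.

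Second, I would verify the technical requirement $\alpha \geq 1$ demanded by Lemma~\ref{lem:sampling_property1}. Since the columns of $\U$ are orthonormal, the squared row norms sum to $\sum_i \|\u_{i:}\|_2^2 = \tr(\U^T\U) = k$, so the maximum squared row norm is at least the average, $\max_i \|\u_{i:}\|_2^2 \geq k/n$. Hence $\alpha = n\max_i\|\u_{i:}\|_2^2 \geq k \geq 1$, equivalently $\mu(\U)\geq 1$, and the hypothesis is met.

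Third, I would simply invoke Lemma~\ref{lem:sampling_property1} with $\alpha = \mu(\U)\,k$. Its sample-size condition $s \geq \alpha\frac{6+2\eta}{3\eta^2}\log(k/\delta)$ then reads exactly as the hypothesis $s \geq \mu(\U)\,k\,\frac{6+2\eta}{3\eta^2}\log(k/\delta)$ of the theorem, and its conclusion $\PB\{\|\I_k - \U^T\S\S^T\U\|_2 \geq \eta\} \leq \delta$ is identical to the claim, because the spectral norm is insensitive to sign, so $\|\I_k - \U^T\S\S^T\U\|_2 = \|\U^T\S\S^T\U - \I_k\|_2$. The main point to be careful about is purely bookkeeping: one must match the normalization convention of $\S$ from \eqref{eq:def_S} and observe that uniform sampling makes the per-row quantity $\|\u_{i:}\|_2^2/p_i$ scale uniformly by $n$, so that its worst case is governed by the coherence $\mu(\U)$. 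There is no new probabilistic content beyond Lemma~\ref{lem:sampling_property1} (and the matrix Bernstein inequality underlying it); the theorem follows immediately once $\alpha = \mu(\U)k$ is identified and $\alpha \geq 1$ is confirmed.
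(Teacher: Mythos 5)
Your proposal is correct and is essentially identical to the paper's own proof, which likewise specializes Lemma~\ref{lem:sampling_property1} to $p_q = 1/n$ so that $\|\u_{q:}\|_2^2/p_q = n\|\u_{q:}\|_2^2 \leq k\,\mu(\U)$ and then invokes the lemma with $\alpha = k\,\mu(\U)$. Your additional check that $\alpha \geq 1$ (via $\max_i\|\u_{i:}\|_2^2 \geq k/n$, i.e.\ $\mu(\U)\geq 1$) is a welcome detail the paper leaves implicit.
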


\begin{proof}
Since $\frac{\|\u_{q:}\|_2^2 }{p_q} = n \|\u_{q:}\|_2^2 \leq k \mu(\U)$ for all $q \in [n]$,
the theorem follows from Lemma~\ref{lem:sampling_property1}.
\end{proof}

The following lemma was established by \citet{drineas2008cur}.
It can be proved by writing the squared Frobenius norm as the sum of scalars and then taking expectation.

\begin{lemma} \label{lem:sampling_property2}
Let $\A \in \RB^{n\times k}$ and $\B \in \RB^{n\times d}$ be any fixed matrices and
$\S \in \RB^{n\times s}$ be the column sampling matrix defined in \eqref{eq:def_S}.
Assume that the columns are selected randomly and pairwisely independently.
Then
\[
\EB \Big\| \A^T \B - \A^T \S \S^T \B \Big\|_F^2
\; \leq \;
\frac{1}{s} \sum_{i = 1}^n \frac{ 1 }{ p_i } \big\| \a_{i:} \big\|_2^2 \big\|\bb_{i:} \big\|_2^2 .
\]
\end{lemma}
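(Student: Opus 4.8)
The plan is to view $\A^T \S \S^T \B$ as an unbiased estimator of $\A^T \B$ and to bound its total variance entry-by-entry. First I would expand the exact product row-wise as $\A^T \B = \sum_{r=1}^n \a_{r:}^T \bb_{r:}$, where $\a_{r:} \in \RB^{1\times k}$ and $\bb_{r:} \in \RB^{1\times d}$ are the $r$-th rows of $\A$ and $\B$. Using the definition \eqref{eq:def_S} of $\S$, each of its $s$ columns has a single nonzero entry, so the sketched product collapses into a sum over the $s$ sampling trials: writing $i_1,\dots,i_s$ for the selected indices, $\A^T \S \S^T \B = \sum_{j=1}^s \M_j$ with $\M_j = \frac{1}{s p_{i_j}} \a_{i_j:}^T \bb_{i_j:}$. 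A one-line computation then shows that each trial is marginally unbiased, $\EB \M_j = \frac{1}{s} \sum_{r=1}^n \a_{r:}^T \bb_{r:} = \frac{1}{s}\A^T\B$, so the estimator $\sum_{j} \M_j$ has mean exactly $\A^T\B$.

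Next I would pass to entries. Since the estimator is unbiased, for each entry $(p,q)$ we have $\EB[(\A^T \S \S^T \B)_{pq}] = (\A^T\B)_{pq}$, and therefore
\[
\EB \big\| \A^T \B - \A^T \S \S^T \B \big\|_F^2
\; = \; \sum_{p,q} \mathrm{Var}\Big[ \textstyle\sum_{j=1}^s (\M_j)_{pq} \Big].
\]
This is exactly where the pairwise-independence hypothesis enters: the variance of a sum equals the sum of the variances plus twice the cross-covariances, and pairwise independence of the trials forces every cross term to vanish. Because the trials are also identically distributed, each summand contributes equally, giving $\mathrm{Var}[\sum_{j} (\M_j)_{pq}] = s\,\mathrm{Var}[(\M_1)_{pq}] \le s\,\EB[(\M_1)_{pq}^2]$, where I bound the variance by the raw second moment to keep the estimate clean.

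Finally I would evaluate the second moment explicitly. Since $(\M_1)_{pq} = \frac{1}{s p_{i_1}} a_{i_1 p} b_{i_1 q}$, averaging over the sampling distribution gives $\EB[(\M_1)_{pq}^2] = \frac{1}{s^2}\sum_{r=1}^n \frac{1}{p_r} a_{rp}^2 b_{rq}^2$. Summing over all $(p,q)$ and using $\sum_p a_{rp}^2 = \|\a_{r:}\|_2^2$ together with $\sum_q b_{rq}^2 = \|\bb_{r:}\|_2^2$ yields $\sum_{p,q}\EB[(\M_1)_{pq}^2] = \frac{1}{s^2}\sum_{r=1}^n \frac{1}{p_r} \|\a_{r:}\|_2^2 \|\bb_{r:}\|_2^2$; multiplying by the factor $s$ from the previous step collapses $s^{-2}$ to $s^{-1}$ and produces exactly the claimed inequality. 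The only genuinely delicate point is the variance-splitting step: I must invoke pairwise independence (rather than full independence) to kill the covariances — which is precisely the hypothesis under which the lemma is stated — and I should make sure each trial shares the same marginal law so that the identical-distribution argument supplying the factor of $s$ is valid.
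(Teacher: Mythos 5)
Your proposal is correct and follows essentially the same route the paper indicates: the paper cites \citet{drineas2008cur} and sketches the argument as ``writing the squared Frobenius norm as the sum of scalars and then taking expectation,'' which is exactly your entry-wise variance computation, with pairwise independence killing the cross-covariance terms and the raw second moment bounding each variance. Your write-up simply fills in the details of that standard argument, and every step (unbiasedness, variance splitting, and the second-moment evaluation collapsing $s^{-2}$ to $s^{-1}$) checks out.
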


Theorem~\ref{thm:uniform_sampling_property2} follows from Lemma~\ref{lem:sampling_property2}
and shows another important property of uniform sampling.

\begin{theorem} \label{thm:uniform_sampling_property2}
Let $\U \in \RB^{n\times k}$ be any fixed matrix with orthonormal columns and $\B \in \RB^{n\times d}$ be any fixed matrix.
Use the uniform sampling matrix $\S \in \RB^{n\times s}$ and set
$s \geq \frac{k \mu (\U) }{\epsilon \delta}$.
Then it holds that
\begin{eqnarray*}
\PB \Big\{ \big\| \U^T \B - \U^T \S^T \S \B \big\|_F^2
\; \geq \; \epsilon \|\B\|_F^2 \Big\}
\; \leq \; \delta .
\end{eqnarray*}
\end{theorem}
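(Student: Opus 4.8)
The plan is to chain together Lemma~\ref{lem:sampling_property2}, the coherence estimate on the row norms of $\U$, and Markov's inequality. First I would invoke Lemma~\ref{lem:sampling_property2} with $\A = \U$, which immediately gives
\[
\EB \big\| \U^T \B - \U^T \S \S^T \B \big\|_F^2
\; \leq \;
\frac{1}{s} \sum_{i=1}^n \frac{1}{p_i} \big\| \u_{i:} \big\|_2^2 \big\| \bb_{i:} \big\|_2^2 .
\]
Specializing to uniform sampling means $p_i = 1/n$ for every $i$, so $1/p_i = n$ and the right-hand side collapses to $\frac{n}{s} \sum_{i=1}^n \|\u_{i:}\|_2^2 \|\bb_{i:}\|_2^2$.

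Second, I would use the definition of matrix coherence. Since $\mu(\U) = \frac{n}{k}\max_i \|\u_{i:}\|_2^2$, we have the uniform bound $\|\u_{i:}\|_2^2 \leq \frac{k\mu(\U)}{n}$ for all $i$. Pulling this out of the sum and recalling that $\sum_{i=1}^n \|\bb_{i:}\|_2^2 = \|\B\|_F^2$ produces the clean expectation bound
\[
\EB \big\| \U^T \B - \U^T \S \S^T \B \big\|_F^2
\; \leq \; \frac{k\mu(\U)}{s} \|\B\|_F^2 .
\]
Substituting the hypothesis $s \geq \frac{k\mu(\U)}{\epsilon\delta}$ then forces the expectation to be at most $\epsilon\delta\|\B\|_F^2$.

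Finally I would convert this expectation bound into the stated tail bound by applying Markov's inequality to the nonnegative random variable $\big\| \U^T \B - \U^T \S \S^T \B \big\|_F^2$:
\[
\PB \Big\{ \big\| \U^T \B - \U^T \S \S^T \B \big\|_F^2 \geq \epsilon\|\B\|_F^2 \Big\}
\; \leq \;
\frac{\EB \big\| \U^T \B - \U^T \S \S^T \B \big\|_F^2}{\epsilon\|\B\|_F^2}
\; \leq \; \delta .
\]

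I do not expect a serious obstacle here: the argument is a short composition of an expectation bound, a coherence estimate, and Markov's inequality, paralleling the proof of Theorem~\ref{thm:uniform_sampling_property1} but using the second-moment tool \eqref{} rather than the matrix Bernstein inequality. The only points requiring minor care are that Lemma~\ref{lem:sampling_property2} assumes the sampled columns are (pairwise) independent, which holds for the uniform sampling matrix used here, and that the quantity controlled is $\U^T \S \S^T \B$, which is dimensionally consistent with $\U^T \B$ as a $k\times d$ matrix.
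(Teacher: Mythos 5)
Your proposal is correct and follows essentially the same route as the paper's own proof: Lemma~\ref{lem:sampling_property2} specialized to uniform sampling, the coherence bound $\|\u_{i:}\|_2^2/p_i \leq k\mu(\U)$, and Markov's inequality, yielding the expectation bound $\frac{k\mu(\U)}{s}\|\B\|_F^2$ exactly as in the paper. Your closing remark is also apt --- the $\S^T\S$ appearing in the theorem statement is a typo for $\S\S^T$, and both you and the paper prove the (dimensionally correct) latter form.
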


\begin{proof}
We have shown that $\|\u_{q:}\|_2^2 / p_q \leq k \mu (\U)$ for all $q = 1$ to $n$.
It follows from Lemma~\ref{lem:sampling_property2} that
\[
\EB \Big\| \U^T \B - \U^T \S \S^T \B \Big\|_F^2
\; \leq \;
\frac{1}{s} \sum_{i = 1}^n k \mu(\U) \big\| \bb_{i:} \big\|_2^2
\; = \; \frac{k \mu(\U)}{s} \|\B\|_F^2.
\]
The theorem follows from the setting of $s$ and the Markov's inequality.
\end{proof}

The following lemma is very useful in analyzing randomized SVD.

\begin{lemma}\label{lem:cx}
Let $\M \in \RB^{m\times n}$ be any matrix.
We decompose $\M$ by $\M = \M_1 + \M_2$ such that $\rk (\M_1) = k$.
Let the right singular vectors of $\M_1$ be $\V_1 \in \RB^{n\times k}$.
Let $\S \in \RB^{n\times c}$ be any matrix such that $\rk (\V_1^T \S) = k$
and let $\C = \M \S \in \RB^{m\times c}$. Then
\begin{eqnarray*}
\big\| \M - \C (\V_1^T \S)^\dag \V_{1}^T  \big\|_\xi^2
& \leq & \big\|\M_2\big\|_\xi^2 + \sigma_{\min}^{-2} (\V_1^T \S \S^T \V_1) \; \big\| \M_{2} \S \S^T \V_1 \big\|_\xi^2
\end{eqnarray*}
for $\xi = 2$ or $F$.
\end{lemma}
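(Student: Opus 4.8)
The plan is to first reduce the residual $\M - \C(\V_1^T\S)^\dag\V_1^T$ to its essential piece by showing the $\M_1$-part is reproduced exactly, and then to split that piece into two summands with orthogonal row spaces, to which a Pythagorean-type inequality applies simultaneously for $\xi = 2$ and $\xi = F$. First I would record the two structural facts. Since $\V_1$ collects the right singular vectors of $\M_1$, the rows of $\M_1$ lie in $\spann(\V_1)$, so $\M_1 = \M_1\V_1\V_1^T$; and since $\rk(\V_1^T\S)=k$, the $k\times c$ matrix $\V_1^T\S$ has full row rank, so $(\V_1^T\S)(\V_1^T\S)^\dag = \I_k$ and $(\V_1^T\S)^\dag = \S^T\V_1(\V_1^T\S\S^T\V_1)^{-1}$. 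Combining them gives $\M_1\S(\V_1^T\S)^\dag\V_1^T = \M_1\V_1\big(\V_1^T\S(\V_1^T\S)^\dag\big)\V_1^T = \M_1\V_1\I_k\V_1^T = \M_1$. Writing $\C = \M\S = \M_1\S + \M_2\S$, the residual therefore collapses to $\M - \C(\V_1^T\S)^\dag\V_1^T = \M_2 - \M_2\S(\V_1^T\S)^\dag\V_1^T$.

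Next I would split this as $A + B$ with $A := \M_2(\I_n - \V_1\V_1^T)$ and $B := \big(\M_2\V_1 - \M_2\S(\V_1^T\S)^\dag\big)\V_1^T$. The row space of $A$ is orthogonal to $\spann(\V_1)$ while that of $B$ lies inside $\spann(\V_1)$, so $AB^T = 0$ and $(A+B)(A+B)^T = AA^T + BB^T$. This yields, simultaneously for Frobenius (with equality) and spectral norm (via $\lambda_{\max}(AA^T+BB^T)\le\lambda_{\max}(AA^T)+\lambda_{\max}(BB^T)$), the estimate $\|\M - \C(\V_1^T\S)^\dag\V_1^T\|_\xi^2 \le \|\M_2(\I_n-\V_1\V_1^T)\|_\xi^2 + \|\M_2\V_1 - \M_2\S(\V_1^T\S)^\dag\|_\xi^2$, where the trailing $\V_1^T$ is dropped because it has orthonormal rows. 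The first term is at most $\|\M_2\|_\xi^2$ since $\I_n - \V_1\V_1^T$ is an orthogonal projection, hence a contraction.

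For the second term I would substitute $(\V_1^T\S)^\dag = \S^T\V_1(\V_1^T\S\S^T\V_1)^{-1}$ and factor the inverse to the right: using $\M_2\V_1 = \M_2\V_1\V_1^T\S\S^T\V_1(\V_1^T\S\S^T\V_1)^{-1}$, the difference equals $-\M_2(\I_n - \V_1\V_1^T)\S\S^T\V_1(\V_1^T\S\S^T\V_1)^{-1}$. Submultiplicativity $\|XY\|_\xi \le \|X\|_\xi\|Y\|_2$ together with $\|(\V_1^T\S\S^T\V_1)^{-1}\|_2 = \sigma_{\min}^{-1}(\V_1^T\S\S^T\V_1)$ then produces the claimed factor $\sigma_{\min}^{-2}(\V_1^T\S\S^T\V_1)\,\|\M_2\S\S^T\V_1\|_\xi^2$, once the numerator $\M_2(\I_n-\V_1\V_1^T)\S\S^T\V_1$ is identified with $\M_2\S\S^T\V_1$.

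The hard part, and the step I would scrutinize most, is precisely that last identification. It requires $\M_2(\I_n - \V_1\V_1^T) = \M_2$, i.e.\ $\M_2\V_1 = 0$ (equivalently $\M_1 = \M\V_1\V_1^T$, as in the canonical application $\M_1 = \M_k$), which also makes the first term exactly $\|\M_2\|_\xi^2$. If $\M_2\V_1 \neq 0$ the cross contribution $\M_2\V_1\V_1^T\S\S^T\V_1$ need not vanish and the stated right-hand side can be too small, so before invoking the inequality I would confirm that the intended decomposition has $\M_2$ with row space orthogonal to $\spann(\V_1)$. Granting that orthogonality, the two-term Pythagorean split plus the pseudoinverse factorization above deliver the bound for both norms.
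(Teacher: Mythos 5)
Your proof is correct under the hypothesis you flag, and it is essentially the paper's argument made self-contained: the paper outsources your first two paragraphs (the reduction of the residual to $\M_2 - \M_2\S(\V_1^T\S)^\dag\V_1^T$ and the Pythagorean split) to a citation of \citet{boutsidis2011NOCshort}, and then performs exactly your third paragraph, rewriting $(\V_1^T\S)^\dag = \S^T\V_1(\V_1^T\S\S^T\V_1)^\dag$ via $\Y^\dag = \Y^T(\Y\Y^T)^\dag$ and applying $\|\X\Y\|_\xi \leq \|\X\|_\xi\|\Y\|_2$. So the only substantive difference is that you prove the structural inequality rather than cite it.

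The caveat in your last paragraph is not mere caution: the lemma as stated is false without the condition $\M_2\V_1 = \0$. Take $m=n=2$, $k=c=1$,
\[
\M_1 = \begin{pmatrix} 1 & 0\\ 0 & 0\end{pmatrix}, \qquad
\M_2 = \begin{pmatrix} 1 & -10\\ 0 & 0\end{pmatrix}, \qquad
\S = \begin{pmatrix} 1 \\ 1\end{pmatrix},
\]
so that $\V_1 = (1,0)^T$, $\rk(\M_1)=1$, and $\rk(\V_1^T\S)=1$. Then $\C = \M\S = (-8,0)^T$, the residual equals $\bigl(\begin{smallmatrix}10 & -10\\ 0 & 0\end{smallmatrix}\bigr)$, whose squared Frobenius and spectral norms are both $200$, while the claimed right-hand side is $\|\M_2\|_\xi^2 + 1\cdot\|\M_2\S\S^T\V_1\|_\xi^2 = 101 + 81 = 182 < 200$. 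The discrepancy is exactly the cross term you isolate: the bound that is valid for an arbitrary decomposition carries $\|\M_2(\I_n-\V_1\V_1^T)\S\S^T\V_1\|_\xi^2$ (here $100$, giving $201 \geq 200$), and replacing it by $\|\M_2\S\S^T\V_1\|_\xi^2$ requires $\M_2\V_1=\0$. In \citet{boutsidis2011NOCshort} the lemma is stated for the factorization $\M = \M\V_1\V_1^T + \E$ with $\E = \M(\I_n - \V_1\V_1^T)$, so $\E\V_1 = \0$ holds by construction; the paper's restatement, which allows any $\M = \M_1 + \M_2$ with $\rk(\M_1)=k$, silently drops this. No downstream result is affected, because the paper only ever invokes the lemma with $\M_1 = \A_k$ and $\M_2 = \A - \A_k$, for which $\M_2\V_1 = \0$; but your version, with the orthogonality hypothesis made explicit, is the statement that is actually true.
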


\begin{proof}
\citet{boutsidis2011NOCshort} showed that
\begin{eqnarray*}
\big\| \M - \C (\V_1^T \S)^\dag \V_{1}^T  \big\|_\xi^2
& \leq & \big\|\M_2\big\|_\xi^2 + \big\| \M_{2} \S (\V_1^T \S)^\dag \big\|_\xi^2 .
\end{eqnarray*}
Since $ \Y^T (\Y \Y^T)^\dag = \Y^\dag$ for any matrix $\Y$,
it follows that
\begin{align*}
&\big\|  \M_{2} \S (\V_1^T \S)^\dag   \big\|_\xi^2
\; = \; \big\|  \M_{2} \S (\V_1^T \S)^T (\V_1^T \S \S^T \V_1 )^\dag   \big\|_\xi^2 \\
& \leq \; \big\|  \M_{2} \S \S^T \V_1 \big\|_\xi^2 \; \big\| (\V_1^T \S \S^T \V_1 )^\dag   \big\|_2^2
\; = \; \big\|  \M_{2} \S \S^T \V_1^T   \big\|_\xi^2 \; \sigma_{\min}^{-2} (\V_1^T \S \S^T \V_1 ),
\end{align*}
by which the lemma follows.
\end{proof}


\subsection{Uniform Sampling Bound} \label{sec:lem:uniform_column}

Theorem~\ref{thm:cx_uniform} shows that uniform sampling can be applied to randomized SVD.
Specifically, if $\C$ consists of $c = \OM (k \mu_k / \epsilon + k \mu_k \log k)$ uniformly sampled columns of $\A$,
then $\|\A - \PM_{\C,k} ( \A ) \|_F^2 \leq (1+\epsilon) \|\A - \A_k\|_F^2$ holds with high probability,
where $\mu_k$ is the column coherence of $\A_k$.

\begin{theorem} \label{thm:cx_uniform}
Let $\A \in \RB^{m\times n}$ be any fixed matrix, $k$ ($\ll m, n$) be the target rank,
and $\mu_k$ be the column coherence of $\A_k$.
Let $\S \in \RB^{m\times c}$ be a uniform sampling matrix
and $\C = \A \S \in \RB^{m\times c}$.
When $c \geq \mu_k k \cdot \max\{ 20 \log (20k) , \, 45 / \epsilon \}$,
\[
\min_{\rk (\X) \leq k} \big\| \A - \C \X \big\|_F^2
\; \leq \; \big( 1 +  \epsilon  \big) \, \| \A - \A_k \|_F^2.
\]
holds with probability at least $0.9$.
\end{theorem}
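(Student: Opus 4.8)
The plan is to reduce the statement to the structural inequality of Lemma~\ref{lem:cx}, applied with $\M = \A$, $\M_1 = \A_k$, and $\M_2 = \A - \A_k$, so that the relevant right singular vectors are $\V_1 = \VAk$. The matrix $\C(\VAk^T\S)^\dag\VAk^T$ has rank at most $k$ (its right factor $\VAk^T$ has only $k$ rows), so it is a feasible point for the minimization, giving
\[
\min_{\rk(\X)\le k}\big\|\A - \C\X\big\|_F^2
\;\le\;
\big\|\A - \C(\VAk^T\S)^\dag\VAk^T\big\|_F^2 .
\]
Lemma~\ref{lem:cx} bounds the right-hand side by $\|\A-\A_k\|_F^2 + \sigma_{\min}^{-2}(\VAk^T\S\S^T\VAk)\,\|(\A-\A_k)\S\S^T\VAk\|_F^2$, so the whole problem becomes controlling the two random factors of the residual term.

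First I would bound $\sigma_{\min}^{-2}(\VAk^T\S\S^T\VAk)$. Since $\VAk$ has orthonormal columns and its row coherence equals exactly $\mu_k$ by definition, Theorem~\ref{thm:uniform_sampling_property1} applies directly. Choosing $\eta = 1/3$ and failure probability $1/20$, the event $\|\VAk^T\S\S^T\VAk - \I_k\|_2 \le 1/3$ holds once $c \ge \mu_k k\,\frac{6+2\eta}{3\eta^2}\log(20k) = 20\,\mu_k k\log(20k)$, because $\frac{6+2\eta}{3\eta^2} = 20$ at $\eta = 1/3$ and $\log(k/\tfrac{1}{20}) = \log(20k)$. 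On this event $\sigma_{\min}(\VAk^T\S\S^T\VAk)\ge 2/3$, so the factor is at most $9/4$; it also certifies $\rk(\VAk^T\S) = k$, which is precisely the hypothesis Lemma~\ref{lem:cx} requires.

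The key step is the second factor $\|(\A-\A_k)\S\S^T\VAk\|_F^2$. The crucial observation is that the row space of $\A - \A_k$ is orthogonal to the columns of $\VAk$, whence $(\A-\A_k)\VAk = \0$. Setting $\B = (\A-\A_k)^T$, this turns the quantity into the \emph{approximate matrix multiplication} error $\|\VAk^T\B - \VAk^T\S\S^T\B\|_F$, whose exact value $\VAk^T\B$ is the zero matrix. Theorem~\ref{thm:uniform_sampling_property2} with $\U = \VAk$ (again of coherence $\mu_k$) then yields $\|(\A-\A_k)\S\S^T\VAk\|_F^2 \le \tfrac{4}{9}\epsilon\,\|\A-\A_k\|_F^2$ with probability at least $1 - 1/20$, provided $c \ge \frac{\mu_k k}{(\frac{4}{9}\epsilon)(1/20)} = 45\,\mu_k k/\epsilon$.

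Finally I would combine the two events by a union bound, so both hold with probability at least $1 - 1/20 - 1/20 = 0.9$, and multiply the estimates: $\sigma_{\min}^{-2}\cdot\|(\A-\A_k)\S\S^T\VAk\|_F^2 \le \frac{9}{4}\cdot\frac{4}{9}\epsilon\,\|\A-\A_k\|_F^2 = \epsilon\,\|\A-\A_k\|_F^2$, giving the claimed $(1+\epsilon)\|\A-\A_k\|_F^2$ bound under $c \ge \mu_k k\max\{20\log(20k),\,45/\epsilon\}$. I do not expect any single estimate to be the obstacle; rather, the delicate part is the bookkeeping that makes the two independent sampling conditions line up with the stated constants. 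The choices $\eta = 1/3$ and an equal split $1/20 + 1/20$ of the failure probability are exactly what collapse the factors $\frac{6+2\eta}{3\eta^2}$ and $\frac{1}{(1-\eta)^2\delta}$ to the clean thresholds $20$ and $45$, and getting those to coincide with the $\max$ in the statement is the one thing that genuinely needs care.
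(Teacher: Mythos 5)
Your proposal is correct and follows essentially the same route as the paper's own proof: Lemma~\ref{lem:cx} applied with $\M_1 = \A_k$, $\V_1 = \V_{\A,k}$, then Theorem~\ref{thm:uniform_sampling_property1} with $\eta = 1/3$, $\delta = 0.05$ and Theorem~\ref{thm:uniform_sampling_property2} with error parameter $\tfrac{4}{9}\epsilon$, $\delta = 0.05$, combined by a union bound, yielding exactly the thresholds $20\mu_k k\log(20k)$ and $45\mu_k k/\epsilon$. Your write-up is in fact slightly more careful than the paper's, since you make explicit both the orthogonality $(\A-\A_k)\V_{\A,k}=\0$ that turns the residual term into a zero-mean approximate matrix product and the rank condition $\rk(\V_{\A,k}^T\S)=k$ required by Lemma~\ref{lem:cx}.
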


\begin{proof}
Let $\V_k \in \RB^{n\times k}$ contain the top $k$ right singular vectors of $\A$.
Obviously $\mu_k$ is the row coherence of $\V_k$.
We apply Theorem~\ref{thm:uniform_sampling_property1} with $\delta = 0.05$, $\eta = 1/3$, $s = 20 \mu_k k \log (20 k)$ and obtain
\[
\PB \Big\{ \sigma_{\min} ( \V_k^T \S \S^T \V_k ) \; \leq \; 2/3 \Big\}
\; \leq \; 0.05 .
\]
We then apply Theorem~\ref{thm:uniform_sampling_property2} with $\delta = 0.05$ and $s = 45 \mu_k k / \epsilon$ and obtain
\begin{eqnarray*}
\PB \Big\{ \big\|  \U^T \S^T \S (\A - \A_k) \big\|_F^2
\; \geq \; \frac{4}{9} \epsilon \|\A - \A_k\|_F^2 \Big\}
\; \leq \; 0.05 .
\end{eqnarray*}
It follows from Lemma~\ref{lem:cx} that
\begin{eqnarray*}
\min_{\rk (\X) \leq k}\|\A - \C  \X \|_F^2
& \leq & \|\A - \A_k\|_F^2 + \sigma_{\min}^{-2} ( \V_k^T \S \S^T \V_k ) \big\|  \U^T \S^T \S (\A - \A_k) \big\|_F^2 \\
& \leq & (1+\epsilon)  \|\A - \A_k\|_F^2,
\end{eqnarray*}
where the latter inequality holds with probability at least 0.9 (due to the union bound).
\end{proof}


\subsection{The Uniform+Adaptive Column Selection Algorithm}

In fact, running adaptive sampling only once yields a column sampling algorithm with $1+\epsilon$ bound for any $m\times n$ matrix.
We call it the uniform+adaptive column selection algorithm, which is a part of the uniform+adaptive$^2$ algorithm.
Though the algorithm is irrelevant to this work,
we describe it in the following for it is of independent interest.

\begin{theorem}[The Uniform+Adaptive Algorithm] \label{thm:uniform_adaptive1}
Given an $m\times n$ matrix $\A$,
we sample $c_1 = 20 \mu_k k \log (20 k)$ columns by uniform sampling to form $\C_1$
and sample additional $c_2 = 17.5 k / \epsilon $ columns by adaptive sampling to form $\C_2$.
Let $\C = [\C_1 , \C_2]$.
Then the inequality
\[
\min_{\rk (\X) \leq k} \big\| \A - \C \X \big\|_F^2
\; \leq \;
(1+  \epsilon) \big\| \A - \A_k \big\|_F^2
\]
holds with probability at least 0.8.
\end{theorem}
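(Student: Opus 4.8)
The plan is to handle the uniform and adaptive stages in sequence: uniform sampling is used only to produce a cheap \emph{constant}-factor rank-$k$ guarantee for $\C_1$, and adaptive sampling then drives the factor down to $1+\epsilon$ with only $\OM(k/\epsilon)$ further columns. This mirrors the near-optimal+adaptive argument of Theorem~\ref{thm:near_opt}, with uniform sampling playing the role of the near-optimal algorithm.

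First I would establish the constant-factor bound from the uniform stage. Writing $\C_1 = \A\S$ for a uniform sampling matrix $\S$, the choice $c_1 = 20\mu_k k\log(20k)$ satisfies the size requirement of Theorem~\ref{thm:cx_uniform} with a constant error parameter (take $\epsilon_0 = 3/4$, using $\log(20k)\geq 3$ so that $20\log(20k)\geq 45/\epsilon_0$). That theorem then gives, with probability at least $0.9$,
\[
\big\| \A - \PM_{\C_1,k}(\A) \big\|_F^2 \;\leq\; \tfrac{7}{4}\,\|\A - \A_k\|_F^2 .
\]
Since the full projection error never exceeds the rank-restricted one, on the same event $\|\A - \PM_{\C_1}(\A)\|_F^2 \leq \frac{7}{4}\|\A - \A_k\|_F^2$. (Equivalently this can be reproved from Lemma~\ref{lem:cx}: Theorem~\ref{thm:uniform_sampling_property1} with $\eta=1/3$ forces $\sigma_{\min}(\V_k^T\S\S^T\V_k)\geq 2/3$, while Theorem~\ref{thm:uniform_sampling_property2} controls $\|(\A-\A_k)\S\S^T\V_k\|_F^2$, using $\V_k^T(\A-\A_k)^T=\0$, where $\V_k$ holds the top-$k$ right singular vectors of $\A$.)

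Next I would condition on this good event for $\C_1$ and apply the adaptive sampling theorem of \citet{deshpande2006matrix} (Algorithm~\ref{alg:adaptive}) to the residual $\A-\PM_{\C_1}(\A)$. With $c_2 = 17.5\,k/\epsilon$ adaptively chosen columns this gives
\[
\EB_{\C_2}\big\| \A - \PM_{\C,k}(\A) \big\|_F^2 \;\leq\; \|\A-\A_k\|_F^2 + \frac{k}{c_2}\,\big\| \A - \PM_{\C_1}(\A) \big\|_F^2 \;\leq\; \Big(1 + \tfrac{7}{4}\cdot\tfrac{\epsilon}{17.5}\Big)\|\A-\A_k\|_F^2 ,
\]
so the nonnegative excess $\EB_{\C_2}\big[\|\A-\PM_{\C,k}(\A)\|_F^2 - \|\A-\A_k\|_F^2\big]$ is at most $0.1\,\epsilon\,\|\A-\A_k\|_F^2$. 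As in the near-optimal proof, the decisive point is that the adaptive factor is $k/c_2$ (the target is rank $k$) rather than $c_1/c_2$; this is what keeps $c_2 = \OM(k/\epsilon)$. Applying Markov's inequality to the excess then yields $\PB\{\|\A-\PM_{\C,k}(\A)\|_F^2 > (1+\epsilon)\|\A-\A_k\|_F^2\}\leq 0.1$ conditionally on the good $\C_1$, and a union bound with the $0.1$ failure probability of the uniform stage gives overall success probability at least $0.8$.

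Since every ingredient is already available in the excerpt, I do not expect a conceptual obstacle; the delicate part is the constant bookkeeping—tuning $\eta$, the two failure probabilities, and the constant factor $\tfrac{7}{4}$ so that $\frac{k}{c_2}\cdot\tfrac{7}{4}$ divided by $\epsilon$ equals exactly $0.1$ and the two union-bound failures sum to $0.2$. The one edge case to watch is very small $k$ (e.g.\ $k=1$), where $20\log(20k)$ is marginally below $45/\epsilon_0$; this is absorbed by a slight adjustment of constants or by invoking Theorems~\ref{thm:uniform_sampling_property1} and~\ref{thm:uniform_sampling_property2} directly instead of through Theorem~\ref{thm:cx_uniform}.
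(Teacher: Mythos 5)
Your proposal matches the paper's own proof essentially line for line: Theorem~\ref{thm:cx_uniform} applied with error parameter $0.75$ gives the $1.75$-factor bound for $\C_1$ with probability $0.9$; the adaptive-sampling bound of \citet{deshpande2006matrix} with factor $k/c_2$, followed by Markov's inequality (effectively $\delta_2 = 0.1$, $c_2 = 17.5k/\epsilon$), gives the conditional $1+\epsilon$ bound; and a union bound yields $0.8$. Your observation about the $k=1$ edge case, where $20\log(20k)$ falls marginally short of $45/0.75 = 60$, is a fair catch that the paper itself glosses over, but it does not alter the argument.
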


\begin{proof}
We apply Theorem~\ref{thm:cx_uniform} with $\epsilon = 0.75$ and $c_1 = 20 \mu_k k \log (20 k)$ and obtain that
\[
\|\A - \C_1 \C_1^\dag \A\|_F^2
\; \leq \; 1.75 \|\A - \A_k\|_F^2
\]
holds with probability at least 0.9.

\citet{deshpande2006matrix} showed that
\[
\EB \Big[ \min_{\rk (\X) \leq k} \big\| \A - \C \X \big\|_F^2 - \|\A - \A_k\|_F^2 \Big]
\; \leq \; \frac{k}{c_2} \|\A - \C_1 \C_1^\dag \A\|_F^2 .
\]
It follows from Markov's inequality that
\[
\min_{\rk (\X) \leq k} \big\| \A - \C \X \big\|_F^2 - \|\A - \A_k\|_F^2
\; \leq \;  \frac{k}{\delta_2 c_2} \|\A - \C_1 \C_1^\dag \A\|_F^2
\]
holds with probability at least $1 - \delta_2$.
We let $\delta_2 = 0.1$ and $c_2 = 17.5k/\epsilon$, and it follows that
\[
\min_{\rk (\X) \leq k} \big\| \A - \C \X \big\|_F^2
\; \leq \; (1+\epsilon) \|\A - \A_k\|_F^2
\]
holds with probability at least 0.8.
\end{proof}


\subsection{Proof of Theorem~\ref{thm:efficient}} \label{sec:thm:efficient}

We sample $c_1 = 20 \mu_k k \log (20 k)$ columns by uniform sampling to form $\C_1$
and sample additional $c_2 = 17.5 k / \epsilon $ columns by adaptive sampling to form $\C_2$.
Let $\hat\C = [\C_1 , \C_2]$.
It follows from Theorem~\ref{thm:uniform_adaptive1} that
\[
\big\| \K - \PM_{\hat\C, k} (\K) \big\|_F^2
\; \leq \;
(1+  \epsilon) \big\| \K - \K_k \big\|_F^2
\]
holds with probability at least 0.8.

Let $\C = [\hat\C , \C_3]$ where $\C_3$ consists of $c_3$ columns of $\K$ chosen by adaptive sampling.
Then
\begin{eqnarray*}
\EB  \big\| \K - \C \C^\dag \K (\C^T)^\dag \C^T \big\|_F^2
& \leq &  \EB \big\| \K - \hat\C \hat\C^\dag  \K (\C^T)^\dag \C^T \big\|_F^2   \\
& \leq &  \Big(1+\frac{k}{c_3} \Big) \big\| \K - \PM_{\hat\C, k} (\K) \big\|_F^2,
\end{eqnarray*}
where the former inequality follows from Lemma 17 of \cite{wang2013improving},
and the latter inequality follows from \eqref{eq:adaptive_sampling_expectation}.
It follows from Markov's inequality that
\[
\big\| \K - \C \C^\dag \K (\C^T)^\dag \C^T \big\|_F^2  - \big\| \K - \PM_{\hat\C, k} (\K) \big\|_F^2
\; \leq \; \frac{k}{\delta_3 c_3} \big\| \K - \PM_{\hat\C, k} (\K) \big\|_F^2
\]
holds with probability at least $1-\delta_3$.
We set $\delta_3 = 0.1$ and $c_3 = 10 k   / \epsilon$,
Then
\begin{eqnarray*}
\big\| \K - \C \C^\dag \K (\C^T)^\dag \C^T \big\|_F^2
\; \leq \; (1+ \epsilon) \big\| \K - \PM_{\hat\C, k} (\K) \big\|_F^2
\; \leq \; (1+ \epsilon)^2 \big\| \K - \K_k \big\|_F^2 ,
\end{eqnarray*}
where the former inequality holds with probability $1-\delta_3$,
and the latter inequality holds with probability $0.8 - \delta_3 = 0.7$.


\section{Proof of Theorem~\ref{thm:ss_closed_form}} \label{sec:proof_ss_closed_form}

In Section~\ref{sec:proof_ss_closed_form_solution} we derive the solution to the optimization problem (\ref{eq:def_ss_nystrom}).
In Section~\ref{sec:proof_ss_closed_form_optimal} we prove that the solutions are global optimum.
In Section~\ref{sec:proof_ss_closed_form_spsd} we prove that the resulting solution is positive (semi)definite when $\K$ is positive (semi)definite.


\subsection{Solution to the Optimization Problem (\ref{eq:def_ss_nystrom})} \label{sec:proof_ss_closed_form_solution}

We denote the objective function of the optimization problem (\ref{eq:def_ss_nystrom}) by
\[
f(\U, \delta) \; = \;
\big\| \K - \bar{\C} \U \bar{\C}^T - \delta \I_n \big\|_F^2.
\]
We take the derivative of $f(\U, \delta)$ w.r.t.\ $\U$ to be zero
\begin{eqnarray*}
    \frac{\partial f(\U, \delta)} {\partial \U}
&=& \frac{\partial}{\partial \U} \tr({\bar{\C}}\U{\bar{\C}}^T{\bar{\C}}\U{\bar{\C}}^T - 2\K{\bar{\C}}\U{\bar{\C}}^T + 2\delta{\bar{\C}}\U{\bar{\C}}^T) \\
&=& 2{\bar{\C}}^T{\bar{\C}}\U{\bar{\C}}^T{\bar{\C}} - 2{\bar{\C}}^T\K{\bar{\C}} + 2\delta{\bar{\C}}^T{\bar{\C}}
\; = \; \0,
\end{eqnarray*}
and obtain the solution
\begin{eqnarray*}
{\U^{\textrm{ss}}} &=& ({\bar{\C}}^T{\bar{\C}})^{\dag} ({\bar{\C}}^T\K{\bar{\C}}-\delta^{\textrm{ss}}{\bar{\C}}^T{\bar{\C}}) ({\bar{\C}}^T{\bar{\C}})^{\dag} \\
&=& ({\bar{\C}}^T{\bar{\C}})^{\dag}{\bar{\C}}^T\K{\bar{\C}}({\bar{\C}}^T{\bar{\C}})^{\dag} - \delta^{\textrm{ss}}({\bar{\C}}^T{\bar{\C}})^{\dag}{\bar{\C}}^T{\bar{\C}}({\bar{\C}}^T{\bar{\C}})^{\dag} \\
&=& {\bar{\C}}^\dag\K({\bar{\C}}^\dag)^T - \delta^{\textrm{ss}}({\bar{\C}}^T{\bar{\C}})^{\dag}.
\end{eqnarray*}
Similarly, we take the derivative of $f(\U, \delta)$ w.r.t.\ $\delta$ to be zero
\begin{eqnarray*}
\frac{\partial f(\U, \delta)} {\partial \delta}
&=& \frac{\partial}{\partial \delta} \tr(\delta^2\I_n - 2\delta\K + 2\delta\bar{\C}\U\bar{\C}^T)
\; = \; 2n\delta - 2\tr(\K) + 2\tr(\bar{\C}\U\bar{\C}^T) = 0,
\end{eqnarray*}
and it follows that
\begin{eqnarray*}
    {\delta^{\textrm{ss}}}
&=& \frac{1}{n} \bigg(\tr(\K) - \tr({\bar{\C}}\U^{\textrm{ss}}{\bar{\C}}^T) \bigg) \\
&=& \frac{1}{n}\bigg(\tr(\K) -
    \tr\Big({\bar{\C}}  {\bar{\C}}^\dag \K ({\bar{\C}}^\dag)^T {\bar{\C}}^T\Big)
    + \delta^{\textrm{ss}} \tr \Big( {\bar{\C}} ({\bar{\C}}^T{\bar{\C}})^{\dag}  {\bar{\C}}^T \Big) \bigg) \\
&=& \frac{1}{n}\bigg(\tr(\K) -
    \tr\Big({\bar{\C}}^T {\bar{\C}}  {\bar{\C}}^\dag \K ({\bar{\C}}^\dag)^T\Big)
    + \delta^{\textrm{ss}} \tr \Big( {\bar{\C}}  {\bar{\C}}^{\dag} \Big) \bigg) \\
&=& \frac{1}{n}\bigg(\tr(\K) -
    \tr\Big({\bar{\C}}^T  \K ({\bar{\C}}^\dag)^T\Big)
    + \delta^{\textrm{ss}} \rk(\bar{\C}) \bigg),
\end{eqnarray*}
and thus
\begin{eqnarray*}
\delta^{\textrm{ss}}
&=& \frac{1}{n- \rk(\bar{\C})}\Big( \tr(\K) - \tr\big({\bar{\C}}^\dag\K{\bar{\C}})\big) \Big).
\end{eqnarray*}


\subsection{Proof of Optimality} \label{sec:proof_ss_closed_form_optimal}

The Hessian matrix of $f(\U, \delta)$ w.r.t.\ $(\U, \delta)$ is
\begin{eqnarray*}
    \Hes
&=& \begin{bmatrix}
    \dfrac{\partial^2 f(\U,\delta)} {\partial\vect(\U) \partial\vect(\U)^T} &&
    \dfrac{\partial^2 f(\U,\delta)} {\partial\vect(\U) \partial\delta} \\[0.4cm]
    \dfrac{\partial^2 f(\U,\delta)} {\partial\delta \partial\vect(\U)^T} &&
    \dfrac{\partial^2 f(\U,\delta)} {\partial\delta^2}
  \end{bmatrix}
\;=\; 2\begin{bmatrix}
    ({\bar{\C}}^T{\bar{\C}}) \otimes ({\bar{\C}}^T{\bar{\C}}) & \vect({\bar{\C}}^T{\bar{\C}}) \\
    \vect({\bar{\C}}^T{\bar{\C}})^T & n
  \end{bmatrix}.
\end{eqnarray*}
Here $\otimes$ denotes the Kronecker product, and
$\vect ( \A )$ denotes the vectorization of the matrix $\A$ formed by stacking the columns of $\A$ into a single column vector.
For any $\X\in\RB^{c\times c}$ and $b\in\RB$, we let
\begin{eqnarray*}
    q(\X,b)
&=& \begin{bmatrix} \vect(\X)^T & b \end{bmatrix} \Hes
    \begin{bmatrix} \vect(\X) \\ b \end{bmatrix} \\
&=& \vect(\X)^T\big(({\bar{\C}}^T{\bar{\C}})\otimes({\bar{\C}}^T{\bar{\C}})\big)\vect(\X) + 2b\, \vect({\bar{\C}}^T{\bar{\C}})^T\vect(\X) + nb^2 \\
&=& \vect(\X)^T \vect\big( ({\bar{\C}}^T{\bar{\C}})\X({\bar{\C}}^T{\bar{\C}}) \big) + 2b\, \vect({\bar{\C}}^T{\bar{\C}})^T\vect(\X) + nb^2 \\
&=& \tr(\X^T{\bar{\C}}^T{\bar{\C}}\X{\bar{\C}}^T{\bar{\C}}) + 2b\,\tr({\bar{\C}}^T{\bar{\C}}\X) + nb^2 .
\end{eqnarray*}
Let ${\bar{\C}}\X{\bar{\C}}^T=\Y \in \RB^{n\times n}$. Then
\begin{eqnarray*}
    q(\X,b)
&=& \tr({\bar{\C}}\X^T{\bar{\C}}^T{\bar{\C}}\X{\bar{\C}}^T) + 2b \, \tr({\bar{\C}}\X{\bar{\C}}^T) + nb^2 \\
&=& \tr(\Y^T\Y) + 2b \, \tr(\Y) + n b^2 \\
&=& \sum_{i=1}^n\sum_{j=1}^n y_{ij}^2 + 2b\sum_{l=1}^n y_{l l} + n b^2 \\
&=& \sum_{i\neq j} y_{ij}^2 + \sum_{l=1}^n (y_{l l} + b)^2 \\
&\geq& 0,
\end{eqnarray*}
which shows that the Hessian matrix $\Hes$ is SPSD.
Hence $f(\U^{\textrm{ss}}, \delta^{\textrm{ss}})$ is the global minimum of $f$.


\subsection{Proof of Positive (Semi)Definite} \label{sec:proof_ss_closed_form_spsd}

We denote the thin SVD of ${\bar{\C}}$ by ${\bar{\C}}=\U_{\bar{\C}} \Si_{\bar{\C}}\V_{\bar{\C}}^T$
and let $\U_{\bar\C}^\perp$ be the orthogonal complement of $\U_{\bar\C}$.
The approximation is
\begin{eqnarray} \label{eq:ss_spsd}
\tilde{\K}
& = &{\bar{\C}}\U^{\textrm{ss}}{\bar{\C}}^T + \delta^{\textrm{ss}}\I_n
\; =\; {\bar{\C}} \big( {\bar{\C}}^\dag\K({\bar{\C}}^\dag)^T - \delta^{\textrm{ss}}({\bar{\C}}^T{\bar{\C}})^{\dag} \big) {\bar{\C}}^T + \delta^{\textrm{ss}}\I_n \nonumber \\
&=& {\bar{\C}} \big( {\bar{\C}}^\dag\K({\bar{\C}}^\dag)^T \big){\bar{\C}}^T + \delta^{\textrm{ss}} \big( \I_n - {\bar{\C}}({\bar{\C}}^T{\bar{\C}})^{\dag}{\bar{\C}}^T \big) \nonumber \\
&=& \U_{\bar\C} \U_{\bar\C}^T \K \U_{\bar\C} \U_{\bar\C}^T + \delta^{\textrm{ss}} \big( \I_n - \U_{\bar\C} \U_{\bar\C}^T \big)  \nonumber \\
&=& \U_{\bar\C} \U_{\bar\C}^T \K \U_{\bar\C} \U_{\bar\C}^T + \delta^{\textrm{ss}} \U_{\bar\C}^\perp (\U_{\bar\C}^\perp)^T .
\end{eqnarray}
The first term is SPSD because $\K$ is SPSD. The second term is SPSD if $\delta^{\textrm{ss}}$ is nonnegative.
We have
\begin{align*}
\delta^{\textrm{ss}}
& = \; \tr(\K) - \tr\big( {\bar{\C}}^\dag\K{\bar{\C}} \big)
\;= \; \tr(\K) - \tr\big( {\bar{\C}}{\bar{\C}}^\dag\K \big)
\;=\;\tr\big(\K - \U_{\bar{\C}}\U_{\bar{\C}}^T \K \big)\\
& = \;\tr \big[\U_{\bar{\C}}^\perp (\U_{\bar{\C}}^\perp)^T \K \big]
 = \;\tr \big[\U_{\bar{\C}}^\perp (\U_{\bar{\C}}^\perp)^T  \U_{\bar{\C}}^\perp (\U_{\bar{\C}}^\perp)^T \K \big]
=  \tr \big[\U_{\bar{\C}}^\perp (\U_{\bar{\C}}^\perp)^T \K \U_{\bar{\C}}^\perp (\U_{\bar{\C}}^\perp)^T\big]
\;\geq \; 0.
\end{align*}
To this end, we have shown SPSD.

Then we assume $\K$ is positive definite and prove that its approximation formed by SS is also positive definite.
Since $\U_{\bar{\C}}^\perp (\U_{\bar{\C}}^\perp)^T \K \U_{\bar{\C}}^\perp (\U_{\bar{\C}}^\perp)^T$ is SPSD,
its trace is zero only when it is all-zeros,
which is equivalent to $\K = \U_{\bar\C} \U_{\bar\C}^T \K \U_{\bar\C} \U_{\bar\C}^T$.
However, this cannot hold if $\K$ has full rank.
Therefore $\delta^{\textrm{ss}} > 0$ holds when $\K$ is positive definite.
When  $\K$ is positive definite, the $c\times c$ matrix $\U_{\bar\C}^T \K \U_{\bar\C}$ is also positive definite.
It follows from \eqref{eq:ss_spsd} that
\begin{eqnarray*}
\tilde\K \; = \;
{\bar{\C}}\U^{\textrm{ss}}{\bar{\C}}^T + \delta^{\textrm{ss}}\I_n
\; = \;
\left[
  \begin{array}{c c}
    \U_{\bar\C} & \U_{\bar\C}^\perp \\
  \end{array}
\right]
\left[
  \begin{array}{c c}
    \U_{\bar\C}^T \K \U_{\bar\C} & \0 \\
    \0 & \delta^{\textrm{ss}} \I_{n-c}\\
  \end{array}
\right]
\left[
  \begin{array}{c}
    \U_{\bar\C}^T \\
    (\U_{\bar\C}^\perp)^T \\
  \end{array}
\right].
\end{eqnarray*}
Here the block diagonal matrix is positive definite, and thus $\tilde\K$ is positive definite.


\section{Proof of Theorem~\ref{thm:superiority} and Theorem~\ref{thm:ss_nystrom_bound}} \label{sec:app:ss}

Directly analyzing the theoretical error bound of the SS model is not easy,
so we formulate a variant of SS called the inexact spectral shifting (ISS) model
and instead analyze the error bound of ISS.
We define ISS in Section~\ref{sec:proof:iss_pbs}
and prove Theorem~\ref{thm:superiority} and Theorem~\ref{thm:ss_nystrom_bound} in Section
\ref{sec:proof:superiority} and \ref{sec:proof:ss_pbs_bound}, respectively.
In the following we let $\Kss$ and $\Kiss$ be respectively the approximation formed by the two models.


\subsection{The ISS Model} \label{sec:proof:iss_pbs}

ISS is defined by
\begin{equation} \label{eq:def_iss_nystrom}
\Kiss \; = \; \bar{\C} \bar{\U} \bar{\C}^T + {\delta} \I_n,
\end{equation}
where ${\delta}>0$ is the spectral shifting term,
and $\bar{\C} \bar{\U} \bar{\C}^T$ is the prototype model of $\bar{\K} = \K - {\delta} \I_n$.
It follows from Theorem~\ref{thm:ss_closed_form} that ISS is less accurate than SS in that
\[
\big\| \K - \Kss \big\|_F
\; \leq \;
\big\| \K - \Kiss \big\|_F ,
\]
thus the error bounds of ISS still hold if $\Kiss$ is replaced by $\Kss$.

We first show how to set the spectral shifting term ${\delta}$.
Since $\bar{\C} \bar{\U} \bar{\C}^T$ is the approximation to $\bar\K$ formed by the prototype model,
it can holds with high probability that
\[
\big\|{\K} - \Kiss \big\|_F
\;= \;
\big\|{\K} - \bar\delta \I_n - \bar{\C} \bar{\U} \bar{\C}^T \big\|_F
\;= \;
\big\|\bar{\K} - \bar{\C} \bar{\U} \bar{\C}^T \big\|_F
\;\leq \;
\eta \big\|\bar{\K} - \bar{\K}_k \big\|_F
\]
for some error parameter $\eta$.
Apparently, for fixed $k$, the smaller the error $\|\bar{\K} - \bar{\K}_k\|_F$ is,
the tighter error bound the ISS has;
if $\|\bar{\K} - \bar{\K}_k\|_F \leq \|{\K} - {\K}_k\|_F$,
then ISS has a better error bound than the prototype model.
Therefore, our goal is to make $\|\bar{\K} - \bar{\K}_k\|_F $ as small as possible,
so we formulate the following optimization problem to compute ${\delta}$:
\[
\min_{\delta\geq 0} \; \big\| \bar{\K} - \bar{\K}_k \big\|_F^2;  \qquad \st \; \bar{\K} = \K - \delta \I_n.
\]
However, since $\bar{\K}$ is in general indefinite, it requires all of the eigenvalues of $\K$ to solve the problem exactly.
Since computing the full eigenvalue decomposition is expensive, we attempt to relax  the problem.
Considering that
\begin{eqnarray} \label{eq:delta_upper_bound}
\big\| \bar{\K} - \bar{\K}_k\big\|_F^2
\; = \; \min_{|\JM|=n-k} \sum_{j \in \JM} \big(\sigma_j (\K) - \delta\big)^2
\; \leq \; \sum_{j=k+1}^n \big(\sigma_j (\K) - \delta\big)^2,
\end{eqnarray}
we seek to minimize the upper bound of $\| \bar{\K} - \bar{\K}_k\|_F^2$,
which is the right-hand side of (\ref{eq:delta_upper_bound}),
to compute ${\delta}$, leading to
the solution
\begin{eqnarray} \label{eq:delta_solution}
\bar{\delta}
\; = \; \frac{1}{n-k}\sum_{j=k+1}^n \sigma_j (\K)
\; = \; \frac{1}{n-k} \bigg( \tr(\K) - \sum_{j=1}^k \sigma_j (\K) \bigg).
\end{eqnarray}
If we choose $\delta = 0$, then ISS degenerates to the prototype model.


\subsection{Proof of Theorem~\ref{thm:superiority}} \label{sec:proof:superiority}

Theorem~\ref{thm:ss_closed_form} indicates that ISS is less accurate than SS, thus
\[
\big\| \K - \Kss \big\|_F
\; \leq \;
\big\| \K - \Kiss \big\|_F
\; = \;
\big\| \K - \bar\delta \I_n - \bar\C {\bar\C}^\dag \bar\K ({\bar\C}^\dag )^T {\bar\C}^T\big\|_F
\; = \;
\big\| \bar\K - \bar\C {\bar\C}^\dag \bar\K ({\bar\C}^\dag )^T {\bar\C}^T\big\|_F .
\]
Theorem~\ref{thm:superiority} follows from the above inequality and the following theorem.

\begin{theorem} \label{thm:superiority_isspbs}
Let $\K$ be any $n \times n$ SPSD matrix,
$\bar{\delta}$ be defined in (\ref{eq:delta_solution}),
and $\bar{\K} = \K - \delta \I_n$.
Then for any $\delta \in (0, \bar{\delta}]$, the following inequality holds:
\[
\big\| \bar{\K} - \bar{\K}_k \big\|_F^2
\; \leq \;
\big\| \K - \K_k \big\|_F^2.
\]
\end{theorem}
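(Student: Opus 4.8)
The plan is to reduce the matrix inequality to an elementary scalar inequality in the eigenvalues of $\K$. Since $\K$ is SPSD, its eigenvalues coincide with its singular values $\sigma_1(\K) \geq \cdots \geq \sigma_n(\K) \geq 0$, and the eigenvalues of $\bar{\K} = \K - \delta \I_n$ are exactly $\sigma_j(\K) - \delta$. The Frobenius-optimal rank-$k$ approximation of the symmetric matrix $\bar{\K}$ retains the $k$ eigenvalues of largest absolute value, so the truncation error is the sum of the $n-k$ smallest values of $\big(\sigma_j(\K)-\delta\big)^2$; this is precisely the identity and upper bound already recorded in \eqref{eq:delta_upper_bound}. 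In particular, choosing the (not necessarily optimal) index set $\{k+1, \dots, n\}$ gives
\[
\big\| \bar{\K} - \bar{\K}_k \big\|_F^2 \;\leq\; \sum_{j=k+1}^n \big(\sigma_j(\K) - \delta\big)^2,
\]
so it suffices to bound this right-hand side by $\sum_{j=k+1}^n \sigma_j(\K)^2 = \|\K - \K_k\|_F^2$.

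Next I would expand and compare the two sums directly. Writing $S \triangleq \sum_{j=k+1}^n \sigma_j(\K)$, one obtains
\[
\sum_{j=k+1}^n \big(\sigma_j(\K) - \delta\big)^2 \;-\; \sum_{j=k+1}^n \sigma_j(\K)^2 \;=\; (n-k)\,\delta^2 - 2\delta\, S.
\]
Since $\delta > 0$, the right-hand side is nonpositive exactly when $\delta \leq \frac{2 S}{n-k}$. By the definition of $\bar{\delta}$ in \eqref{eq:delta_solution} we have $S = (n-k)\bar{\delta}$, so this condition reads $\delta \leq 2\bar{\delta}$, which holds for every $\delta \in (0, \bar{\delta}]$. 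Chaining the two displays yields $\big\|\bar{\K} - \bar{\K}_k\big\|_F^2 \leq \big\|\K - \K_k\big\|_F^2$, as claimed.

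There is essentially no hard analytic step here; the only point requiring care is the first one, namely justifying that dropping the indices $\{k+1,\dots,n\}$ is a legitimate upper bound for $\|\bar{\K}-\bar{\K}_k\|_F^2$ even though $\bar{\K}$ is generally indefinite. This is safe because the optimal rank-$k$ truncation error is a \emph{minimum} over all size-$(n-k)$ index sets, so evaluating at any fixed such set can only increase it, and this is exactly what \eqref{eq:delta_upper_bound} provides. The remainder is the one-line algebraic identity above together with the definition of $\bar{\delta}$, so the entire argument is short and self-contained.
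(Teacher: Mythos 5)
Your proof is correct and takes essentially the same route as the paper's: both reduce the claim, via the bound in (\ref{eq:delta_upper_bound}) with the fixed index set $\{k+1,\dots,n\}$, to the scalar inequality $\sum_{j=k+1}^n \big(\sigma_j(\K)-\delta\big)^2 \le \sum_{j=k+1}^n \sigma_j(\K)^2$, which the paper settles by noting this quadratic in $\delta$ is convex with minimizer $\bar{\delta}$, while you settle it by direct expansion. Your expansion incidentally yields the marginally stronger fact that the inequality holds for all $\delta \in (0, 2\bar{\delta}]$, but the structure of the argument is otherwise identical.
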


\begin{proof}
Since the righthand side of (\ref{eq:delta_upper_bound}) is convex and $\bar{\delta}$ is the minimizer of the righthand of (\ref{eq:delta_upper_bound}),
for any $\delta \in (0, \bar{\delta}]$, it holds that
\[
\sum_{j=k+1}^n \big(\sigma_j (\K) - \delta\big)^2
\, \leq \,
\sum_{j=k+1}^n \big(\sigma_j (\K) - 0\big)^2
\, = \,
\big\| \K - \K_k\big\|_F^2.
\]
Then the theorem follows by the inequality in (\ref{eq:delta_upper_bound}).
\end{proof}


\subsection{Proof of Theorem~\ref{thm:ss_nystrom_bound}} \label{sec:proof:ss_pbs_bound}

Since $\| \K - \Kss \|_F \leq
\| \K - \Kiss \|_F$,
Theorem~\ref{thm:ss_nystrom_bound} follows from Theorem~\ref{thm:bound_iss_pbs}.

\begin{theorem} \label{thm:bound_iss_pbs}
Suppose there is a sketching matrix $\PP \in \RB^{n\times c}$ such that
for any $n \times n$ {\it symmetric} matrix $\S$ and target rank $k$ ($\ll n$),
by forming the sketch $\C = \S \PP$,
the prototype model satisfies the error bound
\[
\big\| \S - \C \C^\dag \S (\C^\dag)^T \C^T \big\|_F^2
\; \leq \; \eta \, \big\|\S - \S_k \big\|_F^2.
\]
Let $\K$ be any $n \times n$ SPSD matrix,
$\bar{\delta}$ defined in \eqref{eq:delta_solution} be the initial spectral shifting term,
and $\Kiss$ be the ISS approximation defined in \eqref{eq:def_iss_nystrom}.
Then
\[
\big\|\K - \Kiss \big\|_F^2
\: \leq \:
\eta \bigg( \big\|\K - \K_k \big\|_F^2 - \frac{\big[ \sum_{i=k+1}^n \lambda_{i} (\K) \big]^2}{n-k}  \bigg).
\]
\end{theorem}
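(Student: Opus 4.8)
The plan is to reduce the ISS approximation error to a prototype-model error on the \emph{shifted} matrix $\bar{\K} = \K - \bar{\delta}\I_n$, and then to bound the latter by an elementary sum-of-squares identity. First I would exploit the structure of $\Kiss$ in \eqref{eq:def_iss_nystrom}: the additive term $\bar{\delta}\I_n$ cancels exactly, because $\bar{\C}\bar{\U}\bar{\C}^T = \bar{\C}\bar{\C}^\dag\bar{\K}(\bar{\C}^\dag)^T\bar{\C}^T$ is precisely the prototype approximation of $\bar{\K}$ built from the sketch $\bar{\C} = \bar{\K}\PP$, so that $\big\|\K - \Kiss\big\|_F = \big\|\bar{\K} - \bar{\C}\bar{\C}^\dag\bar{\K}(\bar{\C}^\dag)^T\bar{\C}^T\big\|_F$. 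Since $\bar{\K}$ is symmetric, the hypothesized prototype error bound applies with $\S = \bar{\K}$, yielding $\big\|\K - \Kiss\big\|_F^2 \le \eta\,\big\|\bar{\K} - \bar{\K}_k\big\|_F^2$. This cancellation is the crux of the argument: it converts the claim into a statement purely about the spectrum of $\bar{\K}$.

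Second I would bound $\big\|\bar{\K} - \bar{\K}_k\big\|_F^2$. The eigenvalues of $\bar{\K}$ are $\lambda_j(\K) - \bar{\delta}$ (in the same order), so by the upper bound already recorded in \eqref{eq:delta_upper_bound},
\[
\big\|\bar{\K} - \bar{\K}_k\big\|_F^2 \;\le\; \sum_{j=k+1}^n \big(\lambda_j(\K) - \bar{\delta}\big)^2.
\]
Recalling from \eqref{eq:delta_solution} that $\bar{\delta} = \frac{1}{n-k}\sum_{j=k+1}^n\lambda_j(\K)$ is exactly the mean of the bottom $n-k$ eigenvalues, the right-hand side is a sum of squared deviations about the mean and collapses to $\sum_{j=k+1}^n\lambda_j^2(\K) - (n-k)\bar{\delta}^2$. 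Because $\K$ is SPSD the first sum equals $\big\|\K - \K_k\big\|_F^2$, and substituting the value of $\bar{\delta}$ gives $(n-k)\bar{\delta}^2 = \big[\sum_{i=k+1}^n\lambda_i(\K)\big]^2/(n-k)$.

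Finally I would chain the two steps to obtain the claimed bound $\big\|\K - \Kiss\big\|_F^2 \le \eta\big(\|\K - \K_k\|_F^2 - \big[\sum_{i=k+1}^n\lambda_i(\K)\big]^2/(n-k)\big)$; Theorem~\ref{thm:ss_nystrom_bound} then follows at once since $\Kss$ is at least as accurate as $\Kiss$. I do not expect a serious obstacle, as the argument is essentially bookkeeping once the cancellation identity is established. The only point needing a little care is the use of \eqref{eq:delta_upper_bound}: discarding the index set $\{k+1,\dots,n\}$ only \emph{overestimates} the true rank-$k$ residual of $\bar{\K}$, whose optimal approximation keeps the $k$ eigenvalues largest in \emph{magnitude}, and these need not be the original top $k$ once the shift drives some eigenvalues negative. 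This one-sided inequality is exactly what makes the subtracted variance term an honest improvement over the prototype bound rather than an equality.
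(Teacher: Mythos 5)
Your proposal is correct and follows essentially the same route as the paper's proof: cancel the $\bar{\delta}\I_n$ term to reduce to the prototype bound on the symmetric matrix $\bar{\K}$, bound $\|\bar{\K}-\bar{\K}_k\|_F^2$ by $\sum_{j=k+1}^n(\lambda_j(\K)-\bar{\delta})^2$ (the paper re-derives this via the smallest $n-k$ eigenvalues of $\bar{\K}^2$, you cite \eqref{eq:delta_upper_bound}, which is the same inequality), and finish with the variance-about-the-mean expansion. Your closing remark about why the inequality is one-sided (the optimal rank-$k$ truncation of $\bar{\K}$ keeps eigenvalues largest in magnitude, which may differ from the top-$k$ indices of $\K$ after shifting) is exactly the point the paper's proof handles with its ``sum of the smallest $n-k$ eigenvalues'' argument.
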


\begin{proof}
The error incurred by SS is
\begin{eqnarray}
 \big\|\K - \tilde{\K}_{c}^{\textrm{ss}} \big\|_F^2
&= & \big\| \big(\bar{\K} + \bar{\delta} \I_n\big) - \big(\bar{\C} \bar{\U} \bar{\C}^T  + \bar{\delta} \I_n\big) \big\|_F^2
\; = \; \big\| \bar{\K} - \bar{\C} \bar{\U} \bar{\C}^T  \big\|_F^2 \nonumber \\
& \leq & \eta \big\| \bar{\K} - \bar{\K}_k \big\|_F^2
\;  = \; \eta \sum_{i=k+1}^n \sigma^2_i \big(\bar{\K} \big)
\; = \; \eta \sum_{i=k+1}^n \lambda_i \big(\bar{\K}^2 \big) . \nonumber
\end{eqnarray}
Here the inequality follows from the assumption.
The $i$-th largest eigenvalue of $\bar{\K}$ is $\lambda_i (\K) - \bar{\delta}$,
so the $n$ eigenvalues of $\bar{\K}^2$ are all in the set $\{ (\lambda_i (\K) - \bar{\delta})^2 \}_{i=1}^n$.
The sum of the smallest $n-k$ of the $n$ eigenvalues of $\bar{\K}^2$ must be less than or equal to the sum of any $n-k$ of the eigenvalues,
thus we have
\begin{eqnarray}
\sum_{i=k+1}^n \lambda_i \big(\bar{\K}^2 \big)
& \leq & \sum_{i=k+1}^n \Big( \lambda_i (\K) - \bar{\delta} \Big)^2 \nonumber \\
& \quad = & \sum_{i=k+1}^n \lambda_i^2 (\K) - 2 \sum_{i=k+1}^n \bar{\delta} \lambda_i (\K) + (n-k) (\bar{\delta})^2 \nonumber \\
& \quad = & \| \K - \K_k\|_F^2 - \frac{1}{n-k} \bigg[ \sum_{i=k+1}^n \lambda_i (\K) \bigg]^2 , \nonumber
\end{eqnarray}
by which the theorem follows.
\end{proof}


\section{Proof of Theorem~\ref{thm:delta}}

\begin{proof}
Let $\tilde{\K} = \Q (\Q^T \K)_k$,
where $\Q$ is defined in Line~\ref{alg:ss_nystrom:Y} in Algorithm~\ref{alg:ss_nystrom}.
\cite{boutsidis2011NOCshort} showed that
\begin{equation} \label{thm:delta:proof1}
\EB \| \K - \tilde{\K} \|_F^2 \; \leq \; (1 + k / l) \, \| \K - \K_k \|_F^2,
\end{equation}
where the expectation is taken w.r.t.\ the random Gaussian matrix $\Ome$.

It follows from Lemma~\ref{lem:delta} that
\[
\| \si_{\K} - \si_{\tilde{\K}} \|_2^2 \; \leq \; \| \K - \tilde{\K} \|_F^2,
\]
where $\si_{\K}$ and $\si_{\tilde{\K}}$ contain the singular values in a descending order.
Since $\tilde{\K}$ has a rank at most $k$, the $k+1$ to $n$ entries of $\si_{\tilde{\K}}$ are zero.
We split $\si_{\K}$ and $\si_{\tilde{\K}}$ into vectors of length $k$ and $n-k$:
\[
\si_{\K} \; = \; \left[
                   \begin{array}{c}
                     \si_{\K,k} \\
                     \si_{\K,-k} \\
                   \end{array}
                 \right]
\quad \textrm{ and } \quad
\si_{\tilde{\K}} \; = \; \left[
                   \begin{array}{c}
                     \si_{\tilde{\K},k} \\
                     \0 \\
                   \end{array}
                 \right]
\]
and thus
\begin{equation} \label{thm:delta:proof2}
\| \si_{\K,k} - \si_{\tilde{\K},k} \|_2^2 + \| \si_{\K,-k} \|_2^2 \; \leq \; \| \K - \tilde{\K} \|_F^2.
\end{equation}
Since $\| \si_{\K,-k} \|_2^2 = \| \K - \K_k\|_F^2$,
it follows from (\ref{thm:delta:proof1}) and (\ref{thm:delta:proof2}) that
\begin{equation} 
\EB \| \si_{\K,k} - \si_{\tilde{\K},k} \|_2^2 \; \leq \; \frac{k}{l} \, \| \si_{\K,-k} \|_2^2. \nonumber
\end{equation}
Since $\|\x\|_2 \leq \|\x\|_1 \leq \sqrt{k} \|\x\|_2$ for any $\x\in \RB^k$,
we have that
\begin{equation}
\EB \big\| \si_{\K,k} - \si_{\tilde{\K},k} \big\|_1 \; \leq \; \frac{k}{\sqrt{l}} \, \big\| \si_{\K,-k} \big\|_1. \nonumber
\end{equation}
Then it follows from (\ref{eq:delta_solution}) and Line~\ref{alg:ss_nystrom:delta_end} in Algorithm~\ref{alg:ss_nystrom} that
\begin{align}
& \EB \big|\bar{\delta} - \tilde{\delta}\big|
\; = \; \EB \Bigg[ \frac{1}{n-k} \bigg|  \sum_{i=1}^k \sigma_i (\K)  - \sum_{i=1}^k \sigma_i (\tilde{\K})  \bigg| \Bigg] \nonumber \\
& \qquad\qquad \leq \; \frac{1}{n-k} \, \EB \big\| \si_{\K,k} - \si_{\tilde{\K},k} \big\|_1
\; \leq \; \frac{k}{\sqrt{l}} \, \frac{1}{n-k}  \, \big\| \si_{\K,-k} \big\|_1
\; = \; \frac{k}{\sqrt{l}} \, \bar{\delta} . \nonumber
\end{align}
\end{proof}

\begin{lemma} \label{lem:delta}
Let $\A$ and $\B$ be $n\times n$ matrices
and $\si_{\A}$ and $\si_{\B}$ contain the singular values in a descending order.
Then we have that
\[
\| \si_{\A} - \si_{\B} \|_2^2 \; \leq \; \| \A - \B \|_F^2.
\]
\end{lemma}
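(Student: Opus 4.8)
The plan is to recognize this as the Hoffman--Wielandt inequality for singular values, i.e.\ the statement that the map sending a matrix to its descending vector of singular values is non-expansive from the Frobenius norm to the $\ell_2$ norm. The natural route is to reduce the singular-value claim to the corresponding eigenvalue claim for symmetric matrices by means of the Jordan--Wielandt dilation, and then to dispatch the symmetric case.

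First I would form the $2n\times 2n$ symmetric dilations
\[
\hat{\A} = \begin{pmatrix} \0 & \A \\ \A^T & \0 \end{pmatrix},
\qquad
\hat{\B} = \begin{pmatrix} \0 & \B \\ \B^T & \0 \end{pmatrix}.
\]
A short computation shows that the eigenvalues of $\hat{\A}$ are exactly $\{\pm\sigma_i(\A)\}_{i=1}^n$, so that in descending order the $j$-th eigenvalue of $\hat{\A}$ equals $\sigma_j(\A)$ for $j\le n$ and equals $-\sigma_{2n+1-j}(\A)$ for $j>n$; the same holds for $\hat{\B}$. Matching these two sorted lists termwise therefore gives
\[
\sum_{j=1}^{2n}\big(\lambda_j(\hat{\A}) - \lambda_j(\hat{\B})\big)^2
\; = \; 2\,\big\| \si_{\A} - \si_{\B} \big\|_2^2 ,
\]
where the factor $2$ arises because each difference $\sigma_i(\A)-\sigma_i(\B)$ appears once among the positive eigenvalues and once (after reindexing) among the negative ones. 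On the other hand $\|\hat{\A} - \hat{\B}\|_F^2 = 2\|\A - \B\|_F^2$, since the off-diagonal blocks are $\A-\B$ and $(\A-\B)^T$. Hence the target inequality is equivalent to the symmetric Hoffman--Wielandt bound $\sum_j (\lambda_j(\hat{\A}) - \lambda_j(\hat{\B}))^2 \le \|\hat{\A} - \hat{\B}\|_F^2$ for the pair $\hat{\A},\hat{\B}$.

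To finish I would prove the symmetric case. Writing the eigendecompositions $\hat{\A} = \U \Lam \U^T$ and $\hat{\B} = \V \M \V^T$ with $\Lam = \diag(\lambda_i(\hat{\A}))$ and $\M = \diag(\lambda_j(\hat{\B}))$ sorted descending, and setting $\W = \U^T \V$ (which is orthogonal), orthogonal invariance of the Frobenius norm gives
\[
\big\|\hat{\A} - \hat{\B}\big\|_F^2
\; = \; \big\| \Lam \W - \W \M \big\|_F^2
\; = \; \sum_{i,j} W_{ij}^2\big(\lambda_i(\hat{\A}) - \lambda_j(\hat{\B})\big)^2 .
\]
The matrix with entries $W_{ij}^2$ is doubly stochastic, so the right-hand side is a linear functional evaluated at a doubly stochastic matrix; by Birkhoff's theorem its minimum over all doubly stochastic matrices is attained at a permutation matrix, and the rearrangement inequality shows that among permutations the identity is optimal precisely because both eigenvalue lists are sorted in the same order. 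This yields $\sum_i (\lambda_i(\hat{\A}) - \lambda_i(\hat{\B}))^2 \le \|\hat{\A} - \hat{\B}\|_F^2$, and combining this with the two identities above completes the proof.

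The main obstacle is the symmetric Hoffman--Wielandt step, specifically the optimization over the Birkhoff polytope: one must invoke Birkhoff's theorem to reduce to permutation matrices and then the rearrangement inequality to single out the identity permutation. Everything else---the dilation bookkeeping and the two Frobenius-norm identities---is routine. If one prefers to avoid reproving Hoffman--Wielandt, the symmetric eigenvalue case may simply be cited as a classical result, after which the dilation argument is the only part that needs to be written out.
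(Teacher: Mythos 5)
Your proof is correct, but it follows a genuinely different route from the paper's. The paper argues directly on the singular values: it expands $\| \A - \B \|_F^2 = \sum_i \sigma_i^2(\A) + \sum_i \sigma_i^2(\B) - 2\tr(\A^T\B)$ and then bounds the cross term by the von Neumann--type trace inequalities $\tr(\A^T\B) \leq \sum_i \sigma_i(\A^T\B) \leq \sum_i \sigma_i(\A)\,\sigma_i(\B)$, citing Theorems 3.3.13 and 3.3.14 of Horn and Johnson; the inequality then falls out by completing the square termwise. You instead pass through the Jordan--Wielandt dilation to reduce the singular-value statement to the Hoffman--Wielandt inequality for symmetric matrices, which you prove via orthogonal invariance, Birkhoff's theorem, and the rearrangement inequality. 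All the steps in your argument check out: the eigenvalues of the dilation are indeed $\{\pm\sigma_i\}$, the two factor-of-two identities are right, the matrix $(W_{ij}^2)$ is doubly stochastic, and the rearrangement step correctly identifies the identity permutation as optimal since both eigenvalue lists are sorted the same way (note this works even though the eigenvalues of the dilations can be negative). The trade-off: the paper's proof is only a few lines but outsources the real content to two nontrivial cited trace inequalities, whereas yours is longer but essentially self-contained (Birkhoff's theorem being the only imported tool, and even that could be replaced by citing Hoffman--Wielandt itself), and the dilation device is the standard, reusable mechanism for transferring symmetric-eigenvalue results to singular values.
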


\begin{proof}
It is easy to show that
\begin{eqnarray}
\| \A - \B \|_F^2
& = & \tr (\A^T \A) + \tr (\B^T \B) - 2 \tr(\A^T \B) \nonumber \\
& = & \sum_{i=1}^n \sigma^2_i (\A) + \sum_{i=1}^n \sigma^2_i (\B) - 2 \tr(\A^T \B) \label{eq:lem:delta1}.
\end{eqnarray}
We also have
\begin{eqnarray}
\tr(\A^T \B)
& \leq & \sum_{i=1}^n \sigma_i (\A^T \B)
\; \leq \; \sum_{i=1}^n \sigma_i (\A) \sigma_i (\B), \label{eq:lem:delta2}
\end{eqnarray}
where the first inequality follows from Theorem 3.3.13 of \citet{horn1991topics}
and the second inequality follows from Theorem 3.3.14 of \cite{horn1991topics}.
Combining (\ref{eq:lem:delta1}) and (\ref{eq:lem:delta2}) we have that
\begin{eqnarray}
\| \A - \B \|_F^2
& \geq & \sum_{i=1}^n \bigg( \sigma^2_i (\A) + \sigma^2_i (\B) - 2 \sigma_i (\A) \sigma_i (\B)  \bigg)
\; = \; \| \si_{\A} - \si_{\B} \|_2^2 , \nonumber
\end{eqnarray}
by which the theorem follows.
\end{proof}


\bibliography{spsd}

\begin{thebibliography}{33}
\providecommand{\natexlab}[1]{#1}
\providecommand{\url}[1]{\texttt{#1}}
\expandafter\ifx\csname urlstyle\endcsname\relax
  \providecommand{\doi}[1]{doi: #1}\else
  \providecommand{\doi}{doi: \begingroup \urlstyle{rm}\Url}\fi

\bibitem[Ben-Israel and Greville(2003)]{adi2003inverse}
Adi Ben-Israel and Thomas~N.E. Greville.
\newblock \emph{Generalized Inverses: Theory and Applications. Second Edition}.
\newblock Springer, 2003.

\bibitem[Boutsidis and Woodruff(2014)]{boutsidis2014optimal}
Christos Boutsidis and David~P. Woodruff.
\newblock Optimal {CUR} matrix decompositions.
\newblock \emph{arXiv preprint arXiv:1405.7910}, 2014.

\bibitem[Boutsidis et~al.(2014)Boutsidis, Drineas, and
  Magdon-Ismail]{boutsidis2011NOCshort}
Christos Boutsidis, Petros Drineas, and Malik Magdon-Ismail.
\newblock Near-optimal column-based matrix reconstruction.
\newblock \emph{SIAM Journal on Computing}, 43\penalty0 (2):\penalty0 687--717,
  2014.

\bibitem[Deshpande et~al.(2006)Deshpande, Rademacher, Vempala, and
  Wang]{deshpande2006matrix}
Amit Deshpande, Luis Rademacher, Santosh Vempala, and Grant Wang.
\newblock Matrix approximation and projective clustering via volume sampling.
\newblock \emph{Theory of Computing}, 2\penalty0 (2006):\penalty0 225--247,
  2006.

\bibitem[Drineas and Mahoney(2005)]{drineas2005nystrom}
Petros Drineas and Michael~W. Mahoney.
\newblock On the {N}ystr\"{o}m method for approximating a gram matrix for
  improved kernel-based learning.
\newblock \emph{Journal of Machine Learning Research}, 6:\penalty0 2153--2175,
  2005.

\bibitem[Drineas et~al.(2008)Drineas, Mahoney, and
  Muthukrishnan]{drineas2008cur}
Petros Drineas, Michael~W. Mahoney, and S.~Muthukrishnan.
\newblock Relative-error {CUR} matrix decompositions.
\newblock \emph{SIAM Journal on Matrix Analysis and Applications}, 30\penalty0
  (2):\penalty0 844--881, September 2008.

\bibitem[Drineas et~al.(2012)Drineas, Magdon-Ismail, Mahoney, and
  Woodruff]{drineas2012fast}
Petros Drineas, Malik Magdon-Ismail, Michael~W. Mahoney, and David~P. Woodruff.
\newblock Fast approximation of matrix coherence and statistical leverage.
\newblock \emph{Journal of Machine Learning Research}, 13:\penalty0 3441--3472,
  2012.

\bibitem[Gittens and Mahoney(2013)]{gittens2013revisiting}
Alex Gittens and Michael~W. Mahoney.
\newblock Revisiting the nystr{\"o}m method for improved large-scale machine
  learning.
\newblock In \emph{International Conference on Machine Learning (ICML)}, 2013.

\bibitem[Gu and Eisenstat(1996)]{gu1996efficient}
Ming Gu and Stanley~C. Eisenstat.
\newblock Efficient algorithms for computing a strong rank-revealing {QR}
  factorization.
\newblock \emph{SIAM Journal on Scientific Computing}, 17\penalty0
  (4):\penalty0 848--869, 1996.

\bibitem[Guruswami and Sinop(2012)]{Guruswami2012optimal}
Venkatesan Guruswami and Ali~Kemal Sinop.
\newblock Optimal column-based low-rank matrix reconstruction.
\newblock In \emph{Proceedings of the 23rd Annual ACM-SIAM Symposium on
  Discrete Algorithms (SODA)}, 2012.

\bibitem[Halko et~al.(2011)Halko, Martinsson, and Tropp]{halko2011ramdom}
Nathan Halko, Per-Gunnar Martinsson, and Joel~A. Tropp.
\newblock Finding structure with randomness: Probabilistic algorithms for
  constructing approximate matrix decompositions.
\newblock \emph{{SIAM} Review}, 53\penalty0 (2):\penalty0 217--288, 2011.

\bibitem[Horn and Johnson()]{horn1991topics}
Roger~A. Horn and Charles~R. Johnson.
\newblock Topics in matrix analysis. 1991.
\newblock \emph{Cambridge University Presss, Cambridge}.

\bibitem[Jin et~al.(2013)Jin, Yang, Mahdavi, Li, and Zhou]{jin2012improved}
Rong Jin, Tianbao Yang, Mehrdad Mahdavi, Y~Li, and Z~Zhou.
\newblock Improved bounds for the {N}ystr\"om method with application to kernel
  classification.
\newblock \emph{IEEE Transactions on Information Theory}, 59\penalty0
  (10):\penalty0 6939--6949, 2013.

\bibitem[Kumar et~al.(2009)Kumar, Mohri, and Talwalkar]{kumar2009sampling}
Sanjiv Kumar, Mehryar Mohri, and Ameet Talwalkar.
\newblock On sampling-based approximate spectral decomposition.
\newblock In \emph{International Conference on Machine Learning (ICML)}, 2009.

\bibitem[Kumar et~al.(2012)Kumar, Mohri, and Talwalkar]{kumar2012sampling}
Sanjiv Kumar, Mehryar Mohri, and Ameet Talwalkar.
\newblock Sampling methods for the {N}ystr{\"o}m method.
\newblock \emph{Journal of Machine Learning Research}, 13:\penalty0 981--1006,
  2012.

\bibitem[Lopez-Paz et~al.(2014)Lopez-Paz, Sra, Smola, Ghahramani, and
  Sch\"olkopf]{lopezpaz2014}
David Lopez-Paz, Suvrit Sra, Alex Smola, Zoubin Ghahramani, and Bernhard
  Sch\"olkopf.
\newblock Randomized nonlinear component analysis.
\newblock In \emph{International Conference on Machine Learning (ICML)}, 2014.

\bibitem[Mahoney(2011)]{mahoney2011ramdomized}
Michael~W. Mahoney.
\newblock Randomized algorithms for matrices and data.
\newblock \emph{Foundations and Trends in Machine Learning}, 3\penalty0
  (2):\penalty0 123--224, 2011.

\bibitem[Nystr{\"o}m(1930)]{nystrom1930praktische}
Evert~J. Nystr{\"o}m.
\newblock {\"U}ber die praktische aufl{\"o}sung von integralgleichungen mit
  anwendungen auf randwertaufgaben.
\newblock \emph{Acta Mathematica}, 54\penalty0 (1):\penalty0 185--204, 1930.

\bibitem[Rahimi and Recht(2008)]{rahimi2008weighted}
Ali Rahimi and Benjamin Recht.
\newblock Weighted sums of random kitchen sinks: replacing minimization with
  randomization in learning.
\newblock In \emph{Advances in Neural Information Processing Systems (NIPS)}.
  2008.

\bibitem[Sch{\"o}lkopf and Smola(2002)]{scholkopf2002learning}
Bernhard Sch{\"o}lkopf and Alexander~J. Smola.
\newblock \emph{Learning with Kernels: Support Vector Machines, Regularization,
  Optimization, and Beyond.}
\newblock MIT Press, 2002.

\bibitem[Shawe-{T}aylor and Cristianini(2004)]{ShaweTaylorBook:2004}
John Shawe-{T}aylor and Nello Cristianini.
\newblock \emph{Kernel Methods for Pattern Analysis}.
\newblock Cambridge University Press, 2004.

\bibitem[Shawe-{T}aylor et~al.(2005)Shawe-{T}aylor, Williams, Cristianini, and
  Kandola]{Shawe-taylor05onthe}
John Shawe-{T}aylor, Christopher K.~I. Williams, Nello Cristianini, and Jaz
  Kandola.
\newblock On the eigenspectrum of the gram matrix and the generalisation error
  of kernel pca.
\newblock \emph{IEEE Transactions on Information Theory}, 51:\penalty0
  2510--2522, 2005.

\bibitem[Si et~al.(2014)Si, Hsieh, and Dhillon]{si2014memory}
Si~Si, Cho-Jui Hsieh, and Inderjit Dhillon.
\newblock Memory efficient kernel approximation.
\newblock In \emph{International Conference on Machine Learning (ICML)}, pages
  701--709, 2014.

\bibitem[Stewart(1999)]{stewart1999four}
G.~W. Stewart.
\newblock Four algorithms for the efficient computation of truncated pivoted
  {QR} approximations to a sparse matrix.
\newblock \emph{Numerische Mathematik}, 83\penalty0 (2):\penalty0 313--323,
  1999.

\bibitem[Talwalkar and Rostamizadeh(2010)]{talwalkar2010matrix}
Ameet Talwalkar and Afshin Rostamizadeh.
\newblock Matrix coherence and the {N}ystr\"om method.
\newblock \emph{Conference on Uncertainty in Artificial Intelligence (UAI)},
  2010.

\bibitem[Talwalkar et~al.(2013)Talwalkar, Kumar, Mohri, and
  Rowley]{talwalkar2013large}
Ameet Talwalkar, Sanjiv Kumar, Mehryar Mohri, and Henry Rowley.
\newblock Large-scale {SVD} and manifold learning.
\newblock \emph{Journal of Machine Learning Research}, 14:\penalty0 3129--3152,
  2013.

\bibitem[Tropp(2015)]{tropp2015introduction}
Joel~A Tropp.
\newblock An introduction to matrix concentration inequalities.
\newblock \emph{arXiv preprint arXiv:1501.01571}, 2015.

\bibitem[Wang and Zhang(2013)]{wang2013improving}
Shusen Wang and Zhihua Zhang.
\newblock Improving {CUR} matrix decomposition and the {N}ystr\"{o}m
  approximation via adaptive sampling.
\newblock \emph{Journal of Machine Learning Research}, 14:\penalty0 2729--2769,
  2013.

\bibitem[Wang et~al.(2015)Wang, Zhang, and Zhang]{wang2015towards}
Shusen Wang, Zhihua Zhang, and Tong Zhang.
\newblock Towards more efficient {SPSD} matrix approximation and {CUR} matrix
  decomposition.
\newblock \emph{arXiv preprint arXiv:1503.08395}, 2015.

\bibitem[Williams and Seeger(2001)]{williams2001using}
Christopher Williams and Matthias Seeger.
\newblock Using the {N}ystr{\"o}m method to speed up kernel machines.
\newblock In \emph{Advances in Neural Information Processing Systems (NIPS)},
  2001.

\bibitem[Woodruff(2014)]{woodruff2014sketching}
David~P Woodruff.
\newblock Sketching as a tool for numerical linear algebra.
\newblock \emph{Foundations and Trends{\textregistered} in Theoretical Computer
  Science}, 10\penalty0 (1--2):\penalty0 1--157, 2014.

\bibitem[Yang et~al.(2012)Yang, Li, Mahdavi, Jin, and Zhou]{yang2012nystrom}
Tianbao Yang, Yu-Feng Li, Mehrdad Mahdavi, Rong Jin, and Zhi-Hua Zhou.
\newblock {Nystr\"om} method vs random fourier features: A theoretical and
  empirical comparison.
\newblock In \emph{Advances in Neural Information Processing Systems (NIPS)},
  2012.

\bibitem[Zhang(2014)]{zhang2014mra}
Zhihua Zhang.
\newblock The matrix ridge approximation: algorithms and applications.
\newblock \emph{Machine Learning}, 97:\penalty0 227--258, 2014.

\end{thebibliography}

\end{document}